\newtheorem{assumption}{Assumption}
\def \x {\mathbf{x}}
\def \x {\mathbf{x}}
\def \F {\mathcal{F}}
\begin{document}

\title{Stochastic Continuous Submodular Maximization: Boosting via Non-oblivious Function}

\author{\name Qixin Zhang \email qxzhang4-c@my.cityu.edu.hk\\
       \addr School of Data Science\\
       City University of Hong Kong\\
       Kowloon, Hong Kong, China
      \AND
       \name Zengde Deng \email zengde.dzd@cainiao.com \\
      \addr Cainiao Network\\
      Hang Zhou, China
       \AND
       \name Zaiyi Chen \email zaiyi.czy@cainiao.com \\
      \addr Cainiao Network\\
      Hang Zhou, China
      \AND
       \name Haoyuan Hu \email haoyuan.huhy@cainiao.com \\
      \addr Cainiao Network\\
      Hang Zhou, China
      \AND
      \name Yu Yang \email yuyang@cityu.edu.hk\\
      \addr School of Data Science\\
       City University of Hong Kong\\
       Kowloon, Hong Kong, China
      }

\editor{Anonymous}
\maketitle

\begin{abstract}
In this paper, we revisit Stochastic Continuous Submodular Maximization in both offline and online settings, which can benefit wide applications in machine learning and operations research areas. We present a boosting framework covering gradient ascent and online gradient ascent. The fundamental ingredient of our methods is a novel non-oblivious function $F$ derived from a factor-revealing optimization problem, whose any stationary point provides a  $(1-e^{-\gamma})$-approximation to the global maximum of the $\gamma$-weakly DR-submodular objective function $f\in C^{1,1}_L(\mathcal{X})$. Under the offline scenario, we propose a boosting gradient ascent method achieving $(1-e^{-\gamma}-\epsilon^{2})$-approximation after $O(1/\epsilon^2)$ iterations, which improves the $(\frac{\gamma^2}{1+\gamma^2})$ approximation ratio of the classical gradient ascent algorithm.
In the online setting, for the first time we consider the adversarial delays for stochastic gradient feedback, under which we propose a boosting online gradient algorithm with the same non-oblivious function $F$. Meanwhile, we verify that this boosting online algorithm achieves a regret of $O(\sqrt{D})$ against a $(1-e^{-\gamma})$-approximation to the best feasible solution in hindsight, where $D$ is the sum of delays of gradient feedback. 
To the best of our knowledge, this is the first result to obtain $O(\sqrt{T})$ regret against a $(1-e^{-\gamma})$-approximation with $O(1)$ gradient inquiry at each time step, when no delay exists, i.e., $D=T$. Finally, numerical experiments demonstrate the effectiveness of our boosting methods.
\end{abstract}
\section{Introduction}
Due to the relatively low computational complexity, first-order optimization methods are widely used in machine learning, operations research, and statistics communities. Especially for convex objectives, there is an enormous literature~\citep{nesterov2013introductory,bertsekas2015convex} deriving the convergence rate of first-order methods. Recent studies have shown that first-order optimization methods could also achieve the global minimum for some special non-convex problems~\citep{NIPS2014_443cb001,arora2016computing,ge2016matrix, du2018gradient,liu2020nonconvex}, although it is in general NP-hard to find the global minima of a non-convex objective function~\citep{murty1987some}. Motivated by this, some recent work focused on the structures and conditions under which non-convex optimization is tractable~\citep{bian2017guaranteed,hazan2016graduated}. In this paper, we investigate the stochastic $\gamma$-weakly continuous submodular maximization problem where an unbiased gradient oracle is available under both offline and online scenarios.

Continuous DR-Submodular Maximization has drawn much attention recently due to that it admits efficient approximate maximization routines.
For instance, under the offline deterministic setting, \citet{bian2017guaranteed,bian2020continuous} proposed the vanilla Frank-Wolfe method and its variant achieving $1/2$ and $(1-1/e)$ approximations, respectively. When the stochastic estimates of the gradient is available, \citet{mokhtari2018conditional} and \citet{hassani2020stochastic} proposed some improved variants of the Frank-Wolfe algorithm, equipped with variance reduction techniques. In \citep{hassani2020stochastic}, assuming the Lipschitz continuity of stochastic Hessian, a $[(1-1/e)OPT-\varepsilon]$ solution is achieved using $O(1/\varepsilon^2)$ stochastic gradient. Such a result provides the tightest approximation as well as the optimal stochastic first-order oracle complexity.

However, when generalizing Frank-Wolfe methods to the online setting, some other tricks should be involved, which makes the algorithm design more complicated. For example, \citet{chen2018online} and \citet{zhang2019online} took the idea of meta actions~\citep{streeter2008online} and blocking procedure to propose online Frank-Wolfe algorithms. Moreover, in these aforementioned studies, the environment/adversary reveals the reward and stochastic first-order information immediately after the action is chosen by the learner/algorithm. In practice, the assumption of immediate feedback might be too restrictive. The feedback delays widely exist in many real-world applications, e.g., online advertising~\citep{mehta2007adwords}, influence maximization problem~\citep{chen2012time,yang2016continuous}. 

To tackle these issues, instead of Frank-Wolfe, we adopt Gradient Ascent and aim to propose a uniform algorithmic framework for both offline and online Stochastic Continuous Submodular Maximization. To make the online setting more realistic, we also consider adversarial feedback delays~\citep{quanrud2015online}. Note that our online setting degenerates to the standard online setting if no delay exists. One big challenge in front of us is that the stationary points of a $\gamma$-weakly submodular function $f$ only provide a limited $(\frac{\gamma^{2}}{1+\gamma^{2}})$-approximation to the global maximum~\cite{hassani2017gradient}. As a result, we need to boost stochastic gradient ascent and its online counterpart~\citep{hassani2017gradient} as they only attain a $(\frac{\gamma^{2}}{1+\gamma^{2}})$-approximation. To tackle this challenge, we hope to devise an auxiliary function whose stationary points provide a better approximation guarantee than those of $f$ itself. Motivated by \citep{feldman2011unified,filmus2012power,filmus2014monotone,harshaw2019submodular,feldman2021guess,mitra2021submodular+}, we first consider a family of auxiliary functions whose gradient at point $\boldsymbol{x}$ allocates different weight to the gradient $\nabla f(z*\boldsymbol{x})$ where $z\in[0,1]$. By solving a factor-revealing optimization problem, we select the optimal auxiliary function $F$ whose stationary points provide a tight $(1-e^{-\gamma})$-approximation to the global maximum of $f$. Then, based on this optimal auxiliary function $F$, we propose a simple first-order framework that makes it possible to boost the performance of classical gradient ascent algorithm converging to stationary points.
Based on this boosting framework, we present a boosting gradient ascent and a boosting online gradient ascent to improve the approximation guarantees of vanilla gradient ascent and its online counterpart. To be specific, we make the following contributions:
\begin{enumerate} 
	\item 
	We develop a uniform boosting framework, including gradient ascent and online gradient ascent methods. The essential element behind our framework is an optimal auxiliary function $F$ derived from a factor-revealing optimization problem for each $\gamma$-weakly DR-submodular function $f$.
	The stationary points of $F$ provide a $(1-e^{-\gamma})$-approximation guarantee to the global maximum of $f$. This approximation is better than the $(\frac{\gamma^2}{1+\gamma^2})$-approximation provided by stationary points of $f$ itself.
	
	\item With this non-oblivious function $F$, under the offline setting, we propose the boosting gradient ascent method 
	achieving a $(1-e^{-\gamma}-\epsilon^{2})$-approximation after $O(1/\epsilon^{2})$ iterations, which improves the $(\frac{\gamma^2}{1+\gamma^2})$-approximation of the classical projected gradient ascent algorithm and weakens the assumption of high order smoothness on the objective functions \citep{hassani2020stochastic}.
	
	\item Next, we consider an online submodular maximization setting with adversarial feedback delays. 
	When an unbiased stochastic gradients estimation is available, we propose an online boosting gradient ascent algorithm that theoretically achieves the optimal $(1-e^{-\gamma})$-regret of $O(\sqrt{D})$ with one gradient evaluation for each $f_{t}$, where $D=\sum_{t=1}^{T}d_{t}$ and $d_{t}$ is a positive integer delay for round $t$. 
	To the best of our knowledge, our work is the first to investigate the adversarial delays in online submodular maximization problems. Moreover, when $D=T$ for the standard no-delay setting, our proposed online boosting gradient ascent algorithm, requiring only $O(1)$ stochastic gradient estimate at each round, yields the first result to achieve $(1-e^{-\gamma})$-approximation with $O(\sqrt{T})$ regret. 
	
	\item Finally, we empirically evaluate our proposed boosting methods using the special example of~\citep{hassani2017gradient} and the simulated non-convex/non-concave quadratic programming. Our algorithms have superior performance in the experiments.
\end{enumerate}

\begin{table}[t]
	\renewcommand\arraystretch{1.35}
	\centering
	\caption{\small Comparison of convergence guarantees for continuous DR-submodular function maximization, where the functions are monotone. Except for \cite{bian2017guaranteed} which needs the constraint set to be convex and down-closed, other methods here need the constraint set $\mathcal{C}$ to be convex. Note that 'det.' and 'sto.' represent the deterministic and stochastic setting, respectively. '\textbf{Hess Lip}' means whether the Hessian of functions needs to be Lipschitz continuous, 'OPT' is the function value at the global optimum, '\textbf{Complexity}' is the gradient oracle complexity. For simplicity, we set $\gamma=1$ for our results which reduces to the standard monotone DR-submodular setting.}
	\vspace{0.5em}
	\resizebox{\textwidth}{!}{
		\setlength{\tabcolsep}{1.0mm}{
			\begin{tabular}{c|c|c|c|c}
				\toprule[1.5pt]
				\textbf{Method} & \textbf{Setting}& \textbf{Hess Lip} &\textbf{Utility} & \textbf{Complexity} \\
				\hline
				\hline 
				Submodular FW~\citep{bian2017guaranteed} & det. & No & $(1-1/e)\rm{OPT}-\epsilon$ & $O(1/\epsilon)$ \\
				\hline
				SGA~\citep{hassani2017gradient} & sto. &No & $(1/2)\rm{OPT}-\epsilon$ & $O(1/\epsilon^2)$ \\
				\hline 
				Classical FW~\citep{bian2020continuous} & det. & No & $(1/2)\rm{OPT}-\epsilon$ & $O(1/\epsilon^2)$ \\
				\hline
				SCG~\citep{mokhtari2018conditional} & sto. & No & $(1-1/e)\rm{OPT}-\epsilon$ & $O(1/\epsilon^3)$ \\
				\hline
				SCG++~\citep{hassani2020stochastic} & sto. & Yes & $(1-1/e)\rm{OPT}-\epsilon$ & $O(1/\epsilon^2)$ \\
				\hline
				Non-Oblivious FW~\citep{mitra2021submodular+} & det. & No & $(1-1/e-\epsilon)\rm{OPT}-\epsilon$ & $O(1/\epsilon^3)$ \\
				\hline
				\hline
				Boosting GA~(This paper) & sto. & No & $(1-1/e-\epsilon^2)\rm{OPT}-\epsilon$ & $O(1/\epsilon^2)$ \\
				\midrule[1.5pt]
			\end{tabular}
	}}
	\vspace{-1.5em}
	\label{tab:offline_convergence}
\end{table}

\subsection{Related Work}
\noindent{\textbf{Submodular Set Functions:}} Submodular set functions originate from combinatorial optimization problems \citep{nemhauser1978analysis,fisher1978analysis,fujishige2005submodular}, which could be either exactly minimized via Lov\'{a}sz extension~\citep{lovasz1983submodular} or approximately maximized via multilinear extension~\citep{chekuri2014submodular}. Submodular set functions find numerous applications in machine learning and other related areas, including viral marketing \citep{kempe2003maximizing}, document summarization \citep{lin2011class}, network monitoring \citep{leskovec2007cost}, and variable selection \citep{das2011submodular,elenberg2018restricted}.

\noindent{\textbf{Continuous Submodular Maximization:}}
Submodularity can be naturally extended to continuous domains. In deterministic setting, \citet{bian2017guaranteed} first proposed a variant of Frank-Wolfe~(Submodular FW) for continuous DR-submodular maximization problem with $(1-1/e)$-approximation guarantee after $O(1/\epsilon)$ iterations. As for the stochastic setting, \citet{hassani2017gradient} proved that the stochastic gradient ascent~(SGA) guarantees a $(1/2)$-approximation after $O(1/\epsilon^{2})$ iterations. Then, \citet{mokhtari2018conditional} proposed the stochastic continuous greedy algorithm~(SCG), which achieves a $(1-1/e)$-approximation after $O(1/\epsilon^{3})$ iterations. Moreover, by assuming the Hessian of objective is Lipschitz continuous, \citet{hassani2020stochastic} proposed the stochastic continuous greedy++ algorithm~(SCG++), which guarantees a $(1-1/e)$-approximation after $O(1/\epsilon^{2})$ iterations.

\noindent{\textbf{Online Continuous Submodular Maximization:}} \citet{chen2018online} first investigated the online (stochastic) gradient ascent~(OGA) with a $(1/2)$-regret of $O(\sqrt{T})$. Then, inspired by the meta actions \citep{streeter2008online}, \citet{chen2018online} also proposed the Meta-Frank-Wolfe algorithm with a $(1-1/e)$-regret bound of $O(\sqrt{T})$ under the deterministic setting. Assuming that an unbiased estimation of the gradient is available, \citet{chen2018projection} proposed a variant of the Meta-Frank-Wolfe algorithm~(Meta-FW-VR) having a $(1-1/e)$-regret bound of $O(T^{1/2})$ and requiring $O(T^{3/2})$ stochastic gradient queries for each function. Then, in order to reduce the number of gradients evaluation, \citet{zhang2019online} presented the Mono-Frank-Wolfe taking the blocking procedure, which achieves a $(1-1/e)$-regret bound of $O(T^{4/5})$ with only one stochastic gradient evaluation in each round. 

\begin{table}[t]
	\renewcommand\arraystretch{1.35}
	\centering
	\caption{Comparison of regrets for stochastic online continuous DR-submodular function maximization with full-information feedback, where the functions are monotone and constraint set $\mathcal{C}$ is convex. Note that '\textbf{\# Grad. Evaluations }' means the number of stochastic gradient evaluations at each round, '\textbf{Ratio}' means approximation ratio, and '\textbf{Delay}' indicates whether the adversarial delayed feedback is considered. For simplicity, we set $\gamma=1$ for our results which reduces to the standard monotone DR-submodular setting, and $D=T$ which means no delay exists.}
	\vspace{0.5em}
	\resizebox{\textwidth}{!}{
		\setlength{\tabcolsep}{1.0mm}{
			\begin{tabular}{c|c|c|c|c}
				\toprule[1.5pt]
				\textbf{Method} & \textbf{\# Grad. Evaluations } &\textbf{Ratio} & \textbf{Regret} &\textbf{Delay} \\
				\hline
				OGA~\citep{chen2018online} & $O(1)$ & $1/2$ & $O(\sqrt{T})$ & No \\
				\hline
				Meta-FW-VR~\citep{chen2018projection} & $T^{3/2}$ & $1-1/e$ & $O(\sqrt{T})$ & No\\
				\hline
				Mono-FW~\citep{zhang2019online}  & $O(1)$ & $1-1/e$ & $O(T^{4/5})$ & No\\
				\hline
				\hline 
				Boosting OGA~(This paper) & $O(1)$ & $1-1/e$ & $O(\sqrt{T})$ & Yes\\
				\midrule[1.5pt]
			\end{tabular}
	}}
	\vspace{-1.5em}
	\label{tab:online_convergence}
\end{table}

\noindent{\textbf{Non-Oblivious Search:}} In many cases, classical local search, e.g., the greedy method, may return a solution with a poor approximation ratio to the global maximum. To avoid this issue, \citet{khanna1998syntactic} and \citet{alimonti1994new} first proposed a technique named \textsl{Non-Oblivious Search} that leverages an auxiliary function to guide the search. After carefully choosing the auxiliary function, the new solution generated by the non-oblivious search, may have a better performance than the previous solution found by the classical local search. Inspired by this idea, for the maximum coverage problem over a matroid, \citet{filmus2012power} proposed a $(1-1/e)$-approximation algorithm via a non-oblivious set function allocating extra weights to the solutions that cover some element more than once, which efficiently improves the traditional $(1/2)$-approximation greedy method. After that, \citet{filmus2014monotone} extended this idea to improve the $(1/2)$-approximation greedy method for the general submodular set maximization problem over a matroid. Recently, for the continuous submodular maximization problem with concave regularization, a variant of Frank-Wolfe algorithm~(Non-Oblivious FW) based on a special auxiliary function was proposed for boosting the approximation ratio of the submodular part from $1/2$ to $(1-1/e)$ in \citep{mitra2021submodular+}.
Compared to the proposed algorithm in this paper, i) 
The Non-Oblivious Frank-Wolfe method needs $O(1/\epsilon)$ gradient evaluations at each round under the deterministic setting, 
while our method only needs $O(1)$ evaluations per iteration under the stochastic setting;
ii) The Non-Oblivious Frank-Wolfe method is designed only for the deterministic setting, while we present a uniform boosting framework covering the stochastic gradient ascent in both offline and online settings.

We present comparisons between this work and previous studies in \cref{tab:offline_convergence} and \cref{tab:online_convergence} for offline and online settings, respectively.


\section{Preliminaries}\label{sec:pre}
In this section, we define some concepts and notations that we will use throughout the paper.
\subsection{Continuous Submodularity}
\noindent{\textbf{Continuous Submodular Functions:}} A function $f:\mathcal{X}\rightarrow \mathbb{R}_{+}$ is a {\it continuous submodular} function if for any $ \boldsymbol{x} ,\boldsymbol{y}\in\mathcal{X}$,
\begin{align*}
    f(\boldsymbol{x})+f(\boldsymbol{y})\ge f(\boldsymbol{x}\land\boldsymbol{y})+f(\boldsymbol{x}\lor\boldsymbol{y}).
\end{align*}
Here, $\boldsymbol{x}\land\boldsymbol{y}=\min(\boldsymbol{x},\boldsymbol{y})$ and $\boldsymbol{x}\lor\boldsymbol{y}=\max(\boldsymbol{x},\boldsymbol{y})$ are component-wise minimum and component-wise maximum, respectively. $\mathcal{X}=\prod_{i=1}^{n}\mathcal{X}_{i}$ where each $\mathcal{X}_{i}$ is a compact interval in $\mathbb{R}_{+}$. Without loss of generality, we assume $\mathcal{X}_{i}=[0,a_{i}]$. If $f$ is twice differentiable, the continuous submodularity is equivalent to 
\begin{align*}
    \forall i\neq j,\forall\boldsymbol{x}\in\mathcal{X},\frac{\partial^{2}f(\boldsymbol{x})}{\partial x_{i}\partial x_{j}}\le 0.
\end{align*}
 Moreover, $f$ is {\it monotone} if $f(\boldsymbol{x})\ge f(\boldsymbol{y})$ when $\boldsymbol{x}\ge\boldsymbol{y}$.

\noindent{\textbf{DR-Submodularity:}} 
A continuous submodular function $f$ is {\it DR-submodular} if
\begin{align*}
    f(\boldsymbol{x}+z\boldsymbol{e}_{i})-f(\boldsymbol{x})\le f(\boldsymbol{y}+z\boldsymbol{e}_{i})-f(\boldsymbol{y}),
\end{align*} where $\boldsymbol{e}_{i}$ is the $i$-th basic vector, $\boldsymbol{x}\ge\boldsymbol{y}$ and $z\in \mathbb{R}_{+}$ such that $\boldsymbol{x}+z\boldsymbol{e}_{i}, \boldsymbol{y}+z\boldsymbol{e}_{i}\in\mathcal{X}$. 
When the DR-submodular function $f$ is differentiable, we have $\nabla f(\boldsymbol{x})\le\nabla f(\boldsymbol{y})$ if $\boldsymbol{x}\ge\boldsymbol{y}$ \citep{bian2020continuous}. When $f$ is twice differentiable, the DR-submodularity is also equivalent to
\begin{align*}
    \forall i,j\in[n],\forall\boldsymbol{x}\in\mathcal{X},\frac{\partial^{2}f(\boldsymbol{x})}{\partial x_{i}\partial x_{j}}\le 0.
\end{align*} Furthermore, we call a function $f$ {\it \textbf{weakly} DR-submodular} with parameter $\gamma$, if
\begin{align*}
    \gamma=\inf_{\boldsymbol{x}\le\boldsymbol{y}}\inf_{i\in[n]}\frac{[\nabla f(\boldsymbol{x})]_{i}}{[\nabla f(\boldsymbol{y})]_{i}}.
\end{align*} Note that $\gamma=1$ indicates a differentiable and monotone DR-submodular function.
\subsection{Notations and Concepts}
\noindent{\textbf{Norm:}} $\left\|\cdot\right\|$ is the $\ell_{2}$ norm in Euclidean space.

\noindent{\textbf{Radius and Diameter:}} For any bounded domain $\mathcal{C}\in\mathcal{X}$, the radius $r(\mathcal{C})=\max_{\boldsymbol{x}\in\mathcal{C}} \left\|\boldsymbol{x}\right\|$ and the diameter $\mathrm{diam}(\mathcal{C})=\max_{\boldsymbol{x},\boldsymbol{y}\in\mathcal{C}} \left\|\boldsymbol{x}-\boldsymbol{y}\right\|$.

\noindent{\textbf{Projection:}} We define the projection to the domain $\mathcal{C}$ as $\mathcal{P}_{\mathcal{C}}(\boldsymbol{x})=\arg\min_{\boldsymbol{z}\in\mathcal{C}}\left\|\boldsymbol{x}-\boldsymbol{z}\right\|$.

\noindent{\textbf{Smoothness:}} A differentiable function $f$ is called $L$-$smooth$ if for any $\boldsymbol{x},\boldsymbol{y}\in\mathcal{X}$,
\begin{equation*}
  \left\|\nabla f(\boldsymbol{x})-\nabla f(\boldsymbol{y})\right\|\le L\left\|\boldsymbol{x}-\boldsymbol{y}\right\|.
\end{equation*}

\noindent{\textbf{$\alpha$-Regret:}} Finally, we recall the $\alpha$-regret in \citep{chen2018online}. For a $T$-$round$ game, after the algorithm $\mathcal{A}$ choose an action $\boldsymbol{x}_{t}\in\mathcal{X}$ in each round, the adversary reveals the utility function $f_{t}$. The objective of the algorithm $\mathcal{A}$ is to minimize the gap between the accumulative reward and that of the best fixed policy in hindsight with scale parameter $\alpha$, i.e.,
\begin{align*}
    \mathcal{R}_{\alpha}(\mathcal{A},T)=\alpha\max_{\boldsymbol{x}\in\mathcal{X}}\sum_{t=1}^{T}f_{t}(\boldsymbol{x})-\sum_{t=1}^{T}f_{t}(\boldsymbol{x}_{t}).
\end{align*}

\section{Derivation of the Non-oblivious Function}\label{sec:non-oblivious}
In this section, we present in detail how to derive our non-oblivious function, which plays an important role in our boosting framework.
To begin, we recall the definition of stationary points.
\begin{definition}\label{def:1}
A point $\boldsymbol{x}\in\mathcal{C}$ is called a stationary point for function $f:\mathcal{X}\rightarrow \mathbb{R}_{+}$ over the domain $\mathcal{C}\subseteq\mathcal{X}$ if
\begin{equation*}
    \max_{\boldsymbol{y}\in\mathcal{C}}\langle\nabla f(\boldsymbol{x}),\boldsymbol{y}-\boldsymbol{x}\rangle\le 0.
\end{equation*}
\end{definition}
We make the following assumption throughout this paper.
\begin{assumption}\label{assumption1}\
\begin{enumerate}
   \item[(i)] The $f:\mathcal{X}\rightarrow \mathbb{R}_{+}$ is a monotone, differentiable, weakly DR-submodular function with parameter $\gamma$. So is each $f_{t}$ in the online settings.
   \item[(ii)] We also assume the knowledge of parameter $\gamma$.
    \item[(iii)] Without loss of generality, $f(\boldsymbol{0})=0$. Also, in online settings,  $f_{t}(\boldsymbol{0})=0$ for $t=1,2,\dots,T$.
\end{enumerate}
\end{assumption} 
With this assumption, we have the following result. 
\begin{lemma}[Proof in \cref{proof:lem1}]\label{lemma:2} 
Under Assumption~\ref{assumption1}, for any stationary point $\boldsymbol{x}\in\mathcal{C}$ of $f$, we have 
\begin{equation}
    f(\boldsymbol{x})\ge\frac{\gamma^{2}}{\gamma^{2}+1}\max_{\boldsymbol{y}\in\mathcal{C}}f(\boldsymbol{y}).
\end{equation}
\end{lemma}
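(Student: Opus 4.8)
The plan is to compare the stationary point $\boldsymbol{x}$ against a global maximizer $\boldsymbol{y}^{\star}\in\arg\max_{\boldsymbol{y}\in\mathcal{C}}f(\boldsymbol{y})$, exploiting that $\boldsymbol{y}^{\star}-\boldsymbol{x}$ is a feasible direction (both points lie in the convex set $\mathcal{C}$), so Definition~\ref{def:1} yields $\langle\nabla f(\boldsymbol{x}),\boldsymbol{y}^{\star}-\boldsymbol{x}\rangle\le 0$. The central difficulty is that the lattice points $\boldsymbol{x}\lor\boldsymbol{y}^{\star}$ and $\boldsymbol{x}\land\boldsymbol{y}^{\star}$ need not belong to $\mathcal{C}$, so I cannot plug them into the stationarity condition directly; instead I will split $\boldsymbol{y}^{\star}-\boldsymbol{x}$ into its coordinatewise nonnegative and nonpositive parts, control each separately via weak DR-submodularity, and glue them back together through the single scalar inequality coming from stationarity.

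First I would record two one-sided gradient estimates. Writing every value difference as a line integral $f(\boldsymbol{b})-f(\boldsymbol{a})=\int_{0}^{1}\langle\nabla f(\boldsymbol{a}+z(\boldsymbol{b}-\boldsymbol{a})),\boldsymbol{b}-\boldsymbol{a}\rangle\,dz$ and using that along a nonnegative direction the weak-DR relation $\boldsymbol{u}\le\boldsymbol{w}\Rightarrow[\nabla f(\boldsymbol{u})]_{i}\ge\gamma[\nabla f(\boldsymbol{w})]_{i}$ (with $\nabla f\ge\boldsymbol{0}$ by monotonicity) lets me transfer the gradient to an endpoint, I obtain: (i) for the ascent direction $\boldsymbol{p}:=\boldsymbol{x}\lor\boldsymbol{y}^{\star}-\boldsymbol{x}\ge\boldsymbol{0}$, the bound $\langle\nabla f(\boldsymbol{x}),\boldsymbol{p}\rangle\ge\gamma\bigl(f(\boldsymbol{x}\lor\boldsymbol{y}^{\star})-f(\boldsymbol{x})\bigr)$; and (ii) for the direction $\boldsymbol{x}-\boldsymbol{x}\land\boldsymbol{y}^{\star}\ge\boldsymbol{0}$, the bound $\langle\nabla f(\boldsymbol{x}),\boldsymbol{x}-\boldsymbol{x}\land\boldsymbol{y}^{\star}\rangle\le\tfrac{1}{\gamma}\bigl(f(\boldsymbol{x})-f(\boldsymbol{x}\land\boldsymbol{y}^{\star})\bigr)$.

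Next I would use monotonicity to replace $f(\boldsymbol{x}\lor\boldsymbol{y}^{\star})\ge f(\boldsymbol{y}^{\star})$ in (i) and $f(\boldsymbol{x}\land\boldsymbol{y}^{\star})\ge 0$ in (ii) (the latter by nonnegativity, which follows from $f(\boldsymbol{0})=0$ and monotonicity). Then, via the identity $\boldsymbol{y}^{\star}-\boldsymbol{x}=\boldsymbol{p}-(\boldsymbol{x}-\boldsymbol{x}\land\boldsymbol{y}^{\star})$ (coming from $\boldsymbol{x}\lor\boldsymbol{y}^{\star}+\boldsymbol{x}\land\boldsymbol{y}^{\star}=\boldsymbol{x}+\boldsymbol{y}^{\star}$), the stationarity inequality rearranges to $\langle\nabla f(\boldsymbol{x}),\boldsymbol{p}\rangle\le\langle\nabla f(\boldsymbol{x}),\boldsymbol{x}-\boldsymbol{x}\land\boldsymbol{y}^{\star}\rangle$. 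Chaining bound (i) on the left and bound (ii) on the right gives $\gamma\bigl(f(\boldsymbol{y}^{\star})-f(\boldsymbol{x})\bigr)\le\tfrac{1}{\gamma}f(\boldsymbol{x})$, and solving for $f(\boldsymbol{x})$ produces $f(\boldsymbol{x})\ge\tfrac{\gamma^{2}}{\gamma^{2}+1}f(\boldsymbol{y}^{\star})$, which is the claim.

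The step I expect to be the crux is establishing the two integral bounds (i) and (ii) with the correct $\gamma$ versus $1/\gamma$ factors: one must apply the weak-DR inequality in the right direction (the lower point has the larger gradient) and match the sign of the inner product to the sign of the displacement, so that the factor $\gamma$ lands on the ``gain'' term $f(\boldsymbol{y}^{\star})-f(\boldsymbol{x})$ while $1/\gamma$ lands on the ``loss'' term $f(\boldsymbol{x})$. This asymmetric placement is precisely what yields $\gamma^{2}$ in the final ratio rather than a single power, and it is where a careless argument would lose the constant.
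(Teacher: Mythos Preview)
Your proposal is correct and follows essentially the same route as the paper: the paper also proves the two one-sided line-integral bounds $\langle\nabla f(\boldsymbol{x}),\boldsymbol{x}\lor\boldsymbol{y}-\boldsymbol{x}\rangle\ge\gamma\bigl(f(\boldsymbol{x}\lor\boldsymbol{y})-f(\boldsymbol{x})\bigr)$ and $\langle\nabla f(\boldsymbol{x}),\boldsymbol{x}-\boldsymbol{x}\land\boldsymbol{y}\rangle\le\tfrac{1}{\gamma}\bigl(f(\boldsymbol{x})-f(\boldsymbol{x}\land\boldsymbol{y})\bigr)$, adds them via $\boldsymbol{x}\lor\boldsymbol{y}+\boldsymbol{x}\land\boldsymbol{y}=\boldsymbol{x}+\boldsymbol{y}$ to get $\langle\nabla f(\boldsymbol{x}),\boldsymbol{y}-\boldsymbol{x}\rangle\ge\gamma f(\boldsymbol{x}\lor\boldsymbol{y})+\tfrac{1}{\gamma}f(\boldsymbol{x}\land\boldsymbol{y})-(\gamma+\tfrac{1}{\gamma})f(\boldsymbol{x})$, and then applies stationarity, monotonicity, and nonnegativity exactly as you do. The only cosmetic difference is that the paper packages the combined inequality first and then invokes stationarity, whereas you rearrange stationarity before chaining the two bounds; the content is identical.
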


\begin{remark}
Lemma~\ref{lemma:2} implies any stationary point of a $\gamma$-weakly DR-submodular function $f$ provides a
$(\frac{\gamma^{2}}{1+\gamma^{2}})$-approximation to the global maximum. 
As we know, projected gradient ascent method~\citep{hassani2017gradient} with small step size usually converges to a stationary point of $f$, resulting in a $(\frac{\gamma^{2}}{1+\gamma^{2}})$ approximation guarantee.
\end{remark} 

In order to boost these classical algorithms, a natural idea is to design some auxiliary functions whose stationary points achieve better approximation to the global maximum of the problem $\max_{\boldsymbol{x}\in\mathcal{C}}f(\boldsymbol{x})$. That is, we want to find $F:\mathcal{X}\rightarrow \mathbb{R}_{+}$ based on $f$ such that $\langle\boldsymbol{y}-\boldsymbol{x}, \nabla F(\boldsymbol{x})\rangle\ge \beta_{1}f(\boldsymbol{y})-\beta_{2}f(\boldsymbol{x})$,  
where $\beta_{1}/\beta_{2}\ge\frac{\gamma^{2}}{1+\gamma^{2}}$.

Motivated by \citep{feldman2011unified,filmus2012power,filmus2014monotone,harshaw2019submodular,feldman2021guess,mitra2021submodular+}, we consider the function $F(\boldsymbol{x}):\mathcal{X}\rightarrow \mathbb{R}_{+}$ whose gradient at point $\boldsymbol{x}$ allocates different weights to the gradient $\nabla f(z*\boldsymbol{x})$, i.e.,
$\nabla F(\boldsymbol{x})=\int_{0}^{1} w(z)\nabla f(z*\boldsymbol{x})\mathrm{d}z$, assuming that $\nabla f(z*\boldsymbol{x})$ is Lebesgue integrable w.r.t. $z\in[0,1]$, the weight function $w(z)\in C^{1}[0,1]$, and $w(z)\ge 0$. Then, we investigate a property of $\langle \boldsymbol{y}-\boldsymbol{x}, \nabla F(\boldsymbol{x})\rangle$ in the following lemma.

\begin{lemma}[Proof in \cref{proof:lem2}]\label{lemma:3}
For all $\boldsymbol{x},\boldsymbol{y} \in \mathcal{X}$, we have
\begin{align*}
     \langle \boldsymbol{y}-\boldsymbol{x}, \nabla F(\boldsymbol{x})\rangle \ge\left(\gamma\int_{0}^{1}w(z)dz\right)\left(f(\boldsymbol{y})-\theta(w)f(\boldsymbol{x})\right),
\end{align*}
where $\theta(w)=\max_{f,\boldsymbol{x}}\theta(w,f,\boldsymbol{x})$, $\theta(w,f,\boldsymbol{x})=\frac{w(1)+\int^{1}_{0} (\gamma w(z)-w'(z))\frac{f(z*\boldsymbol{x})}{f(\boldsymbol{x})}\mathrm{d}z}{\gamma\int_{0}^{1}w(z)\mathrm{d}z}$ and $f(\boldsymbol{x})>0$. 
\end{lemma}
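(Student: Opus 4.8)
The plan is to expand $\langle\boldsymbol{y}-\boldsymbol{x},\nabla F(\boldsymbol{x})\rangle$ using the definition $\nabla F(\boldsymbol{x})=\int_0^1 w(z)\nabla f(z*\boldsymbol{x})\,\mathrm{d}z$ into a ``$\boldsymbol{y}$-part'' $\int_0^1 w(z)\langle\boldsymbol{y},\nabla f(z*\boldsymbol{x})\rangle\,\mathrm{d}z$ and an ``$\boldsymbol{x}$-part'' $\int_0^1 w(z)\langle\boldsymbol{x},\nabla f(z*\boldsymbol{x})\rangle\,\mathrm{d}z$, and to treat each separately. The $\boldsymbol{x}$-part should be handled \emph{exactly}: setting $h(z):=f(z*\boldsymbol{x})$, the chain rule gives $h'(z)=\langle\boldsymbol{x},\nabla f(z*\boldsymbol{x})\rangle$, so this integral equals $\int_0^1 w(z)h'(z)\,\mathrm{d}z$. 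Integrating by parts and using the boundary values $h(1)=f(\boldsymbol{x})$ and $h(0)=f(\boldsymbol{0})=0$ (Assumption~\ref{assumption1}(iii)) rewrites it as $w(1)f(\boldsymbol{x})-\int_0^1 w'(z)f(z*\boldsymbol{x})\,\mathrm{d}z$. This is precisely the step that produces the $w(1)$ and $w'(z)$ terms that later appear in $\theta$.

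Next I would lower-bound the $\boldsymbol{y}$-part through the pointwise claim $\langle\boldsymbol{y},\nabla f(z*\boldsymbol{x})\rangle\ge\gamma\big(f(\boldsymbol{y})-f(z*\boldsymbol{x})\big)$. To prove it, I first note that $\boldsymbol{y}\ge(\boldsymbol{y}\vee z*\boldsymbol{x})-z*\boldsymbol{x}\ge\boldsymbol{0}$ componentwise, so monotonicity ($\nabla f\ge\boldsymbol{0}$) allows replacing $\boldsymbol{y}$ by the nonnegative direction $\boldsymbol{v}:=(\boldsymbol{y}\vee z*\boldsymbol{x})-z*\boldsymbol{x}$ while only decreasing the inner product. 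I then invoke the weak-DR concavity-along-nonnegative-directions inequality $\langle\boldsymbol{v},\nabla f(\boldsymbol{u})\rangle\ge\gamma\big(f(\boldsymbol{u}+\boldsymbol{v})-f(\boldsymbol{u})\big)$ with $\boldsymbol{u}=z*\boldsymbol{x}$; this follows from the definition of $\gamma$, since for $t\ge0$ one has $\boldsymbol{u}+t\boldsymbol{v}\ge\boldsymbol{u}$ and hence $\langle\boldsymbol{v},\nabla f(\boldsymbol{u})\rangle\ge\gamma\langle\boldsymbol{v},\nabla f(\boldsymbol{u}+t\boldsymbol{v})\rangle=\gamma\frac{\mathrm{d}}{\mathrm{d}t}f(\boldsymbol{u}+t\boldsymbol{v})$, and integrating over $t\in[0,1]$ gives the estimate. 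A final application of monotonicity, $f(\boldsymbol{u}+\boldsymbol{v})=f(\boldsymbol{y}\vee z*\boldsymbol{x})\ge f(\boldsymbol{y})$, yields the claim. Integrating against $w(z)\ge0$ then produces $\gamma f(\boldsymbol{y})\int_0^1 w(z)\,\mathrm{d}z-\gamma\int_0^1 w(z)f(z*\boldsymbol{x})\,\mathrm{d}z$ as the lower bound for the $\boldsymbol{y}$-part.

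To finish, I would add the two bounds, combine the two integrals against $f(z*\boldsymbol{x})$ into the single term $-\int_0^1(\gamma w(z)-w'(z))f(z*\boldsymbol{x})\,\mathrm{d}z$, factor out $\gamma\int_0^1 w(z)\,\mathrm{d}z$, and divide the remaining bracket by $f(\boldsymbol{x})>0$, at which point the coefficient of $f(\boldsymbol{x})$ becomes exactly $\theta(w,f,\boldsymbol{x})$. Because $\gamma\int_0^1 w(z)\,\mathrm{d}z\ge0$ and $f(\boldsymbol{x})>0$, replacing $\theta(w,f,\boldsymbol{x})$ by its supremum $\theta(w)\ge\theta(w,f,\boldsymbol{x})$ only weakens the right-hand side, which delivers the stated inequality. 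I expect the main obstacle to be the pointwise lower bound on the $\boldsymbol{y}$-part: obtaining the factor $\gamma$ rather than $1$ relies on the weak-DR concavity estimate, and one must check that both uses of monotonicity (replacing $\boldsymbol{y}$ by $\boldsymbol{v}$, and $f(\boldsymbol{y}\vee z*\boldsymbol{x})\ge f(\boldsymbol{y})$) push the inequality in the correct direction; the integration-by-parts boundary terms must also be verified to vanish correctly via $f(\boldsymbol{0})=0$.
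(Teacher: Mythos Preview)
Your proposal is correct and follows essentially the same route as the paper's proof: split $\langle\boldsymbol{y}-\boldsymbol{x},\nabla F(\boldsymbol{x})\rangle$ into the $\boldsymbol{y}$- and $\boldsymbol{x}$-parts, handle the $\boldsymbol{x}$-part by integration by parts using $h(z)=f(z*\boldsymbol{x})$ and $f(\boldsymbol{0})=0$, and lower-bound the $\boldsymbol{y}$-part via $\boldsymbol{y}\mapsto(\boldsymbol{y}\vee z*\boldsymbol{x})-z*\boldsymbol{x}$, the weak-DR inequality $\langle\boldsymbol{v},\nabla f(\boldsymbol{u})\rangle\ge\gamma(f(\boldsymbol{u}+\boldsymbol{v})-f(\boldsymbol{u}))$, and monotonicity $f(\boldsymbol{y}\vee z*\boldsymbol{x})\ge f(\boldsymbol{y})$. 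The only cosmetic differences are that the paper packages your inline weak-DR concavity estimate as a separate lemma, and writes the $\boldsymbol{x}$-part as an inequality (dropping the $-w(0)f(\boldsymbol{0})\le0$ boundary term) rather than the equality you obtain from $f(\boldsymbol{0})=0$.
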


To improve the approximation ratio, we consider the following factor-revealing optimization problem:
\begin{equation}\label{frop}
   \begin{aligned}
    \min_{w}\theta(w)=\min_{w}&\max_{f,\boldsymbol{x}} \frac{w(1)+\int^{1}_{0} (\gamma w(z)-w'(z))\frac{f(z*\boldsymbol{x})}{f(\boldsymbol{x})}\mathrm{d}z}{\gamma\int_{0}^{1}w(z)\mathrm{d}z}\\
          \rm{s.t.} \ &w(z)\ge 0,\\
          &w(z)\in C^{1}[0,1],\\
          &f(\boldsymbol{x})>0, \\
          &\nabla f(\boldsymbol{x}_1)\ge\gamma\nabla f(\boldsymbol{y}_1)\ge\boldsymbol{0},  \forall \boldsymbol{x}_1\le \boldsymbol{y}_1\in\mathcal{X}.
    \end{aligned} 
\end{equation}
At first glance, problem~\eqref{frop} looks challenging to solve. Fortunately, we could directly find the optimal solution, which is provided in the following theorem.  
\begin{theorem}[Proof in the \cref{proof:thm1}]\label{thm:1} For problem~\eqref{frop}, we have $\hat{w}(z)=e^{\gamma(z-1)}\in\arg\min_{w}\theta(w)$ and $\min_{w}\max_{f,\boldsymbol{x}}\theta(w,f,\boldsymbol{x})=\frac{1}{1-e^{-\gamma}}$. 
\end{theorem}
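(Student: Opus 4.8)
The plan is to treat problem~\eqref{frop} as a min-max over the weight function $w$ and the adversarial pair $(f,\boldsymbol{x})$, and to proceed in two directions: first exhibit that the candidate $\hat{w}(z)=e^{\gamma(z-1)}$ makes the inner maximization collapse to a constant independent of $(f,\boldsymbol{x})$, and then argue that no other feasible $w$ can do better. The central observation is that the inner objective $\theta(w,f,\boldsymbol{x})$ is a weighted average of the ratio $g(z):=f(z*\boldsymbol{x})/f(\boldsymbol{x})$ against the measure $\gamma w(z)-w'(z)$, plus the boundary term $w(1)$, all normalized by $\gamma\int_0^1 w(z)\,dz$. The decisive structural fact is that $\hat{w}$ satisfies the first-order linear ODE $\gamma\hat{w}(z)-\hat{w}'(z)=0$, so that for this choice the entire integral term vanishes and $\theta(\hat{w},f,\boldsymbol{x})$ reduces to $w(1)/(\gamma\int_0^1 w(z)\,dz)$, which is free of $f$ and $\boldsymbol{x}$.

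First I would verify feasibility of $\hat{w}$: it is positive, lies in $C^1[0,1]$, and poses no constraint on $(f,\boldsymbol{x})$ beyond those already imposed. Then I would plug $\hat{w}$ into $\theta$ and compute directly: $\hat{w}(1)=1$ and $\gamma\int_0^1 e^{\gamma(z-1)}\,dz = \gamma\cdot\frac{1-e^{-\gamma}}{\gamma}=1-e^{-\gamma}$, so that $\theta(\hat{w},f,\boldsymbol{x})=\frac{1}{1-e^{-\gamma}}$ for every admissible $(f,\boldsymbol{x})$. This immediately gives $\min_w\theta(w)\le\theta(\hat{w})=\frac{1}{1-e^{-\gamma}}$, establishing the upper bound and simultaneously identifying the value.

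For the matching lower bound I would exploit the weak DR-submodularity constraint $\nabla f(\boldsymbol{x}_1)\ge\gamma\nabla f(\boldsymbol{y}_1)$ to understand which ratio profiles $g(z)=f(z*\boldsymbol{x})/f(\boldsymbol{x})$ the adversary can realize. Differentiating $g$ gives $g'(z)=\langle\nabla f(z*\boldsymbol{x}),\boldsymbol{x}\rangle/f(\boldsymbol{x})$, and the weak-DR condition comparing $z*\boldsymbol{x}$ with $\boldsymbol{x}$ controls the growth of $g$ relative to itself, yielding a differential inequality of the form $g'(z)\ge \gamma\, g(z)\cdot(\text{something})$ near the boundary; the extremal profile is the exponential $g(z)=e^{\gamma(z-1)}$ attained in a suitable limit. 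The argument for optimality over $w$ is then to show that for any feasible $w$, choosing the adversary's $(f,\boldsymbol{x})$ to drive $g$ toward this extremal profile forces $\theta(w,f,\boldsymbol{x})\ge\frac{1}{1-e^{-\gamma}}$; equivalently one can integrate by parts on $\int_0^1(\gamma w-w')g\,dz$ to re-express $\theta(w)$ and minimize over the class of admissible $g$, showing the worst-case value is bounded below by $\frac{1}{1-e^{-\gamma}}$ regardless of $w$. A clean route is a variational/Euler--Lagrange argument: the optimal $w$ should equalize the contributions so that the inner maximum is flat in $g$, which singles out the ODE $\gamma w-w'=0$ and hence $\hat{w}$ up to scaling (the normalization making the ratio scale-invariant).

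The main obstacle I expect is the lower bound, specifically pinning down the exact feasible set of ratio profiles $g(z)$ realizable by some monotone $\gamma$-weakly DR-submodular $f$ and showing the exponential is genuinely the worst case (or approached in the limit) uniformly over all candidate $w$; the upper bound is a routine substitution once the ODE structure is noticed. Establishing that the adversary's supremum over $(f,\boldsymbol{x})$ really equals the value obtained from the extremal exponential profile — rather than merely being bounded by it — is the delicate step, since it requires constructing (or limiting toward) hard instances of $f$ whose restriction along the ray $z\mapsto z*\boldsymbol{x}$ reproduces $e^{\gamma(z-1)}$ while respecting weak DR-submodularity in all coordinates.
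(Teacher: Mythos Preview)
Your upper bound is correct and identical to the paper's: plugging in $\hat{w}(z)=e^{\gamma(z-1)}$ kills the integral term via $\gamma\hat{w}-\hat{w}'=0$, leaving $\theta(\hat{w},f,\boldsymbol{x})=\hat{w}(1)/\big(\gamma\int_0^1\hat{w}\big)=\frac{1}{1-e^{-\gamma}}$ for every admissible $(f,\boldsymbol{x})$.

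The lower bound, however, has a genuine gap. Your candidate adversarial profile $g(z)=e^{\gamma(z-1)}$ is the wrong guess. First, it is not feasible: the weak-DR constraint translates to $g'(z_1)\ge\gamma\,g'(z_2)$ for $z_1\le z_2$, i.e.\ $g'(z_2)/g'(z_1)\le 1/\gamma$; but for $g(z)=e^{\gamma(z-1)}$ this ratio is $e^{\gamma(z_2-z_1)}$, which exceeds $1/\gamma$ once $\gamma e^{\gamma}>1$ (e.g.\ $\gamma=1$). Second, even ignoring feasibility, substituting this $g$ into $\theta(w,g)$ does not collapse to a value independent of $w$. Your variational sketch (``equalize contributions so the inner max is flat in $g$'') is a heuristic for finding $\hat w$, not a proof that every other $w$ does at least as badly, and your differential inequality $g'(z)\ge\gamma g(z)\cdot(\text{something})$ does not follow from weak DR-submodularity, which constrains ratios of $g'$ at different points rather than relating $g'$ to $g$.

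The paper closes the lower bound by exhibiting a \emph{dual} certificate: the profile $\widehat R(z)=\dfrac{1-e^{-\gamma z}}{1-e^{-\gamma}}$ (concave, with $\widehat R(0)=0$, $\widehat R(1)=1$, and $\widehat R'$ decreasing so the feasibility constraint holds) has the property that $\theta(w,\widehat R)=\dfrac{1}{1-e^{-\gamma}}$ for \emph{every} feasible $w$. The computation is a single integration by parts: under the normalization $\gamma\int_0^1 w=1$, the choice $\widehat R$ satisfies $\widehat R'+\gamma\widehat R\equiv \frac{\gamma}{1-e^{-\gamma}}$, which makes $w(1)+\int_0^1(\gamma w-w')\widehat R$ collapse to this constant. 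The equivalence between optimizing over $(f,\boldsymbol{x})$ and over such one-dimensional profiles $R$ is handled by the paper via an explicit construction $f_1(\boldsymbol{x})=R(x_1/a_1)$, which you would also need to supply. In short, the structure is a saddle point: $\hat w$ makes $\theta$ constant in $R$, and $\widehat R$ makes $\theta$ constant in $w$; you found the first half but not the second.
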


In the following sections, we consider this optimal auxiliary function $F$ with $\nabla F(\boldsymbol{x})=\int_{0}^{1}\hat{w}(z)\nabla f(z*\boldsymbol{x})\mathrm{d}z$, and $\hat{w}(z)=e^{\gamma(z-1)}$. According to the definition of $\theta(w,f,\boldsymbol{x})$ in \cref{lemma:3}, we could derive that $\theta(\hat{w},f,\boldsymbol{x})=\hat{w}(1)/(\gamma\int_{0}^{1}\hat{w}(z)\mathrm{d}z)=1/(1-e^{-\gamma})$ such that $\theta(\hat{w})=1/(1-e^{-\gamma})$. Thus, we have $\langle\boldsymbol{y}-\boldsymbol{x}, \nabla F(\boldsymbol{x})\rangle \ge (1-e^{-\gamma})f(\boldsymbol{y})-f(\boldsymbol{x})$ which implies that any stationary point of $F$ provides a better $(1-e^{-\gamma})$-approximation solution to the problem $\max_{\boldsymbol{x}\in\mathcal{C}}f(\boldsymbol{x})$, in contrast with the stationary points of $f$ itself. 

Next, we investigate some properties of this optimal auxiliary function $F(\boldsymbol{x})$. Following the same terminology in \citep{filmus2012power,filmus2014monotone,mitra2021submodular+}, we also call this $F$ the non-oblivious function.

\subsection{Properties about the Non-Oblivious Function}\label{sec:properties}
Without loss of generality, in this subsection, we assume
$f$ is $L$-smooth with respect to the norm $\left\|\boldsymbol{x}\right\|$, i.e., $\left\|\nabla f(\boldsymbol{x})-\nabla f(\boldsymbol{y})\right\|\le L\left\|\boldsymbol{x}-\boldsymbol{y}\right\|$.
Then, we establish some key properties about the boundness and smoothness of the non-oblivious function $F(\boldsymbol{x})$ in the following theorem.

\begin{theorem}[Proof in \cref{proof:thm2}]\label{thm:2} 
If $f$ is $L$-smooth, and Assumption~\ref{assumption1} holds, we have 
\begin{enumerate}
\item[(i)] $f(\boldsymbol{x})\ge(1-e^{-\gamma})\max_{\boldsymbol{y}\in\mathcal{C}}f(\boldsymbol{y})$, where $\boldsymbol{x}$ is a stationary point for non-oblivious function $F$ over the domain $\mathcal{C}$.
\item[(ii)] $F(\boldsymbol{x})=\int_{0}^{1}\frac{e^{\gamma(z-1)}}{z}f(z*\boldsymbol{x})dz$ and $F(\boldsymbol{x})\le(1+\ln(\tau))(f(\boldsymbol{x})+c)$ for any positive $c\le Lr^{2}(\mathcal{X})$, where $\tau=\max(\frac{1}{\gamma},\frac{Lr^{2}(\mathcal{X})}{c})$. 
\item [(iii)]
$F$ is $L_{\gamma}$-smooth where $L_{\gamma}=L\frac{\gamma+e^{-\gamma}-1}{\gamma^2}$.
\end{enumerate}
\end{theorem}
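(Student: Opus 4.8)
The plan is to treat the three parts in the order (i), (ii), (iii), since (i) is immediate from the discussion preceding the theorem, (iii) reduces to one scalar integral, and (ii) carries essentially all of the work.

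For part (i), I would combine the inequality $\langle \boldsymbol{y}-\boldsymbol{x},\nabla F(\boldsymbol{x})\rangle \ge (1-e^{-\gamma})f(\boldsymbol{y})-f(\boldsymbol{x})$ (already established just before the theorem by substituting $\hat w(z)=e^{\gamma(z-1)}$ and $\theta(\hat w)=1/(1-e^{-\gamma})$ into \cref{lemma:3}) with the stationarity definition \cref{def:1}. If $\boldsymbol{x}$ is stationary for $F$ on $\mathcal{C}$, then $\langle\nabla F(\boldsymbol{x}),\boldsymbol{y}-\boldsymbol{x}\rangle\le 0$ for every $\boldsymbol{y}\in\mathcal{C}$, whence $0\ge (1-e^{-\gamma})f(\boldsymbol{y})-f(\boldsymbol{x})$; choosing $\boldsymbol{y}=\arg\max_{\mathcal{C}}f$ yields the claim in one line.

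For part (ii), I would first verify the closed form. Differentiating $F(\boldsymbol{x})=\int_0^1 \frac{e^{\gamma(z-1)}}{z}f(z*\boldsymbol{x})\,dz$ under the integral sign, the chain rule gives $\partial_{x_i} f(z*\boldsymbol{x}) = z\,[\nabla f(z*\boldsymbol{x})]_i$, so the factor $z$ cancels the $1/z$ and reproduces $\nabla F(\boldsymbol{x})=\int_0^1 e^{\gamma(z-1)}\nabla f(z*\boldsymbol{x})\,dz$; differentiation under the integral is legitimate because $f\in C^{1,1}$ on the compact $\mathcal{X}$, and the integrand is integrable at $z=0$ because $f(\boldsymbol{0})=0$ forces $f(z*\boldsymbol{x})/z\to\langle\nabla f(\boldsymbol{0}),\boldsymbol{x}\rangle$ (and the natural normalization $F(\boldsymbol{0})=0$ is consistent). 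For the upper bound I would split at $z_0=\min(\gamma,\,c/(Lr^2(\mathcal{X})))$, so that $1/z_0=\tau$. On $[z_0,1]$ I would use $e^{\gamma(z-1)}\le 1$ and monotonicity $f(z*\boldsymbol{x})\le f(\boldsymbol{x})$ to obtain $\int_{z_0}^1 \frac{f(\boldsymbol{x})}{z}\,dz = f(\boldsymbol{x})\ln\tau$. On $[0,z_0]$ I would use $L$-smoothness about $\boldsymbol{0}$ to write $f(z*\boldsymbol{x})\le z\langle\nabla f(\boldsymbol{0}),\boldsymbol{x}\rangle+\tfrac{L}{2}z^2\|\boldsymbol{x}\|^2$, then bound $\langle\nabla f(\boldsymbol{0}),\boldsymbol{x}\rangle \le \langle\nabla f(\boldsymbol{x}),\boldsymbol{x}\rangle + L\|\boldsymbol{x}\|^2 \le f(\boldsymbol{x})/\gamma + Lr^2(\mathcal{X})$, where the final step uses the weak-DR relation $\nabla f(s*\boldsymbol{x})\ge\gamma\nabla f(\boldsymbol{x})$ integrated along the ray to give $\langle\nabla f(\boldsymbol{x}),\boldsymbol{x}\rangle\le f(\boldsymbol{x})/\gamma$. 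The two defining choices $z_0\le\gamma$ and $z_0 Lr^2(\mathcal{X})\le c$ then make the small-$z$ contribution of order $f(\boldsymbol{x})+c$, and adding the two pieces gives $F(\boldsymbol{x})\le(1+\ln\tau)(f(\boldsymbol{x})+c)$.

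For part (iii) I would bound $\|\nabla F(\boldsymbol{x})-\nabla F(\boldsymbol{y})\|\le\int_0^1 e^{\gamma(z-1)}\|\nabla f(z*\boldsymbol{x})-\nabla f(z*\boldsymbol{y})\|\,dz$, use $L$-smoothness together with $\|z*\boldsymbol{x}-z*\boldsymbol{y}\|=z\|\boldsymbol{x}-\boldsymbol{y}\|$ to make the integrand at most $Lz\,e^{\gamma(z-1)}\|\boldsymbol{x}-\boldsymbol{y}\|$, and finish by evaluating $\int_0^1 z\,e^{\gamma(z-1)}\,dz=(\gamma+e^{-\gamma}-1)/\gamma^2$ through integration by parts, which is exactly $L_\gamma/L$. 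The main obstacle is the constant bookkeeping in part (ii): one must choose the split point $z_0$ so that $1/z_0$ equals the $\max$ appearing in $\tau$, and must feed the weak-DR inequality into both the small-$z$ smoothness estimate and the bound on $\langle\nabla f(\boldsymbol{x}),\boldsymbol{x}\rangle$ to keep every term controlled by $f(\boldsymbol{x})+c$; parts (i) and (iii) are then routine.
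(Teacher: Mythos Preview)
Your arguments for (i) and (iii) are exactly what the paper does, and the overall split-and-bound strategy for (ii) is also the same. The difference is in the execution of the small-$z$ piece and in how the split point is chosen. The paper rewrites $\int_0^{\delta}\frac{f(z\boldsymbol{x})}{z}\,dz$ via Fubini as $\int_0^{\delta}\langle\boldsymbol{x},\nabla f(u\boldsymbol{x})\rangle\ln(\delta/u)\,du$, bounds $\langle\boldsymbol{x},\nabla f(u\boldsymbol{x})\rangle\le f(\boldsymbol{x})/\gamma+Lr^2(\mathcal{X})$ using smoothness relative to $\boldsymbol{x}$ and Lemma~\ref{lemma:a1}, and then picks $\delta=\frac{f(\boldsymbol{x})+c}{f(\boldsymbol{x})/\gamma+Lr^2(\mathcal{X})}$ \emph{adaptively} so that the small-$z$ contribution is exactly $f(\boldsymbol{x})+c$; one then checks $1/\delta\le\tau$ by the elementary inequality $(a+b)/(p+q)\le\max(a/p,b/q)$. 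Your version instead Taylor-expands at $\boldsymbol{0}$ and fixes $z_0=1/\tau$ from the outset, which is conceptually a bit simpler.

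One caveat: as written, your second-order Taylor remainder $\tfrac{L}{2}z^2\|\boldsymbol{x}\|^2$ produces an extra $\tfrac{L}{4}z_0^2 r^2(\mathcal{X})\le c/4$ in the small-$z$ piece, so you actually obtain $F(\boldsymbol{x})\le f(\boldsymbol{x})(1+\ln\tau)+\tfrac{5}{4}c$, which falls short of $(1+\ln\tau)(f(\boldsymbol{x})+c)$ when $\tau<e^{1/4}$ (e.g.\ $\gamma$ close to $1$ and $c$ close to $Lr^2(\mathcal{X})$). The fix is painless: replace the second-order Taylor bound by the first-order estimate $f(z\boldsymbol{x})=\int_0^{z}\langle\boldsymbol{x},\nabla f(u\boldsymbol{x})\rangle\,du\le z\big(f(\boldsymbol{x})/\gamma+Lr^2(\mathcal{X})\big)$, obtained exactly as you already bound $\langle\nabla f(\boldsymbol{0}),\boldsymbol{x}\rangle$ but applied at every $u$. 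Then $f(z\boldsymbol{x})/z\le f(\boldsymbol{x})/\gamma+Lr^2(\mathcal{X})$ uniformly, the small-$z$ integral is $\le z_0(f(\boldsymbol{x})/\gamma+Lr^2(\mathcal{X}))\le f(\boldsymbol{x})+c$ exactly, and your fixed choice $z_0=1/\tau$ gives the stated inequality on the nose.
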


\begin{remark}
\cref{thm:2}.(i) demonstrates that any stationary point of the non-oblivious function $F$ can attain $(1-e^{-\gamma})$-approximation of the global maximum of $f$, which is better than the  $(\frac{\gamma^2}{1+\gamma^2})$-approximation ratio of the stationary points of $f$ itself provided in \cref{lemma:2}.
Moreover, this result sheds light on the possibility of utilizing $F$ to obtain a better approximation than classical gradient ascent method, which motivates our boosting methods in the following section.
\end{remark}

We also investigate how to estimate $\nabla F(\boldsymbol{x})$ with an unbiased stochastic oracle  $\widetilde{\nabla}f(\boldsymbol{x})$, i.e., $\mathbb{E}(\widetilde{\nabla}f(\boldsymbol{x})|\boldsymbol{x})=\nabla f(\boldsymbol{x})$. 
We first introduce a new random variable $\mathbf{Z}$ where Pr$(\mathbf{Z}\le z)=\int_{0}^{z}\frac{\gamma e^{\gamma(u-1)}}{1-e^{-\gamma}}I(u\in[0,1])\mathrm{d}u$ where $I$ is the indicator function. 
When the number $z$ is sampled from r.v. $\mathbf{Z}$, we consider $\frac{1-e^{-\gamma}}{\gamma}\widetilde{\nabla}f(z*\boldsymbol{x})$ as an estimator of $\nabla F(\boldsymbol{x})$ with statistical properties given in the following proposition.

\begin{proposition}[Proof in \cref{proof:prop1}]\label{prop:1} \
\begin{enumerate}
    \item[(i)] If $z$ is sampled from r.v. $\mathbf{Z}$ and $\mathbb{E}(\widetilde{\nabla}f(\boldsymbol{x})|\boldsymbol{x})=\nabla f(\boldsymbol{x})$, we have 
    \begin{equation*}
        \mathbb{E}\left(\left.\frac{1-e^{-\gamma}}{\gamma}\widetilde{\nabla}f(z*\boldsymbol{x})\right|\boldsymbol{x}\right)=\nabla F(\boldsymbol{x}).
    \end{equation*}
    \item[(ii)] If $z$ is sampled from r.v. $\mathbf{Z}$, $\mathbb{E}(\widetilde{\nabla}f(\boldsymbol{x})|\boldsymbol{x})=\nabla f(\boldsymbol{x})$, and $\mathbb{E}(\|\widetilde{\nabla}f(\boldsymbol{x})-\nabla f(\boldsymbol{x})\|^{2}|\boldsymbol{x})\le\sigma^{2}$, we have
       \begin{equation*}
        \mathbb{E}\left(\bigg\|\frac{1-e^{-\gamma}}{\gamma}\widetilde{\nabla}f(z*\boldsymbol{x})-\nabla F(\boldsymbol{x})\bigg\|^{2}\bigg|\boldsymbol{x}\right)\le\sigma^{2}_{\gamma},
        \end{equation*}
     where $\sigma^{2}_{\gamma}=2\frac{(1-e^{-\gamma})^{2}\sigma^{2}}{\gamma^{2}}+\frac{2L^{2}r^{2}(\mathcal{X})(1-e^{-2\gamma})}{3\gamma}$.
\end{enumerate}
\end{proposition}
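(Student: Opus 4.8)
The plan is to prove part (i) by the tower property together with the way the sampling density of $\mathbf{Z}$ was constructed, and part (ii) by splitting the estimation error into a gradient-oracle noise component and a pure sampling component, controlling the latter via $L$-smoothness and one exponentially weighted integral.

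For part (i) I would fix $\boldsymbol{x}$ and condition first on the sampled $z$. Unbiasedness of the oracle gives $\mathbb{E}(\widetilde{\nabla}f(z*\boldsymbol{x})\mid\boldsymbol{x},z)=\nabla f(z*\boldsymbol{x})$, so by the tower rule the conditional mean equals $\frac{1-e^{-\gamma}}{\gamma}\,\mathbb{E}_{z\sim\mathbf{Z}}\nabla f(z*\boldsymbol{x})$. Writing the remaining expectation as an integral against the density $\frac{\gamma e^{\gamma(z-1)}}{1-e^{-\gamma}}$ of $\mathbf{Z}$ on $[0,1]$, the prefactor $\frac{1-e^{-\gamma}}{\gamma}$ cancels the normalising constant, leaving $\int_0^1 e^{\gamma(z-1)}\nabla f(z*\boldsymbol{x})\,\mathrm{d}z=\int_0^1 \hat{w}(z)\nabla f(z*\boldsymbol{x})\,\mathrm{d}z=\nabla F(\boldsymbol{x})$. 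This is the only computation here and it is routine; the point of the density choice is exactly this cancellation.

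For part (ii) I would add and subtract the noiseless scaled gradient and use $\|a+b\|^2\le 2\|a\|^2+2\|b\|^2$ with $a=\frac{1-e^{-\gamma}}{\gamma}\big(\widetilde{\nabla}f(z*\boldsymbol{x})-\nabla f(z*\boldsymbol{x})\big)$ and $b=\frac{1-e^{-\gamma}}{\gamma}\nabla f(z*\boldsymbol{x})-\nabla F(\boldsymbol{x})$. Taking $\mathbb{E}(\cdot\mid\boldsymbol{x})$ and conditioning the $a$-term on $z$, the oracle is unbiased with conditional second moment at most $\sigma^2$ for \emph{every} $z$, so $\mathbb{E}\|a\|^2\le\big(\tfrac{1-e^{-\gamma}}{\gamma}\big)^2\sigma^2$, which after the factor $2$ yields the first summand $2\frac{(1-e^{-\gamma})^2\sigma^2}{\gamma^2}$ of $\sigma^2_\gamma$. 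The term $2\,\mathbb{E}_{z\sim\mathbf{Z}}\|b\|^2$ is a pure sampling variance whose mean is $\nabla F(\boldsymbol{x})$ by part (i), so I would bound it by the second moment about the fixed vector $\frac{1-e^{-\gamma}}{\gamma}\nabla f(\boldsymbol{0})$, giving $2\,\mathbb{E}\|b\|^2\le 2\big(\tfrac{1-e^{-\gamma}}{\gamma}\big)^2\mathbb{E}_{z\sim\mathbf{Z}}\|\nabla f(z*\boldsymbol{x})-\nabla f(\boldsymbol{0})\|^2$. Since $z*\boldsymbol{x}-\boldsymbol{0}=z\boldsymbol{x}$ and $\|\boldsymbol{x}\|\le r(\mathcal{X})$, $L$-smoothness gives $\|\nabla f(z*\boldsymbol{x})-\nabla f(\boldsymbol{0})\|\le Lz\,r(\mathcal{X})$, reducing the bound to $2\big(\tfrac{1-e^{-\gamma}}{\gamma}\big)^2 L^2 r^2(\mathcal{X})\,\mathbb{E}_{\mathbf{Z}}[z^2]$.

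The main obstacle is then purely the constant bookkeeping in this last step: one must evaluate (or suitably bound) $\mathbb{E}_{\mathbf{Z}}[z^2]=\frac{\gamma}{1-e^{-\gamma}}\int_0^1 z^2 e^{\gamma(z-1)}\mathrm{d}z$ so that, after multiplication by $\big(\tfrac{1-e^{-\gamma}}{\gamma}\big)^2$ and the factor $2$, the exponentials collapse to exactly $\frac{2L^2 r^2(\mathcal{X})(1-e^{-2\gamma})}{3\gamma}$ rather than a messier incomplete-exponential expression. Here the $\tfrac{1}{3}$ is the signature of $\int_0^1 z^2\,\mathrm{d}z$, while the factor $1-e^{-2\gamma}$ signals a \emph{squared} weight $e^{2\gamma(z-1)}$; an alternative route that makes this explicit is to write $b=\int_0^1 \hat{w}(u)\big(\nabla f(z*\boldsymbol{x})-\nabla f(u*\boldsymbol{x})\big)\mathrm{d}u$ and apply Cauchy--Schwarz to peel off $\int_0^1 \hat{w}(u)^2\,\mathrm{d}u=\frac{1-e^{-2\gamma}}{2\gamma}$ before invoking smoothness. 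Either way, the probabilistic content (unbiasedness, conditioning, smoothness) is standard and the only delicate part is arranging the inequalities so the stated closed-form constant appears.
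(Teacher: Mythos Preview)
Your proof of part (i) and of the oracle-noise half of part (ii) is exactly the paper's: tower property plus the density cancellation, then the $\|a+b\|^2\le 2\|a\|^2+2\|b\|^2$ split with $a$ the scaled oracle noise.

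For the sampling-variance term $2\,\mathbb{E}\|b\|^2$, your two routes and the paper's diverge slightly. The paper does not center at $\nabla f(\boldsymbol{0})$; instead it writes $b=\int_0^1 e^{\gamma(u-1)}\bigl(\nabla f(z\boldsymbol{x})-\nabla f(u\boldsymbol{x})\bigr)\mathrm{d}u$ (your ``alternative route''), applies the triangle inequality and $L$-smoothness \emph{inside} the integral to get $\|b\|\le L\|\boldsymbol{x}\|\int_0^1 e^{\gamma(u-1)}|z-u|\,\mathrm{d}u$, and then uses Cauchy--Schwarz with the weight split as $\sqrt{\hat w}\cdot\sqrt{\hat w}$, i.e.\ $(\int \hat w\,|z-u|)^2\le(\int\hat w)(\int\hat w\,(z-u)^2)$. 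Taking $\mathbb{E}_{z\sim\mathbf{Z}}$ makes the $\int\hat w$ factor cancel against the normaliser, leaving the symmetric double integral $\int_0^1\!\int_0^1 e^{\gamma(u+z-2)}(z-u)^2\,\mathrm{d}u\,\mathrm{d}z$, which they then bound (loosely) by $\tfrac{1-e^{-2\gamma}}{3\gamma}$. Your alternative (peeling off $\int\hat w^2=\tfrac{1-e^{-2\gamma}}{2\gamma}$ via CS with the split $\hat w\cdot 1$) and your primary route (centering at $\nabla f(\boldsymbol{0})$ and reducing to $\mathbb{E}_{\mathbf{Z}}[z^2]$) are both valid and in fact give strictly tighter constants than the stated $\sigma_\gamma^2$; you are right that neither reproduces the \emph{exact} form of the paper's constant, but since the claim is an upper bound either route suffices. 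The trade-off: your centering-at-zero argument is the shortest and sharpest but leaves an ``incomplete-exponential'' constant, while the paper's symmetric double-integral route makes the $1-e^{-2\gamma}$ and the $1/3$ appear naturally at the cost of a looser final inequality.
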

\begin{remark}
\cref{prop:1} indicates that $\frac{1-e^{-\gamma}}{\gamma}\widetilde{\nabla}f(z*\boldsymbol{x})$ is an unbiased estimator of $\nabla F(\boldsymbol{x})$ with a bounded variance.
\end{remark}



\section{Boosting Framework}\label{sec:boosting_framework}
\begin{algorithm}[t]
	\caption{Meta Boosting Protocol}\label{alg:framework}
	\begin{algorithmic}[1]
		\STATE \textbf{Initialize:} any $\boldsymbol{x}_{1}\in\mathcal{X}$.
		\FOR{$t\in [T]$}
		\STATE Sample $z_{t}$ from $\mathbf{Z}$ where $\mathrm{P}(\mathbf{Z}\le z)=\int_{0}^{z}\frac{\gamma e^{\gamma(u-1)}}{1-e^{-\gamma}}I(u\in[0,1])\mathrm{d}u$.
		\STATE Compute $\widetilde{\nabla}F(\boldsymbol{x}_{t}) = \frac{1-e^{-\gamma}}{\gamma}\widetilde{\nabla}f(z_{t}*\boldsymbol{x}_{t})$
		\STATE Update $\boldsymbol{x}_{t+1} = \mathcal{A}(\widetilde{\nabla}F(\boldsymbol{x}_{t}), \boldsymbol{x}_{t})$ \hspace{10em}\COMMENT{$\triangleright$$\mathcal{A}$ to be designed.}
		\ENDFOR
		\STATE 
		Option I~(Offline setting): Return $\boldsymbol{x}_{l}$ chosen from $\{\boldsymbol{x}_t\}_{t\in[T]}$ with a probability.
		\STATE 
		Option II~(Online setting): Return $\boldsymbol{x}_{t}$ at each round $t\in[T]$.
	\end{algorithmic} 
\end{algorithm}

Up to this point, we present a boosting framework that covers both gradient ascent and online gradient ascent methods. 
We first present a Meta boosting protocol in \cref{alg:framework}, highlighting the key features of the proposed algorithms. We then present several variants of the Meta protocol by employing different basic algorithms $\mathcal{A}$.

As shown in \cref{alg:framework}, the core idea is to leverage the stochastic gradient $\widetilde{\nabla}F(\boldsymbol{x}_{t})$ of the non-oblivious function $F$, instead of the stochastic gradient $\widetilde{\nabla}f(\boldsymbol{x}_{t})$ of the original weakly DR-submodular function $f$. Note that $\widetilde{\nabla}F(\boldsymbol{x}_{t})$ is generated by the sampling method in~\cref{prop:1} (line 3-4 of \cref{alg:framework}).

\subsection{Boosting Gradient Ascent}\label{sec:gradient_ascent}
\begin{algorithm}[t]
	\caption{Boosting Gradient Ascent}\label{alg:1}
	\hspace*{0.02in} {\bf Input:} $T$, $\eta_{t}$, $c>0$, $\gamma$, $L$, $r(\mathcal{X})$\\
	\hspace*{0.02in} {\bf Output:} $\boldsymbol{x}_{l}$
	\begin{algorithmic}[1]
		\STATE Set $\triangle_{t}=1$ when $t<T$ and $\triangle_{T}=1+\ln(\tau)$ where $\tau=\max(\frac{1}{\gamma},\frac{r^{2}(\mathcal{X})L}{c})$.
		\STATE Set $\triangle=\sum_{t=1}^{T}\triangle_{t}$
		\STATE \textbf{Initialize} any  $\boldsymbol{x}_{1}\in\mathcal{X}$.
		\FOR{$t\in [T]$}
		\STATE Compute $\widetilde{\nabla}F(\boldsymbol{x}_{t})$ according to \cref{alg:framework}
		\STATE Set $\boldsymbol{y}_{t+1}=\boldsymbol{x}_{t}+\eta_{t}\widetilde{\nabla}F(\boldsymbol{x}_{t})$
		\STATE $\boldsymbol{x}_{t+1}=\arg\min_{\boldsymbol{z}\in\mathcal{C}}\left\|\boldsymbol{z}-\boldsymbol{y}_{t+1}\right\|$
		\ENDFOR
		\STATE Choose a number $l\in[T]$ with the distribution $\mathrm{P}(l=t)$=$\frac{\triangle_{t}}{\triangle}$
	\end{algorithmic}
\end{algorithm}



In this subsection, we propose a boosting gradient ascent method  under the offline scenario for the stochastic submodular maximization problem. In particular, we employ the classical stochastic projected gradient ascent method in the Meta boosting protocol and describe the boosting gradient ascent method in \cref{alg:1}.

As demonstrated in \cref{alg:1}, in each iteration, after calculating $\widetilde{\nabla} F(\boldsymbol{x})$, we make the standard projected gradient step to update $\boldsymbol{x}$. Finally, we return $\boldsymbol{x}_l$ chosen from $\{\boldsymbol{x}_t\}_{t\in[T]}$ with the given distribution.

With the previous outcomes, we establish the convergence result for \cref{alg:1}.


\begin{theorem}[Proof in \cref{Appendix:B}]\label{thm:4}Assume $\mathcal{C}\in\mathcal{X}$ is a bounded convex set and $f$ is $L$-$smooth$, and the gradient oracle $\widetilde{\nabla}f(\boldsymbol{x})$ is unbiased with $\mathbb{E}(\|\widetilde{\nabla}f(\boldsymbol{x})-\nabla f(\boldsymbol{x})\|^{2}|\boldsymbol{x})\le\sigma^{2}$. Let $\eta_{t}=\frac{1}{\frac{\sigma_{\gamma}\sqrt{t}}{\mathrm{diam}(\mathcal{C})}+L_{\gamma}}$ and $c=O(1)$ in Algorithm~\ref{alg:1}, then we have
 \begin{align*}
    \mathbb{E}(f(\boldsymbol{x}_{l}))\ge\bigg(1-e^{-\gamma}-O\Big(\dfrac{1}{T}\Big)\bigg)OPT-O\Big(\dfrac{1}{\sqrt{T}}\Big),
 \end{align*}
 where $OPT=\max_{\boldsymbol{x}\in\mathcal{C}}f(\boldsymbol{x})$.
\end{theorem}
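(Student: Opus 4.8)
The plan is to read \cref{alg:1} as stochastic projected gradient \emph{ascent} run on the non-oblivious surrogate $F$ rather than on $f$, carry out the standard smooth stochastic-optimization analysis for this surrogate, and translate back to $f$ only at two designated moments. The two ingredients the argument feeds on are \cref{prop:1}, which says the vector $\widetilde{\nabla}F(\boldsymbol{x}_t)$ computed in line~5 is an unbiased estimate of $\nabla F(\boldsymbol{x}_t)$ with conditional variance at most $\sigma_{\gamma}^2$, and \cref{thm:2}(iii), which says $F$ is $L_{\gamma}$-smooth. Write $\boldsymbol{x}^{*}=\arg\max_{\boldsymbol{x}\in\mathcal{C}}f(\boldsymbol{x})$ so that $f(\boldsymbol{x}^{*})=OPT$, and decompose $\widetilde{\nabla}F(\boldsymbol{x}_t)=\nabla F(\boldsymbol{x}_t)+\boldsymbol{\xi}_t$ with $\mathbb{E}[\boldsymbol{\xi}_t\mid\boldsymbol{x}_t]=\boldsymbol{0}$ and $\mathbb{E}[\|\boldsymbol{\xi}_t\|^2\mid\boldsymbol{x}_t]\le\sigma_{\gamma}^2$.

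The core is a per-step inequality obtained from the variational characterization of the projection $\boldsymbol{x}_{t+1}=\mathcal{P}_{\mathcal{C}}(\boldsymbol{x}_t+\eta_t\widetilde{\nabla}F(\boldsymbol{x}_t))$ evaluated at the comparator $\boldsymbol{x}^{*}$, combined with the polarization identity $\langle\boldsymbol{x}_{t+1}-\boldsymbol{x}_t,\boldsymbol{x}^{*}-\boldsymbol{x}_{t+1}\rangle=\tfrac12(\|\boldsymbol{x}^{*}-\boldsymbol{x}_t\|^2-\|\boldsymbol{x}^{*}-\boldsymbol{x}_{t+1}\|^2-\|\boldsymbol{x}_{t+1}-\boldsymbol{x}_t\|^2)$. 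I then use the $L_{\gamma}$-smoothness of $F$ to replace $\langle\nabla F(\boldsymbol{x}_t),\boldsymbol{x}_{t+1}-\boldsymbol{x}_t\rangle$ by $F(\boldsymbol{x}_{t+1})-F(\boldsymbol{x}_t)+\tfrac{L_{\gamma}}{2}\|\boldsymbol{x}_{t+1}-\boldsymbol{x}_t\|^2$, so that the coefficient of $\|\boldsymbol{x}_{t+1}-\boldsymbol{x}_t\|^2$ becomes $\tfrac{\eta_tL_{\gamma}-1}{2}\le0$ precisely because the $L_{\gamma}$ summand in the denominator of $\eta_t$ forces $\eta_t\le1/L_{\gamma}$; this term is dropped. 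The stochastic cross term $\langle\boldsymbol{\xi}_t,\boldsymbol{x}_{t+1}-\boldsymbol{x}_t\rangle$ is handled by introducing the noiseless companion iterate $\bar{\boldsymbol{x}}_{t+1}=\mathcal{P}_{\mathcal{C}}(\boldsymbol{x}_t+\eta_t\nabla F(\boldsymbol{x}_t))$, splitting $\boldsymbol{x}_{t+1}-\boldsymbol{x}_t=(\boldsymbol{x}_{t+1}-\bar{\boldsymbol{x}}_{t+1})+(\bar{\boldsymbol{x}}_{t+1}-\boldsymbol{x}_t)$, bounding the first piece via non-expansiveness ($\|\boldsymbol{x}_{t+1}-\bar{\boldsymbol{x}}_{t+1}\|\le\eta_t\|\boldsymbol{\xi}_t\|$) and observing the second has zero conditional mean. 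Taking conditional expectations, using $\mathbb{E}[\widetilde{\nabla}F(\boldsymbol{x}_t)\mid\boldsymbol{x}_t]=\nabla F(\boldsymbol{x}_t)$, invoking the key approximation inequality $\langle\nabla F(\boldsymbol{x}_t),\boldsymbol{x}^{*}-\boldsymbol{x}_t\rangle\ge(1-e^{-\gamma})f(\boldsymbol{x}^{*})-f(\boldsymbol{x}_t)$ from \cref{lemma:3} with the optimal weight of \cref{thm:1}, and dividing by $\eta_t$, yields for each $t$
\[
(1-e^{-\gamma})OPT-f(\boldsymbol{x}_t)\le\frac{\|\boldsymbol{x}^{*}-\boldsymbol{x}_t\|^2-\mathbb{E}\|\boldsymbol{x}^{*}-\boldsymbol{x}_{t+1}\|^2}{2\eta_t}+\mathbb{E}\big[F(\boldsymbol{x}_{t+1})-F(\boldsymbol{x}_t)\big]+\eta_t\sigma_{\gamma}^2 .
\]

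Summing over $t=1,\dots,T-1$ is where the three error sources get their rates. The squared-distance differences are collapsed by Abel summation: since $1/\eta_t$ is increasing and $\|\boldsymbol{x}^{*}-\boldsymbol{x}_t\|^2\le\mathrm{diam}^2(\mathcal{C})$, the whole sum is at most $\mathrm{diam}^2(\mathcal{C})/(2\eta_{T-1})=O(\sqrt{T})$. The surrogate differences telescope cleanly to $F(\boldsymbol{x}_T)-F(\boldsymbol{x}_1)\le F(\boldsymbol{x}_T)$ (using $F\ge0$, which is why the sum stops at $T-1$ so that no phantom iterate $\boldsymbol{x}_{T+1}$ appears), and $\sum_t\eta_t\sigma_{\gamma}^2=O(\sqrt{T})$ because $\eta_t=\Theta(1/\sqrt{t})$. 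The decisive step is converting the leftover $F(\boldsymbol{x}_T)$ back to $f$ through \cref{thm:2}(ii): the bound $F(\boldsymbol{x}_T)\le(1+\ln\tau)(f(\boldsymbol{x}_T)+c)$ produces exactly the term $(1+\ln\tau)\mathbb{E}f(\boldsymbol{x}_T)$, so that $\sum_{t=1}^{T-1}\mathbb{E}f(\boldsymbol{x}_t)+(1+\ln\tau)\mathbb{E}f(\boldsymbol{x}_T)$ is precisely $\triangle\,\mathbb{E}f(\boldsymbol{x}_l)$ under the sampling law $\mathrm{P}(l=t)=\triangle_t/\triangle$ with $\triangle_T=1+\ln\tau$. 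Rearranging gives $(T-1)(1-e^{-\gamma})OPT\le\triangle\,\mathbb{E}f(\boldsymbol{x}_l)+O(\sqrt{T})$, and dividing by $\triangle=T+\ln\tau$, using $\tfrac{T-1}{T+\ln\tau}=1-O(1/T)$ together with $c,\ln\tau=O(1)$, yields the claimed $\big(1-e^{-\gamma}-O(1/T)\big)OPT-O(1/\sqrt{T})$.

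I expect the main obstacle to be the per-step inequality itself: making the smoothness substitution and the step-size cap $\eta_t\le1/L_{\gamma}$ exactly cancel the $\|\boldsymbol{x}_{t+1}-\boldsymbol{x}_t\|^2$ term, and controlling the biased-looking cross term $\langle\boldsymbol{\xi}_t,\boldsymbol{x}_{t+1}-\boldsymbol{x}_t\rangle$ via the companion iterate $\bar{\boldsymbol{x}}_{t+1}$, are the only places where the argument can go wrong. Everything afterward—the Abel summation, the clean telescoping of the $F$-differences, and the matching of the weights $\triangle_t$ to the sampling distribution—is bookkeeping dictated by \cref{thm:2}(ii), and it is exactly this conversion of the last surrogate value to $f$ that accounts for both the $O(1/T)$ loss in the ratio and the special weight on the final iterate.
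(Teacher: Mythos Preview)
Your proposal is correct and follows essentially the same route as the paper's proof: both run the standard smooth stochastic projected-ascent analysis on the surrogate $F$, sum to $T-1$ using Abel summation on the distance terms and telescoping on the $F$-differences, then convert the leftover $F(\boldsymbol{x}_T)$ to $f(\boldsymbol{x}_T)$ via \cref{thm:2}(ii), which produces exactly the weight $\triangle_T=1+\ln\tau$ in the sampling distribution. The only technical difference is in the treatment of the cross term $\langle\boldsymbol{\xi}_t,\boldsymbol{x}_{t+1}-\boldsymbol{x}_t\rangle$: the paper uses Young's inequality with a free parameter $\mu_t$ (then sets $\mu_t=\sigma_\gamma\sqrt{t}/\mathrm{diam}(\mathcal{C})$ so that $\eta_t=1/(\mu_t+L_\gamma)$ kills the $\|\boldsymbol{x}_{t+1}-\boldsymbol{x}_t\|^2$ coefficient exactly), whereas you use the companion noiseless iterate $\bar{\boldsymbol{x}}_{t+1}$ and non-expansiveness of the projection; both are standard and give the same $O(\sqrt{T})$ variance contribution.
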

\begin{remark}
\cref{thm:4} shows that after $O(1/\epsilon^{2})$ iterations, the boosting stochastic gradient ascent achieves $(1-1/e-\epsilon^{2})OPT-\epsilon$, which efficiently improves the $(1/2)$-approximation guarantee of classical stochastic gradient ascent~\citep{hassani2017gradient} for continuous DR-submodular maximization. Moreover, we highlight that the overall gradient complexity is $O(1/\epsilon^2)$ which is optimal~\citep{hassani2020stochastic} under the stochastic setting.
\end{remark}


\subsection{Online Boosting Delayed Gradient Ascent}\label{sec:delay}

In this section, we consider the online setting with delayed feedback. To begin, recall the process of classical online optimization. In round $t$, after picking an action $\boldsymbol{x}_{t}\in\mathcal{C}$, the environment~(adversary) gives a utility $f_{t}(\boldsymbol{x}_{t})$ and permits the access to the stochastic gradient of $f_{t}$.
The objective is to minimize the $\alpha$-regret for $T$ planned rounds.
Then, we turn to the (adversarial) feedback delays phenomenon \citep{quanrud2015online} in our online stochastic submodular maximization problem. That is, instead of the prompt feedback, the information about the stochastic gradient of $f_{t}$ could be delivered at the end of round $(t+d_{t}-1)$, where $d_{t}\in\mathbb{Z}_{+}$ is a positive integer delay for round $t$. For instance, the standard online setting sets all $d_{t}=1$~\citep{hazan2019introduction}. 

Next, we introduce some useful notations. We denote the feedback given at the end of round $t$ as $\mathcal{F}_{t}=\{u\in[T]: u+d_{u}-1=t\}$ and $D=\sum_{t=1}^{T}d_{t}$. Hence, at the end of round $t$, we only have access to the stochastic gradients of past $f_{s}$ where $s\in\mathcal{F}_{t}$. 

To improve the state-of-the-art $1/2$ approximation ratio of online gradient ascent and tackle the adversarial delays simultaneously, we employ the online delayed gradient algorithm~\citep{quanrud2015online} in the Meta boosting protocol, in which we utilize the stochastic gradient of the non-oblivious function $F$. 
As shown in \cref{alg:2}, at each round $t$, after querying the stochastic gradient $\widetilde{\nabla}F_{t}(\boldsymbol{x}_{t})$, we apply the received stochastic gradients feedback $\widetilde{\nabla}F_{s}(\boldsymbol{x}_{s})$ $( s\in\mathcal{F}_{t})$ in a standard projection gradient step to update $\boldsymbol{x}_{t}$.

\begin{algorithm}[t]
	\caption{Online Boosting Delayed Gradient Ascent}\label{alg:2}
	\hspace*{0.02in} {\bf Input:} $T$, $\eta$, $\gamma$\\
	\hspace*{0.02in} {\bf Output:} $\boldsymbol{x}_{1},\dots,\boldsymbol{x}_{T}$
	\begin{algorithmic}[1]
		\STATE \textbf{Initialize:} any  $\boldsymbol{x}_{1}\in\mathcal{C}$.
		\FOR{$t\in [T]$}
		\STATE Play $\boldsymbol{x}_{t}$, then observe reward $f_{t}(\boldsymbol{x}_{t})$
		\STATE Sample $z_{t}$ according to \cref{alg:framework} and Query $\widetilde{\nabla}F_t(\boldsymbol{x}_{t})=\frac{1-e^{-\gamma}}{\gamma}\widetilde{\nabla}f_{t}(z_{t}*\boldsymbol{x}_{t})$
		\STATE Receive feedback $\widetilde{\nabla}F_s(\x_s)$, where $s\in \mathcal{F}_t$
		\STATE $\boldsymbol{y}_{t+1}=\boldsymbol{x}_{t}+\eta\sum_{s\in\mathcal{F}_{t}}\widetilde{\nabla}F_s(\boldsymbol{x}_{s})$
		\STATE $\boldsymbol{x}_{t+1}=\arg\min_{\boldsymbol{z}\in\mathcal{C}}\left\|\boldsymbol{z}-\boldsymbol{y}_{t+1}\right\|$
		\ENDFOR
	\end{algorithmic}
\end{algorithm}

We provide the regret bound of Algorithm~\ref{alg:2}.
\begin{theorem}[Proof in \cref{Appendix:D}]\label{thm:5}
Assume $\mathcal{C}\subseteq\mathcal{X}$ is a bounded convex set and each $f_{t}$ is monotone, differentiable, and weakly DR-submodular with $\gamma$. Meanwhile, the gradient oracle is unbiased $\mathbb{E}(\widetilde{\nabla}f_{t}(\boldsymbol{x})|\boldsymbol{x})=\nabla f_{t}(\boldsymbol{x})$ and $\max_{t\in[T]}(\|\widetilde{\nabla}F_{t}(\boldsymbol{x}_{t})\|)=\frac{1-e^{-\gamma}}{\gamma}\max_{t\in[T]}(\|\widetilde{\nabla}f_{t}(\boldsymbol{x}_{t})\|)$ . Let $\eta=\frac{diam(\mathcal{C})}{\max_{t\in[T]}(\left\|\widetilde{\nabla}F_{t}(\boldsymbol{x}_{t})\right\|)\sqrt{D}}$ in Algorithm~\ref{alg:2}, then we have
\begin{align*}
  (1-e^{-\gamma})\max_{\boldsymbol{x}\in\mathcal{C}}\sum_{t=1}^{T}f_{t}(\boldsymbol{x})-\mathbb{E}(\sum_{t=1}^{T}f_{t}(\boldsymbol{x}_{t}))=O(\sqrt{D}),  
\end{align*}
where $D=\sum_{i=1}^{T}d_{t}$ and $d_{t}\in\mathbb{Z}_{+}$ is a positive delay for the information about $f_{t}$.
\end{theorem}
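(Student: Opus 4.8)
The plan is to reduce the $(1-e^{-\gamma})$-regret to a purely \emph{linear} regret on the non-oblivious gradients, and then to run a delayed online gradient analysis in the spirit of \citet{quanrud2015online} on the surrogate gradients. Write $\boldsymbol{x}^{*}=\arg\max_{\boldsymbol{x}\in\mathcal{C}}\sum_{t=1}^{T}f_{t}(\boldsymbol{x})$, $g_{t}=\widetilde{\nabla}F_{t}(\boldsymbol{x}_{t})$, and let $\tau(s)=s+d_{s}-1$ be the round at which $g_{s}$ is actually applied, so $\mathcal{F}_{t}=\{s:\tau(s)=t\}$. Applying the boosting inequality following from \cref{thm:1} to each $f_{t}$ (admissible by Assumption~\ref{assumption1}) at $\boldsymbol{x}=\boldsymbol{x}_{t}$, $\boldsymbol{y}=\boldsymbol{x}^{*}$ gives $\langle \boldsymbol{x}^{*}-\boldsymbol{x}_{t},\nabla F_{t}(\boldsymbol{x}_{t})\rangle\ge(1-e^{-\gamma})f_{t}(\boldsymbol{x}^{*})-f_{t}(\boldsymbol{x}_{t})$, so summing over $t$ bounds the $(1-e^{-\gamma})$-regret by $\sum_{t}\langle\boldsymbol{x}^{*}-\boldsymbol{x}_{t},\nabla F_{t}(\boldsymbol{x}_{t})\rangle$. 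Since $z_{t}$ is drawn fresh and independently of $\boldsymbol{x}_{t}$, \cref{prop:1}(i) yields $\mathbb{E}[g_{t}\mid\boldsymbol{x}_{t}]=\nabla F_{t}(\boldsymbol{x}_{t})$, so in expectation it suffices to control the linear quantity $\mathbb{E}\sum_{t}\langle\boldsymbol{x}^{*}-\boldsymbol{x}_{t},g_{t}\rangle$ for the surrogate gradients.

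For that quantity I would run the standard projection/potential argument. Using $\boldsymbol{x}_{t+1}=\mathcal{P}_{\mathcal{C}}(\boldsymbol{x}_{t}+\eta\sum_{s\in\mathcal{F}_{t}}g_{s})$, the non-expansiveness of $\mathcal{P}_{\mathcal{C}}$ together with $\boldsymbol{x}^{*}\in\mathcal{C}$ gives, after expanding the square and telescoping $\|\boldsymbol{x}_{t}-\boldsymbol{x}^{*}\|^{2}$,
\begin{align*}
\sum_{s=1}^{T}\langle\boldsymbol{x}^{*}-\boldsymbol{x}_{\tau(s)},g_{s}\rangle\le\frac{\mathrm{diam}(\mathcal{C})^{2}}{2\eta}+\frac{\eta}{2}\sum_{t=1}^{T}\Big\|\sum_{s\in\mathcal{F}_{t}}g_{s}\Big\|^{2}.
\end{align*}
The crux is that each $g_{s}$ is here paired with $\boldsymbol{x}_{\tau(s)}$, the iterate at its \emph{arrival} round, rather than with $\boldsymbol{x}_{s}$. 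I would therefore split $\langle\boldsymbol{x}^{*}-\boldsymbol{x}_{s},g_{s}\rangle=\langle\boldsymbol{x}^{*}-\boldsymbol{x}_{\tau(s)},g_{s}\rangle+\langle\boldsymbol{x}_{\tau(s)}-\boldsymbol{x}_{s},g_{s}\rangle$ and bound the drift term $\langle\boldsymbol{x}_{\tau(s)}-\boldsymbol{x}_{s},g_{s}\rangle$ by telescoping the step lengths $\|\boldsymbol{x}_{j+1}-\boldsymbol{x}_{j}\|\le\eta\|\sum_{u\in\mathcal{F}_{j}}g_{u}\|\le\eta G|\mathcal{F}_{j}|$ over the window $j\in\{s,\dots,\tau(s)-1\}$, where $G=\max_{t}\|g_{t}\|$.

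The main obstacle is to show that both the second-order term and the drift term are $O(\eta G^{2}D)$ rather than merely $O(\eta G^{2}T^{2})$; this is exactly where the \emph{structure} of the delays, not just their worst-case size, must be exploited. For the second-order term I would observe that the gradients arriving at a common round $t$ have distinct delays $d_{s}=t-s+1$, so $\sum_{s\in\mathcal{F}_{t}}d_{s}\ge\binom{|\mathcal{F}_{t}|}{2}$ and hence $\sum_{t}\big\|\sum_{s\in\mathcal{F}_{t}}g_{s}\big\|^{2}\le G^{2}\sum_{t}|\mathcal{F}_{t}|^{2}=G^{2}\big(\sum_t|\mathcal{F}_t|+2\sum_{t}\binom{|\mathcal{F}_{t}|}{2}\big)\le G^{2}(T+2D)$. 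For the drift term, the total penalty is $\eta G^{2}P$ with $P=|\{(s,u):s\le\tau(u)<\tau(s)\}|$; splitting on whether $u\ge s$ or $u<s$, the first part is at most $\sum_{s}|\{u:s\le u\le\tau(s)\}|=\sum_{s}d_{s}=D$ (an inversion bound, using that $\tau(u)<\tau(s)$ forces $u\le\tau(s)$) and the second is at most $\sum_{u}|\{s:u<s\le\tau(u)\}|=\sum_{u}(d_{u}-1)\le D$, so $P\le 2D$.

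Combining the three displays gives $\mathbb{E}\sum_{t}\langle\boldsymbol{x}^{*}-\boldsymbol{x}_{t},g_{t}\rangle\le \frac{\mathrm{diam}(\mathcal{C})^{2}}{2\eta}+O(\eta G^{2}D)$ pathwise, and with the prescribed $\eta=\mathrm{diam}(\mathcal{C})/(G\sqrt{D})$ this becomes $O(\mathrm{diam}(\mathcal{C})\,G\sqrt{D})=O(\sqrt{D})$; chaining with the first paragraph proves the theorem. I expect the delicate combinatorial accounting that turns overlapping in-flight intervals into a clean multiple of $D$ — the two $O(D)$ bounds above — to be the step requiring the most care, while the reduction via \cref{thm:1} and the telescoping are routine.
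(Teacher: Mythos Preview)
Your proof is correct and follows the same high-level strategy as the paper: reduce the $(1-e^{-\gamma})$-regret to a linear regret on the surrogate gradients via the boosting inequality of \cref{thm:1}, then run a delayed-OGD potential analysis with combinatorial counting of in-flight gradients in the spirit of \citet{quanrud2015online}. The one technical difference is in how the potential is unfolded. The paper introduces \emph{partial updates} $\boldsymbol{x}_{t+1,s}$ within each round $t$ (one virtual step per arriving gradient), which makes the second-order term simply $\eta^{2}\sum_{s\in\mathcal{F}_{t}}\|g_{s}\|^{2}$ and hence $\sum_{t}|\mathcal{F}_{t}|\le T$, at the price of an extra ``within-round'' piece $|\mathcal{F}_{t,s}|$ in the drift. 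You instead work at round granularity, which keeps the drift purely between rounds but leaves the second-order term as $\big\|\sum_{s\in\mathcal{F}_{t}}g_{s}\big\|^{2}\le G^{2}|\mathcal{F}_{t}|^{2}$; your distinct-delay observation $\sum_{t}\binom{|\mathcal{F}_{t}|}{2}\le\sum_{t}\sum_{s\in\mathcal{F}_{t}}d_{s}\le D$ is then exactly what is needed to tame this. Both routes give the same $O(D)$ accounting and the same final $O(\sqrt{D})$ bound; the paper's partial-update trick trades a slightly longer drift path for individual squares, while your version trades a simpler drift for a square-of-sum that you dispatch with one extra combinatorial lemma.
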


\begin{remark}
When no delay exists, i.e., $d_{t}=1$ for all $t$, \cref{thm:5} says that the online boosting gradient ascent achieves a ($1-e^{-\gamma}$)-regret of $O(\sqrt{T})$. 
To the best of our knowledge, this is the first result achieving a $(1-e^{-\gamma})$-regret of $O(\sqrt{T})$ with $O(1)$ stochastic gradient queries for each submodular function $f_{t}$. 
\end{remark}
\begin{remark}
Under the delays of stochastic gradients, \cref{thm:5} gives the first regret analysis for the online stochastic submodular maximization problem. 
It is worth mentioning that the $(1-e^{-\gamma})$-regret of $O(\sqrt{D})$ result not only achieves the optimal $(1-e^{-\gamma})$ approximation ratio, but also matches the $O(\sqrt{D})$ regret of online convex optimization with adversarial delays~\citep{quanrud2015online}.
\end{remark}

\section{Numerical Experiments}\label{sec:experiments}
In this section, we empirically evaluate our proposed boosting algorithms in both offline and online settings by adopting continuous DR-submodular objective functions ($\gamma=1$).


\subsection{Offline Settings}
We first consider offline DR-submodular maximization problems and compare the following algorithms:\\
\noindent\textbf{Boosting Gradient Ascent~(BGA($B$))}: In the frame   work of Algorithm~\ref{alg:1}, we use the average of $B$ independent stochastic gradients to estimate $\nabla F(x)$ in every iteration.\\
\noindent\textbf{Gradient Ascent~(GA)}: We consider Algorithm 1 in \citet{hassani2017gradient} with step size $\eta_{t}=1/\sqrt{t}$.\\
\noindent\textbf{Continuous Greedy~(CG)}: Algorithm 1 in \cite{bian2017guaranteed}.\\
\noindent\textbf{Stochastic Continuous Greedy~(SCG)}: Algorithm 1 in \citet{mokhtari2018conditional} with $\rho_{t}=1/(t+3)^{2/3}$.\\
\noindent\textbf{Stochastic Continuous Greedy++~(SCG++)}: We consider Algorithm 4.1 in \citet{hassani2020stochastic} where we set the minibatch size $|\mathcal{M}_{0}|=T^{2}$ and  $|\mathcal{M}|=T$ for $T$-round iterations.

\begin{figure*}[t]
\vspace{-1.0em}
\centering
\subfigure[Special Case \label{graph1}]{\includegraphics[width=0.32\linewidth]{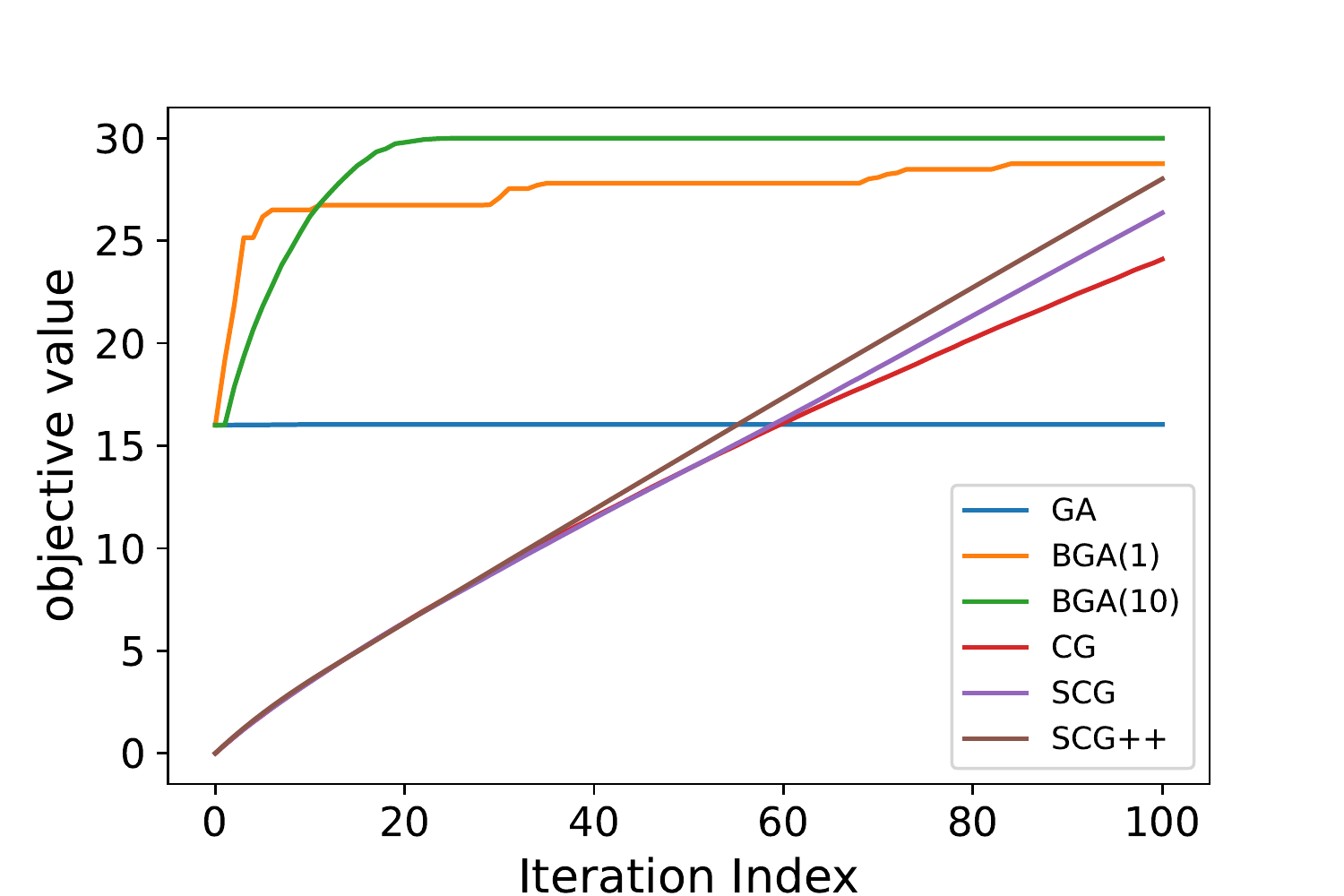}}
\subfigure[Special Case~(origin)\label{graph2}]{\includegraphics[width=0.32\linewidth]{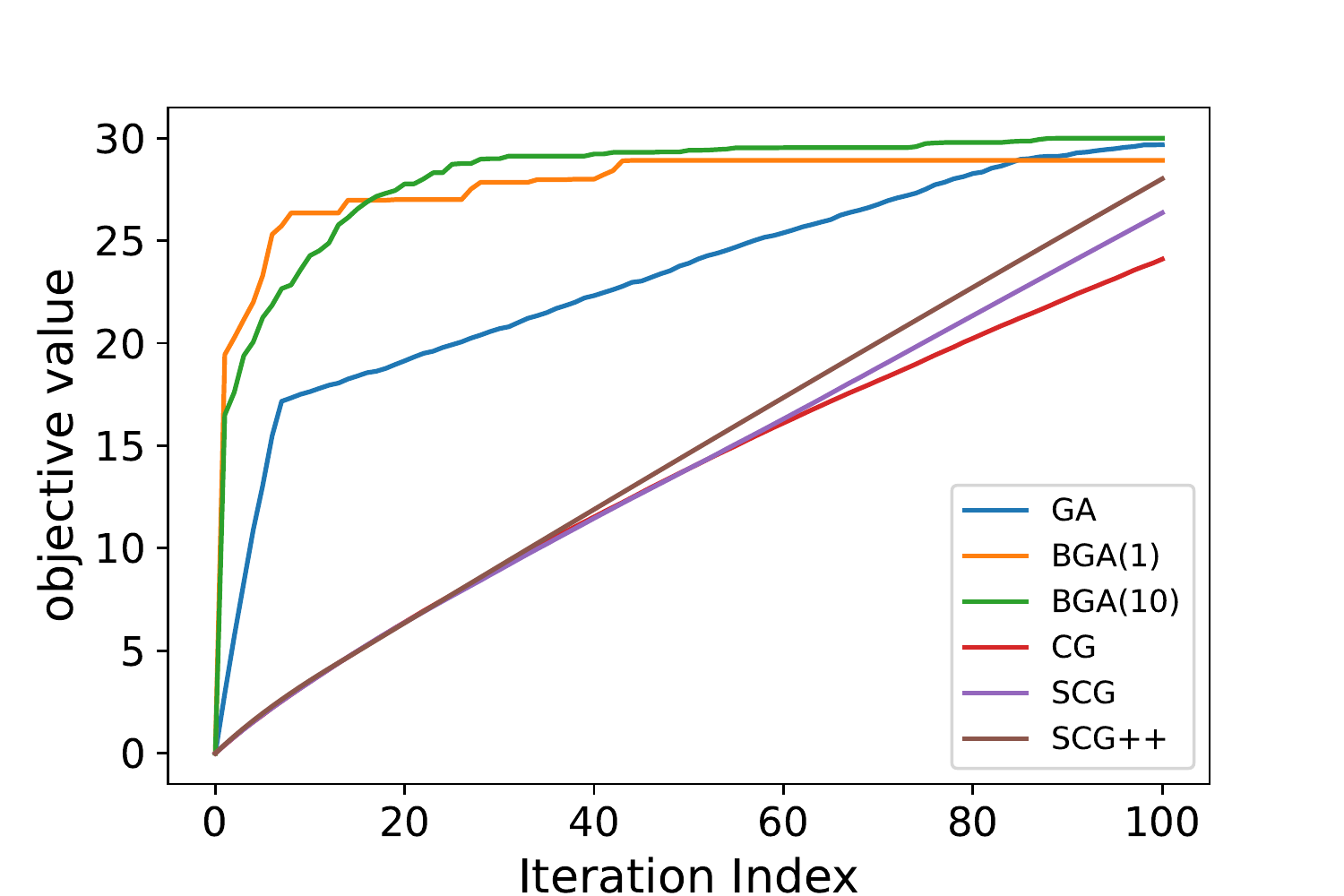}}
\subfigure[Offline QP\label{graph3}]{\includegraphics[width=0.32\linewidth]{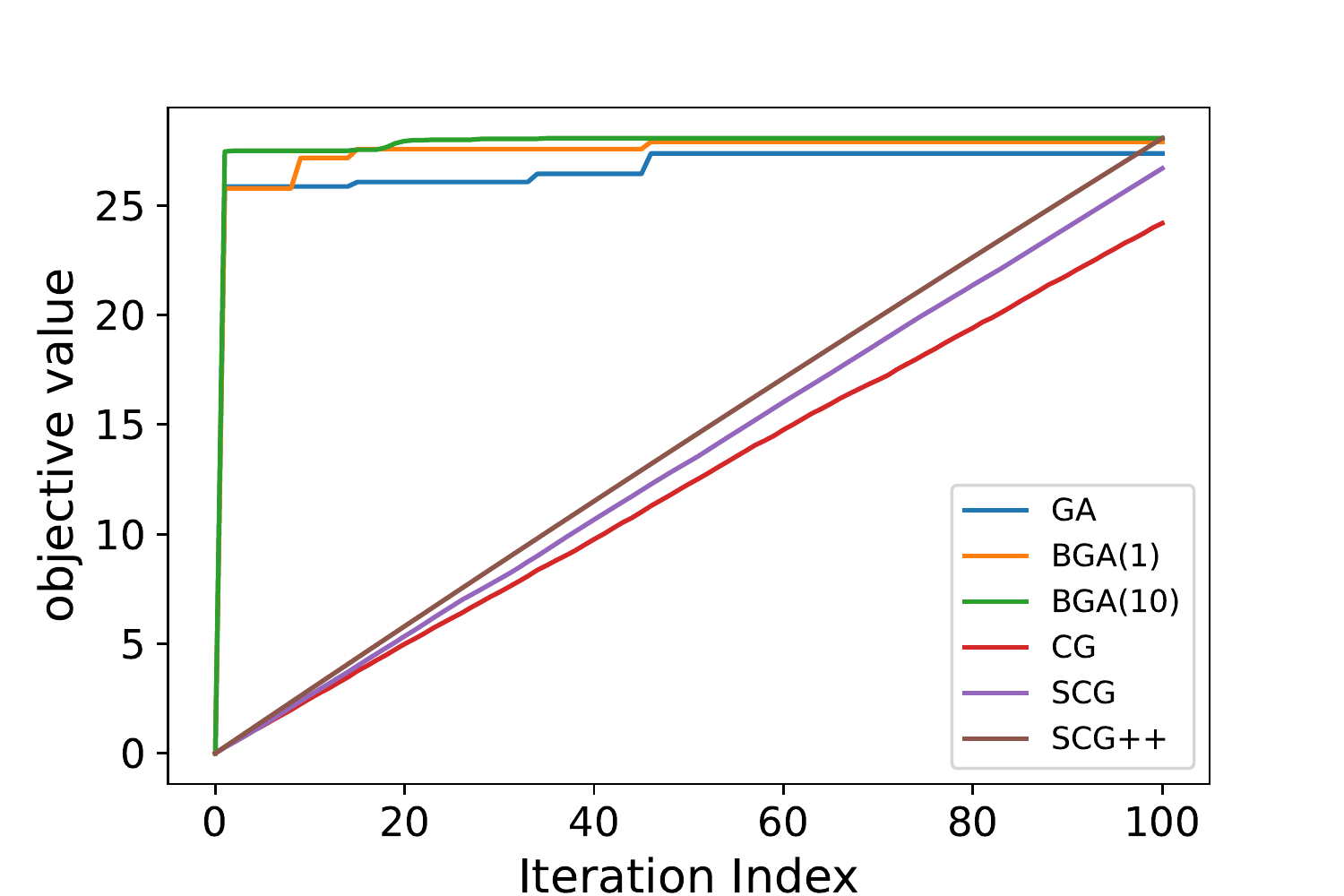}}
\subfigure[Online QP with feedback delays \label{graph4}]{\includegraphics[width=0.35\linewidth]{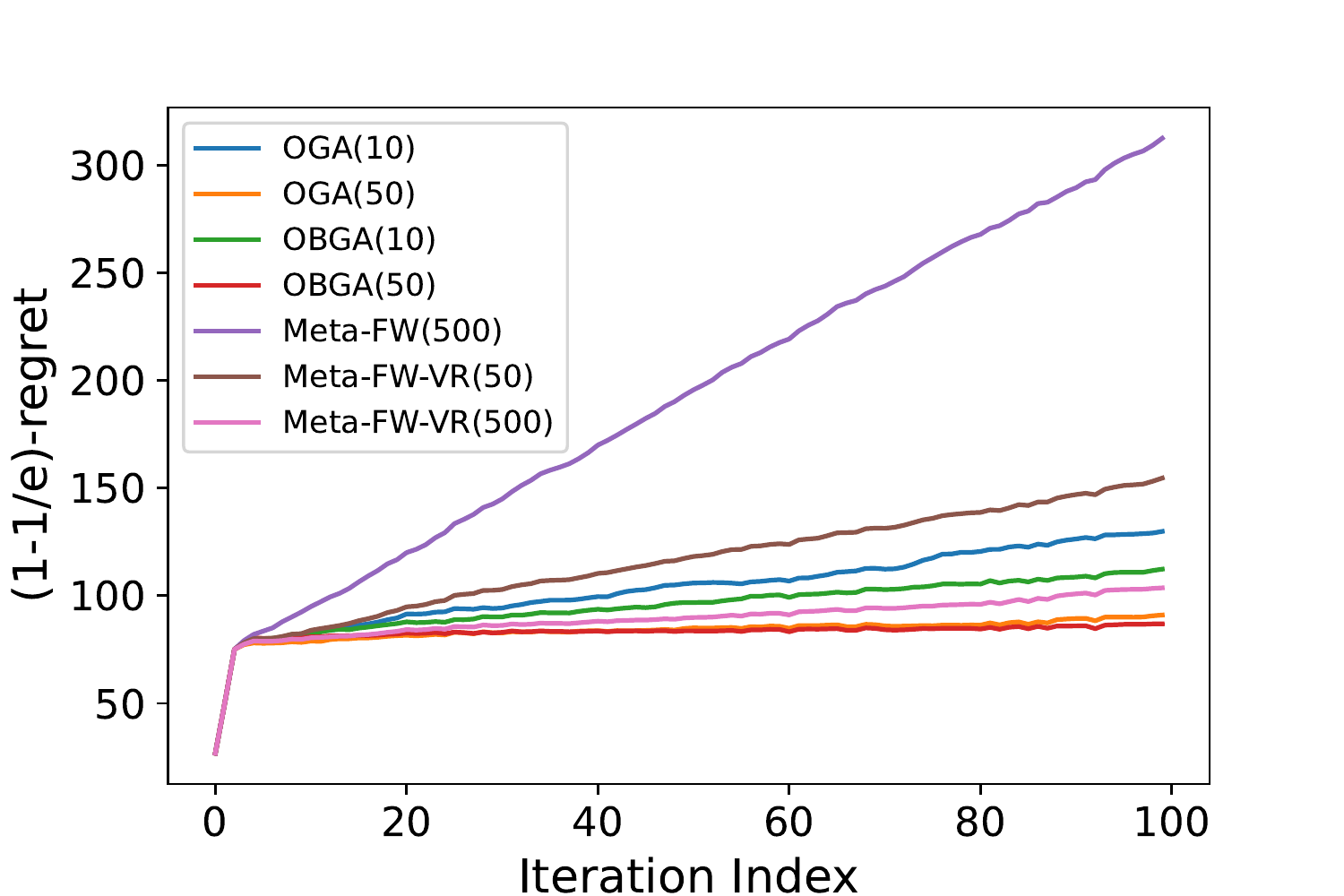}}
\subfigure[Online QP without feedback delays\label{graph5}]{\includegraphics[width=0.35\linewidth]{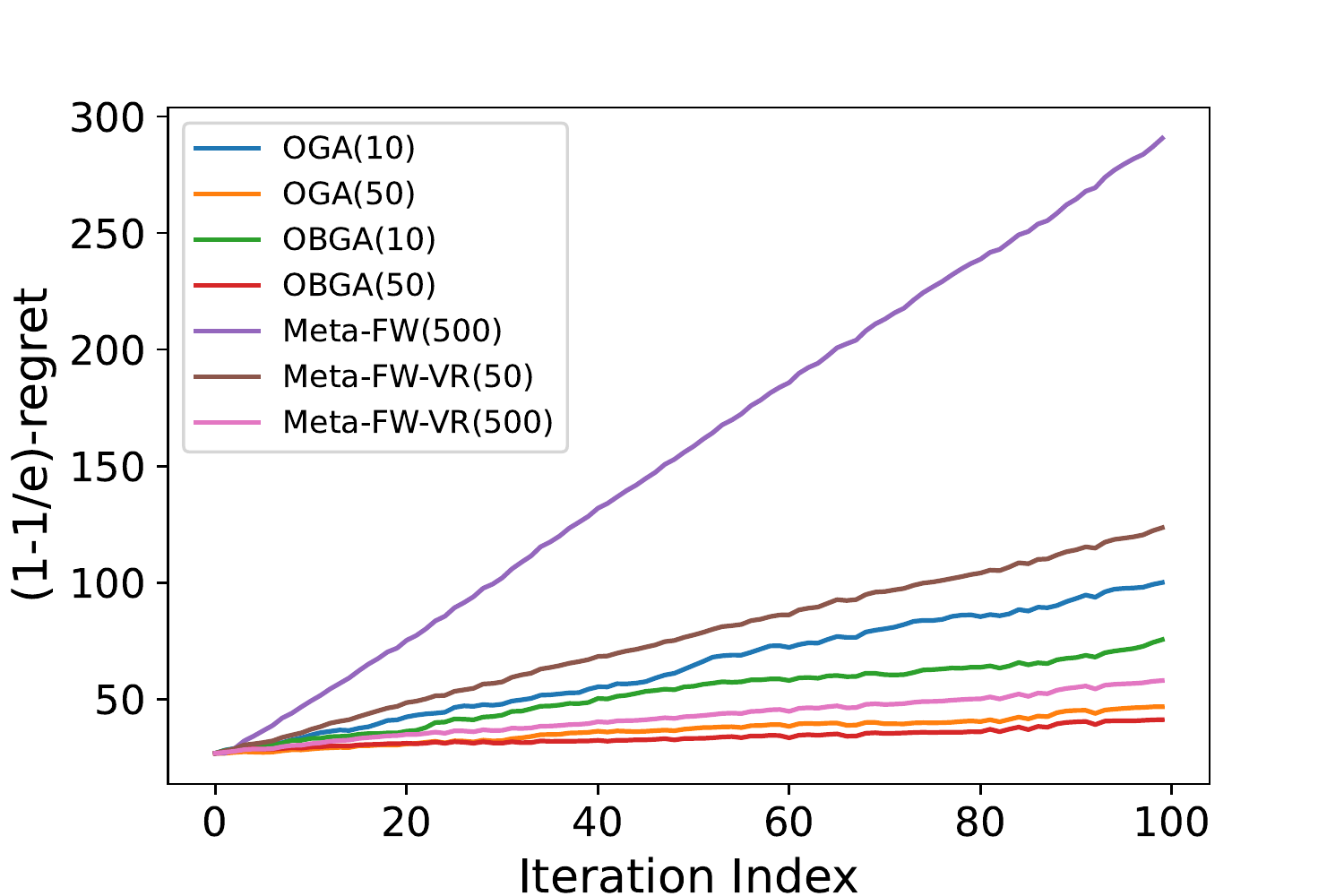}}
\caption{In ~\cref{graph1}, we test the performance of the six algorithms for the special submodular function in \cite{hassani2017gradient} where the GA, BGA and BGA(10) start from $\boldsymbol{x}_{loc}$. Simultaneously, we present the results for all algorithm starting from the origin in ~\cref{graph2}. ~\cref{graph3} show the performance of the algorithms versus the number of iterations in a simulated Non-convex/Non-concave submodular QP. Finally, ~\cref{graph4} and ~\cref{graph5} show the $(1-1/e)$-regret of the seven algorithms, including OGA(10), OGA(50), OBGA(10), OBGA(50), Meta-FW(500), Meta-FW-VR(50), and Meta-FW-VR(500), for the simulated online submodular QP in both delayed setting and standard online setting.}
\vspace{-1.0em}
\end{figure*}

\subsubsection{Special Case}
\citet{hassani2017gradient} introduced a special continuous DR-submodular function $f_{k}$ coming from the multilinear extension of a set cover function. Here,  
$f_{k}(\boldsymbol{x})=k+1-(1-x_{2k+1})\prod_{i=1}^{k}(1-x_{i})-(1-x_{2k+1})(k-\sum_{i=1}^{k}x_{i})+\sum_{i=k+1}^{2k}x_{i}$, where $\boldsymbol{x}=(x_1,x_2,\dots,x_{2k+1})$.
Under the domain $\mathcal{C}=\{\boldsymbol{x}\in[0,1]^{2k+1}: \sum_{i=1}^{2k+1}x_{i}=k\}$, \citet{hassani2017gradient} also
verified that $\boldsymbol{x}_{loc}=(\overbrace{1,1,\dots,1}^{k},0,\dots,0)$ is a local maximum with $(1/2+1/(2k))$-approximation to the global maximum. Thus, if start at $\boldsymbol{x}_{loc}$, theoretically Gradient Ascent \citep{hassani2017gradient} will get stuck at this local maximum point. In our experiment, we set $k=15$ and consider a standard Gaussian noise, i.e., $[\widetilde{\nabla}f(x)]_{i}=[\nabla f(x)]_{i}+\mathcal{N}(0,1)$ for any $i\in[2k]$.  

First, we set the initial point of GA, BGA(1) and BGA(10) to be $\boldsymbol{x}_{loc}$. From Figure~\ref{graph1}, we observe that GA stays at $\boldsymbol{x}_{loc}$ as expected. Instead, BGA(1) and BGA(10) escape the local maximum $\boldsymbol{x}_{loc}$ and achieve near-optimal objective values. Then, we run all algorithms from the origin and present the results in Figure~\ref{graph2}. It shows that GA, starting from the origin, performs much better than from a local maximum. Compared with GA, BGA(1) and BGA(10) converge to the optimal point $\boldsymbol{x}^{*}=(0,\dots,0,\overbrace{1,1,\dots,1}^{k+1})$ more rapidly. Both \cref{graph1} and \cref{graph2} show that BGA(1) and BGA(10) also perform better than Frank-Wolfe-type algorithms with respect to the convergence rate and the objective value. 

\subsubsection{Non-Convex/Non-Concave Quadratic Programming}\label{sec:qp}
We consider the quadratic objective $f(\boldsymbol{x}) = \frac{1}{2}\boldsymbol{x}^{T}\boldsymbol{H}\boldsymbol{x} + \boldsymbol{h}^{T}\boldsymbol{x}$ and constraints $P=\{\boldsymbol{x}\in \mathbb{R}^{n}_{+} | \boldsymbol{A}\boldsymbol{x}\le\boldsymbol{b}, \boldsymbol{0}\le\boldsymbol{x}\le\boldsymbol{u}, \boldsymbol{A}\in \mathbb{R}^{m\times n}_{+}, \boldsymbol{b}\in \mathbb{R}_{+}^{m}\}$. Following \cite{bian2017guaranteed}, we choose the matrix $\boldsymbol{H}\in \mathbb{R}^{n\times n}$ to be a randomly generated symmetric matrix with entries uniformly distributed in $[-1,0]$, and the matrix $\boldsymbol{A}$ to be a random matrix with entries uniformly distributed in $[0,1]$. 
It can be verified that $f$ is a continuous DR-submodular function. We also set $\boldsymbol{b}=\boldsymbol{u}=\boldsymbol{1}$, $m=12$, and $n=25$. To ensure the monotonicity, we set $\boldsymbol{h}=-\boldsymbol{H}^{T}\boldsymbol{u}$. Thus, the objective becomes $f(x)=(\frac{1}{2}\boldsymbol{x}-\boldsymbol{u})^{T}\boldsymbol{H}\boldsymbol{x}$. Similarly, we also consider the Gaussian noise for gradient, i.e., $[\widetilde{\nabla}f(\boldsymbol{x})]_{i}=[\nabla f(\boldsymbol{x})]_{i}+\delta\mathcal{N}(0,1)$ for any $i\in[n]$. We consider $\delta=5$ and start all algorithms from the origin.

As shown in Figure~\ref{graph3}, BGA(1) and BGA(10) converge faster than GA and achieve nearly the same objective values as GA after $100$ iterations. Similar to the previous experiment, BGA(1) and BGA(10) exceed Frank-Wolfe-type algorithms with respect to the convergence rate.

\subsection{Online Settings}
We also consider Online DR-submodular Maximization with/without adversarial delays. Here, we present a list of algorithms to be compared in these settings:\\
\noindent\textbf{Meta-Frank-Wolfe~(Meta-FW($K$))}: We consider Algorithm 1 in \cite{chen2018online} and initialize $K$ online gradient descent oracles~\citep{zinkevich2003online,hazan2019introduction} with step size $1/\sqrt{T}$.\\
\noindent\textbf{Stochastic Meta-Frank-Wolfe~(Meta-FW-VR($K$))}: We consider Algorithm 1 in \citep{chen2018projection} with the $\rho_{t}=1/(t+3)^{2/3}$ and $K$ online gradient descent oracles with step size $1/\sqrt{T}$.\\
\noindent\textbf{Online Gradient Ascent~(OGA($B$))}: The delayed gradient ascent algorithm in \citep{quanrud2015online} with step size $1/\sqrt{T}$. We use $B$ independent samples to estimate $\nabla f_{t}(\boldsymbol{x}_{t})$ at each round.\\
\noindent\textbf{Online Boosting Gradient Ascent~(OBGA($B$))}:  We consider Algorithm~\ref{alg:2} with the step size $\eta_{t}=1/\sqrt{T}$ and use the average of $B$ independent samples to estimate the gradient at each round.

The same as \cref{sec:qp}, we first generate $T=100$ quadratic objective functions $f_{1}, f_2, \dots, f_{T}$. The symmetric random matrix $H_{t}$, corresponding to $f_t$, is uniformly generated from $[-1,0]^{n\times n}$ for $t=1,\ldots, T$, and the matrix $\boldsymbol{A}$ in constraint is randomly generated from the uniform distribution in $[0,1]^{m\times n}$. We also add the Gaussian noise for the gradient of each $f_{t}$, i.e., $[\widetilde{\nabla}f_t(\boldsymbol{x})]_{i}=[\nabla f_t(\boldsymbol{x})]_{i}+\delta\mathcal{N}(0,1)$ with $\delta=5$ for any $i\in[n]$. To simulate the feedback delays, we generate a uniform random number $d_{t}$ from $\{1,2,3,4,5\}$ for the stochastic gradient information of $f_{t}$.  

We present the $(1-1/e)$-regret of algorithms for the delayed setting and the standard online setting~\citep{hazan2019introduction} in \cref{graph4} and \cref{graph5}, respectively. Under both scenarios, our proposed OBGA with sample size $B=50$ exhibits the lowest regret among all algorithms. With the same sample size $B=10$ and $50$, OBGA consistently achieves lower regrets than OGA, which confirms the effectiveness of our boosting framework.


\section{Conclusion}
In this paper, based on a novel non-oblivious function, we present a boosting framework, covering boosting gradient ascent and online boosting delayed gradient ascent, for the stochastic continuous submodular maximization problem, under both offline and online settings. 
In the offline scenario, our boosting gradient ascent provides $(1-e^{-\gamma}-\epsilon^{2})$-approximation guarantees after $O(1/\epsilon^{2})$ iterations.
Under the online setting, we are the first to consider delayed feedback for online submodular maximization problems.
Moreover, when no delay exists, our online boosting delayed gradient ascent is the first result to guarantee $(1-e^{-\gamma})$-approximation with $O(\sqrt{T})$ regret, where at each round we only estimate stochastic gradient $O(1)$ times. Numerical experiments demonstrate the superior performance of our algorithms.

\section*{Acknowledgements}
The authors would like to thank the anonymous reviewers for their helpful comments. Zhang and Yang's research is supported in part by the Hong Kong Research Grants Council (ECS 21214720), City University of Hong Kong (Project 9610465) and Alibaba Group through Alibaba Innovative Research (AIR) Program.

\bibliography{references}

\appendix
\section{Proofs in Section~\ref{sec:non-oblivious}}\label{Appendix:A}
\subsection{Proof of \cref{lemma:2}} \label{proof:lem1}
First, we review some basic inequalities for $\gamma$-weakly continuous DR-submodular function $f$.
\begin{lemma}\label{lemma:a1}
For a monotone, differentiable, and $\gamma$-weakly continuous DR-submodular function $f$, we have
\begin{enumerate}
\item For any $\boldsymbol{x}\le\boldsymbol{y}$, we have $\langle\boldsymbol{y}-\boldsymbol{x}, \nabla f(\boldsymbol{x})\rangle\ge \gamma(f(\boldsymbol{y})-f(\boldsymbol{x}))$ and $\langle \boldsymbol{y}-\boldsymbol{x}, \nabla f(\boldsymbol{y})\rangle\le\frac{1}{\gamma}(f(\boldsymbol{y})-f(\boldsymbol{x}))$.
\item For any $\boldsymbol{x},\boldsymbol{y}\in\mathcal{X}$, we also could derive $ \langle \boldsymbol{y}-\boldsymbol{x}, \nabla f(\boldsymbol{x})\rangle\ge\gamma f(\boldsymbol{x}\lor \boldsymbol{y})+\frac{1}{\gamma}f(\boldsymbol{x}\land \boldsymbol{y} )-(\gamma+\frac{1}{\gamma})f(\boldsymbol{x})$.
\end{enumerate}\end{lemma}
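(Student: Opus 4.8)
The plan is to establish the two parts in order, obtaining part~2 as a consequence of part~1. The single structural fact I will use throughout is a pointwise comparison of gradients implied by the hypotheses: for any comparable pair $\boldsymbol{u}\le\boldsymbol{v}$ in $\mathcal{X}$ one has $\nabla f(\boldsymbol{u})\ge\gamma\,\nabla f(\boldsymbol{v})\ge\boldsymbol{0}$ componentwise. Here monotonicity of $f$ forces every partial derivative to be nonnegative, and the definition $\gamma=\inf_{\boldsymbol{x}\le\boldsymbol{y}}\inf_{i}[\nabla f(\boldsymbol{x})]_i/[\nabla f(\boldsymbol{y})]_i$ supplies the factor-$\gamma$ comparison (this is exactly the constraint $\nabla f(\boldsymbol{x}_1)\ge\gamma\nabla f(\boldsymbol{y}_1)\ge\boldsymbol{0}$ appearing in problem~\eqref{frop}). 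Since this is a componentwise inequality between nonnegative vectors, taking an inner product against any nonnegative vector preserves its direction, and this is the mechanism I will exploit.

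For part~1, I fix $\boldsymbol{x}\le\boldsymbol{y}$ and consider the scalar function $g(t)=f\big(\boldsymbol{x}+t(\boldsymbol{y}-\boldsymbol{x})\big)$ on $[0,1]$; because $\mathcal{X}$ is a box (hence convex) the whole segment stays in $\mathcal{X}$, and differentiability of $f$ lets me write, via the fundamental theorem of calculus, $f(\boldsymbol{y})-f(\boldsymbol{x})=\int_{0}^{1}\langle\boldsymbol{y}-\boldsymbol{x},\nabla f(\boldsymbol{x}+t(\boldsymbol{y}-\boldsymbol{x}))\rangle\,\mathrm{d}t$. Each intermediate point $\boldsymbol{p}_t=\boldsymbol{x}+t(\boldsymbol{y}-\boldsymbol{x})$ satisfies $\boldsymbol{x}\le\boldsymbol{p}_t\le\boldsymbol{y}$, and $\boldsymbol{y}-\boldsymbol{x}\ge\boldsymbol{0}$. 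Applying the gradient comparison at the lower endpoint gives $\langle\boldsymbol{y}-\boldsymbol{x},\nabla f(\boldsymbol{p}_t)\rangle\le\frac{1}{\gamma}\langle\boldsymbol{y}-\boldsymbol{x},\nabla f(\boldsymbol{x})\rangle$, and at the upper endpoint gives $\langle\boldsymbol{y}-\boldsymbol{x},\nabla f(\boldsymbol{p}_t)\rangle\ge\gamma\langle\boldsymbol{y}-\boldsymbol{x},\nabla f(\boldsymbol{y})\rangle$. Integrating these two pointwise bounds over $t\in[0,1]$ sandwiches $f(\boldsymbol{y})-f(\boldsymbol{x})$ and, after rearranging, yields the two inequalities $\langle\boldsymbol{y}-\boldsymbol{x},\nabla f(\boldsymbol{x})\rangle\ge\gamma(f(\boldsymbol{y})-f(\boldsymbol{x}))$ and $\langle\boldsymbol{y}-\boldsymbol{x},\nabla f(\boldsymbol{y})\rangle\le\frac{1}{\gamma}(f(\boldsymbol{y})-f(\boldsymbol{x}))$.

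For part~2, I would split the displacement into its positive and negative parts, $\boldsymbol{y}-\boldsymbol{x}=(\boldsymbol{y}-\boldsymbol{x})_{+}-(\boldsymbol{y}-\boldsymbol{x})_{-}$ with $(\boldsymbol{y}-\boldsymbol{x})_{+}=\max(\boldsymbol{y}-\boldsymbol{x},\boldsymbol{0})$ and $(\boldsymbol{y}-\boldsymbol{x})_{-}=\max(\boldsymbol{x}-\boldsymbol{y},\boldsymbol{0})$, and use the componentwise identities $\boldsymbol{x}+(\boldsymbol{y}-\boldsymbol{x})_{+}=\boldsymbol{x}\lor\boldsymbol{y}$ and $\boldsymbol{x}-(\boldsymbol{y}-\boldsymbol{x})_{-}=\boldsymbol{x}\land\boldsymbol{y}$, both of which again lie in the box $\mathcal{X}$. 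The first inequality of part~1 applied to the comparable pair $\boldsymbol{x}\le\boldsymbol{x}\lor\boldsymbol{y}$ lower-bounds $\langle(\boldsymbol{y}-\boldsymbol{x})_{+},\nabla f(\boldsymbol{x})\rangle$ by $\gamma\big(f(\boldsymbol{x}\lor\boldsymbol{y})-f(\boldsymbol{x})\big)$; the second inequality of part~1 applied to the comparable pair $\boldsymbol{x}\land\boldsymbol{y}\le\boldsymbol{x}$, with the gradient read at the larger point $\boldsymbol{x}$, upper-bounds $\langle(\boldsymbol{y}-\boldsymbol{x})_{-},\nabla f(\boldsymbol{x})\rangle$ by $\frac{1}{\gamma}\big(f(\boldsymbol{x})-f(\boldsymbol{x}\land\boldsymbol{y})\big)$. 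Subtracting the second bound from the first and collecting the $f(\boldsymbol{x})$ terms produces exactly $\langle\boldsymbol{y}-\boldsymbol{x},\nabla f(\boldsymbol{x})\rangle\ge\gamma f(\boldsymbol{x}\lor\boldsymbol{y})+\frac{1}{\gamma}f(\boldsymbol{x}\land\boldsymbol{y})-(\gamma+\frac{1}{\gamma})f(\boldsymbol{x})$.

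The computations are routine once the framework is set; the only points demanding care are bookkeeping ones. First, I must keep the directions of the inequalities straight when integrating, since the lower-endpoint comparison controls the integrand from above while the upper-endpoint comparison controls it from below. Second, in the negative-part term I must invoke the $\nabla f(\boldsymbol{y})$-version of part~1 with the pair $(\boldsymbol{x}\land\boldsymbol{y},\boldsymbol{x})$ precisely so that the gradient is evaluated at $\boldsymbol{x}$ rather than at $\boldsymbol{x}\land\boldsymbol{y}$; using the wrong version would leave a gradient at the meet and break the telescoping. I expect this matching of the evaluation point to be the one genuine subtlety, with the nonnegativity of $\nabla f$ (needed so the pointwise gradient comparison survives the inner products against $(\boldsymbol{y}-\boldsymbol{x})_{\pm}\ge\boldsymbol{0}$) being the only other thing worth stating explicitly.
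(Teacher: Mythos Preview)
Your proposal is correct and matches the paper's proof essentially line for line: the paper also writes $f(\boldsymbol{y})-f(\boldsymbol{x})$ as the line integral $\int_0^1\langle\boldsymbol{y}-\boldsymbol{x},\nabla f(\boldsymbol{x}+z(\boldsymbol{y}-\boldsymbol{x}))\rangle\,\mathrm{d}z$ and bounds the integrand via $\frac{1}{\gamma}\nabla f(\boldsymbol{x})\ge\nabla f(\boldsymbol{p}_z)\ge\gamma\nabla f(\boldsymbol{y})$, and for part~2 it uses exactly your decomposition (written there as $\boldsymbol{y}\lor\boldsymbol{x}-\boldsymbol{x}$ and $\boldsymbol{x}\land\boldsymbol{y}-\boldsymbol{x}$, which are your $(\boldsymbol{y}-\boldsymbol{x})_{+}$ and $-(\boldsymbol{y}-\boldsymbol{x})_{-}$) together with the identity $\boldsymbol{x}\land\boldsymbol{y}+\boldsymbol{x}\lor\boldsymbol{y}=\boldsymbol{x}+\boldsymbol{y}$. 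The subtlety you flag---that the second inequality of part~1 must be invoked on the pair $(\boldsymbol{x}\land\boldsymbol{y},\boldsymbol{x})$ so the gradient sits at $\boldsymbol{x}$---is precisely what the paper does as well.
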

\begin{proof} First, according to the definition of DR-submodular function and monotone property in \cref{sec:pre}, we have $\nabla f(\boldsymbol{x})\ge\gamma\nabla f(\boldsymbol{y})$, if $\boldsymbol{x}\le\boldsymbol{y}$. Thus, for any $\boldsymbol{x}\le\boldsymbol{y}$, we have
\begin{equation}\label{equ:11}
    \begin{aligned}
    &f(\boldsymbol{y})-f(\boldsymbol{x})=\int_{0}^{1}\langle \boldsymbol{y}-\boldsymbol{x}, \nabla f(\boldsymbol{x}+z(\boldsymbol{y}-\boldsymbol{x}))\rangle dz\le \frac{1}{\gamma}\langle \boldsymbol{y}-\boldsymbol{x}, \nabla f(\boldsymbol{x}))\rangle,\\
    &f(\boldsymbol{y})-f(\boldsymbol{x})=\int_{0}^{1}\langle \boldsymbol{y}-\boldsymbol{x}, \nabla f(\boldsymbol{x}+z(\boldsymbol{y}-\boldsymbol{x}))\rangle dz\ge \gamma\langle \boldsymbol{y}-\boldsymbol{x}, \nabla f(\boldsymbol{y})\rangle,
    \end{aligned} 
\end{equation} where these two inequalities follow from  $\boldsymbol{y}\ge\boldsymbol{x}+z(\boldsymbol{y}-\boldsymbol{x})\ge\boldsymbol{x}$ such that $\frac{1}{\gamma}\nabla f(\boldsymbol{x})\ge\nabla f(\boldsymbol{x}+z(\boldsymbol{y}-\boldsymbol{x}))\ge\gamma\nabla f(\boldsymbol{y})$ for any $z\in[0,1]$. We finish the proof of the first inequality in \cref{lemma:a1}.

Then, from \eqref{equ:11}, we could derive that
\begin{equation}\label{equ:12}
\begin{aligned}
 &\langle \boldsymbol{y}\lor \boldsymbol{x} -\boldsymbol{x}, \nabla f(\boldsymbol{x})\rangle \ge \gamma f(\boldsymbol{y}\lor \boldsymbol{x})-\gamma f(\boldsymbol{x}), \\
 &\langle \boldsymbol{x}\land \boldsymbol{y} -\boldsymbol{x}, \nabla f(\boldsymbol{x})\rangle \ge \frac{1}{\gamma} (f(\boldsymbol{x}\land\boldsymbol{y})-f(\boldsymbol{x})),
 \end{aligned}
 \end{equation} where $ \boldsymbol{y}\lor \boldsymbol{x}\ge\boldsymbol{x}$ and $\boldsymbol{x}\land \boldsymbol{y}\le\boldsymbol{x}$.

Merging the two equations in \eqref{equ:12}, we have, for any $\boldsymbol{x}$ and $\boldsymbol{y}\in\mathcal{X}$,
\begin{equation}\label{equ:13}
\begin{aligned}
\langle\boldsymbol{y}-\boldsymbol{x}, \nabla f(\boldsymbol{x})\rangle
  &= \langle \boldsymbol{y}\lor \boldsymbol{x} -\boldsymbol{x}, \nabla f(\boldsymbol{x})\rangle+\langle \boldsymbol{x}\land \boldsymbol{y} -\boldsymbol{x}, \nabla f(\boldsymbol{x})\rangle\\
  &\ge\gamma f(\boldsymbol{x}\lor \boldsymbol{y})+\frac{1}{\gamma}f(\boldsymbol{x}\land \boldsymbol{y} )-(\gamma+\frac{1}{\gamma})f(\boldsymbol{x}),  
\end{aligned}
\end{equation} 
where $\boldsymbol{x}\land \boldsymbol{y}+ \boldsymbol{x}\lor \boldsymbol{y}=\boldsymbol{x}+\boldsymbol{y}$.
Thus, we prove the second inequality in \cref{lemma:a1}. 
\end{proof}

Next, with the~\cref{lemma:a1}, we prove the~\cref{lemma:2}. 

\begin{proof}From \cref{equ:13}, if $\boldsymbol{x}$ is a stationary point of $f$ in domain $\mathcal{C}$, we have $(\gamma+\frac{1}{\gamma})f(\boldsymbol{x})\ge\gamma f(\boldsymbol{x}\lor \boldsymbol{y})+\frac{1}{\gamma}f(\boldsymbol{x}\land \boldsymbol{y} )  $ for any $\boldsymbol{y}\in\mathcal{C}$. Due to the monotone and non-negative property, $f(\boldsymbol{x})\ge\frac{\gamma^{2}}{\gamma^{2}+1}\max_{\boldsymbol{y}\in\mathcal{C}}f(\boldsymbol{y})$.
\end{proof}
\subsection{Proof of \cref{lemma:3}} \label{proof:lem2}
\begin{proof} First, we obtain an inequality about $\langle \boldsymbol{x},\nabla F(\boldsymbol{x})\rangle$, i.e., 
\begin{equation}\label{equ:appendix1}
    \begin{aligned}
    \langle \boldsymbol{x},\nabla F(\boldsymbol{x})\rangle&=\int_{0}^{1} w(z)\langle \boldsymbol{x},\nabla f(z*\boldsymbol{x})\rangle\mathrm{d}z\\ 
    &=\int^{1}_{0}w(z) \mathrm{d}f(z*\boldsymbol{x})\\
    &=w(z)f(z*\boldsymbol{x})|_{z=0}^{z=1}-\int_{0}^{1}f(z*\boldsymbol{x})w'(z)\mathrm{d}z\\
    &\le w(1)f(\boldsymbol{x})-\int_{0}^{1}f(z*\boldsymbol{x})w'(z)\mathrm{d}z.
    \end{aligned}
\end{equation}
Then, we also prove some properties about $\langle \boldsymbol{y},\nabla F(\boldsymbol{x})\rangle$, namely, 
\begin{equation}\label{equ:appendix2}
    \begin{aligned}
    \langle \boldsymbol{y},\nabla F(\boldsymbol{x})\rangle&=\int_{0}^{1} w(z)\langle \boldsymbol{y},\nabla f(z*\boldsymbol{x})\rangle\mathrm{d}z\\ 
    &\ge\int^{1}_{0}w(z)\langle \boldsymbol{y}\lor(z*\boldsymbol{x})-z*\boldsymbol{x},\nabla f(z*\boldsymbol{x})\rangle\mathrm{d}z\\
    &\ge\gamma\int^{1}_{0}w(z)(f(\boldsymbol{y}\lor(z*\boldsymbol{x}))-f(z*\boldsymbol{x}))\mathrm{d}z \\
    &\ge(\gamma\int_{0}^{1}w(z)dz)f(\boldsymbol{y})-\int^{1}_{0}\gamma w(z)f(z*\boldsymbol{x})\mathrm{d}z,
    \end{aligned}
\end{equation}
where the first inequality follows from $\boldsymbol{y}\ge\boldsymbol{y}\lor(z*\boldsymbol{x})-z*\boldsymbol{x}\ge\boldsymbol{0}$ and $\nabla f(z*\boldsymbol{x})\ge\boldsymbol{0}$; the second one comes from the \cref{lemma:2}; and the final inequality follows from $f(\boldsymbol{y}\lor(z*\boldsymbol{x}))\ge f(\boldsymbol{y})$.
    
Finally, putting above the inequality~\eqref{equ:appendix1} and inequality~\eqref{equ:appendix2} together, we have
\begin{equation}\label{equ:rev1}
 \begin{aligned}
    \langle\boldsymbol{y}-\boldsymbol{x}, \nabla F(\boldsymbol{x})\rangle&\ge(\gamma\int_{0}^{1}w(z)\mathrm{d}z)f(\boldsymbol{y})-w(1)f(\boldsymbol{x})+\int^{1}_{0}(w'(z)-\gamma w(z))f(z*\boldsymbol{x})\mathrm{d}z\\
    &=(\gamma\int_{0}^{1}w(z)\mathrm{d}z)(f(\boldsymbol{y})-\frac{w(1)+\int^{1}_{0} (\gamma w(z)-w'(z))\frac{f(z*\boldsymbol{x})}{f(\boldsymbol{x})}\mathrm{d}z}{\gamma\int_{0}^{1}w(z)\mathrm{d}z}f(\boldsymbol{x}))\\
    &=(\gamma\int_{0}^{1}w(z)\mathrm{d}z)(f(\boldsymbol{y})-\theta(w,f,\boldsymbol{x})f(\boldsymbol{x}))\\
    &\ge(\gamma\int_{0}^{1}w(z)\mathrm{d}z)(f(\boldsymbol{y})-\theta(w)f(\boldsymbol{x})),
    \end{aligned}
    \end{equation} where the final inequality follows from $\theta(w)=\max_{f,\boldsymbol{x}}\theta(w,f,\boldsymbol{x})$.
\end{proof}

\subsection{Proof of \cref{thm:1}} \label{proof:thm1}
\begin{proof}
In this proof, we investigate the optimal value and solution about the following optimization problem:
\begin{equation}\label{equ:19}
   \begin{aligned}
    \min_{w}\theta(w)=\min_{w}\max_{f,\boldsymbol{x}}& \frac{w(1)+\int^{1}_{0} (\gamma w(z)-w'(z))\frac{f(z*\boldsymbol{x})}{f(\boldsymbol{x})}\mathrm{d}z}{\gamma\int_{0}^{1}w(z)\mathrm{d}z}\\
          \rm{s.t.} \ &w(z)\ge 0,\\
          &w(z)\in C^{1}[0,1],\\
          &f(\boldsymbol{x})>0, \\
          &\nabla f(\boldsymbol{x}_1)\ge\gamma\nabla f(\boldsymbol{y}_1)\ge\boldsymbol{0},  \forall \boldsymbol{x}_1\le \boldsymbol{y}_1.
    \end{aligned} 
\end{equation}

(1) Before going into the detail, we first consider a new optimization problem as follows:
\begin{equation}\label{equ:20}
    \begin{aligned}
    \min_{w}\max_{R}\ \ &\theta(w,R)\\
          \rm{s.t.} \ &w(z)\ge 0,\\
          &w(z)\in C^{1}[0,1],\\
          &\gamma\int_{0}^{1}w(z)dz=1,\\
          &R(z)\ge 0, \\
          &R(1)=1,\\
          & R'(z_{1})\ge\gamma R'(z_{2})\ge 0\ (\forall z_{1}\le z_{2}, z_{1}, z_{2}\in[0,1]),
    \end{aligned} 
\end{equation}where $\theta(w,R)=w(1)+\int^{1}_{0} (\gamma w(z)-w'(z))R(z)\mathrm{d}z$.

Next, we prove the equivalence between problem~\eqref{equ:19} and problem~\eqref{equ:20}. For any fixed point $\boldsymbol{x}\in\mathcal{C}$, we consider the function $m(z)=\frac{f(z*\boldsymbol{x})}{f(\boldsymbol{x})}$ (we assume $f(\boldsymbol{x})>0$), which is satisfied with the constraints of problem~\eqref{equ:20}, i.e., $m(z)\ge 0$, $m(1)=1$, and $m'(z_{1})=\frac{\langle\boldsymbol{x}, \nabla f(z_{1}*\boldsymbol{x})\rangle}{f(\boldsymbol{x})}\ge\frac{\gamma\langle\boldsymbol{x}, \nabla f(z_{2}*\boldsymbol{x})\rangle}{f(\boldsymbol{x})}=\gamma m'(z_{2})\ge 0$ $(\forall z_{1}\le z_{2}, z_{1}, z_{2}\in[0,1])$. Therefore, the optimal objective value of problem~\eqref{equ:20} is larger than that of problem~\eqref{equ:19}. Moreover, for any $R(z)$ satisfying the constrains in problem~\eqref{equ:20}, we can design a function $f_{1}(\boldsymbol{x})=R(x_{1}/a_{1})$, where $x_{1}$ (we assume $x_{1}\in[0,a_{1}]$ in the~\cref{sec:pre}) is the first coordinate of point $\boldsymbol{x}$. Also, $f_{1}(\boldsymbol{x})\ge0$ and when $\boldsymbol{x}\le\boldsymbol{y}$, we have $\nabla f_{1}(\boldsymbol{x})\ge\gamma\nabla f_{1}(\boldsymbol{y})$. Hence, $f_{1}$ is also satisfied with the constraints of problem~\eqref{equ:19}. If we set $\boldsymbol{x}_{1}=(a_{1},0,\dots,0)\in\mathcal{X}$, $\frac{f_{1}(z*\boldsymbol{x}_{1})}{f_{1}(\boldsymbol{x}_{1})}=R(z)$ such that the optimal objective value of problem~\eqref{equ:19} is larger than that of problem~\eqref{equ:20}. As a result, the optimization problem~\eqref{equ:20} is equivalent to the problem~\eqref{equ:19}.

(2) Then, we prove the $\min_{w}\max_{f,\boldsymbol{x}}\theta(w,f,\boldsymbol{x})\ge\frac{1}{1-e^{-\gamma}}$. Setting $\widehat{R}(z)=\frac{1-e^{-\gamma z}}{1-e^{-\gamma}}$, we could verify that, if $\gamma\int_{0}^{1}w(z)dz=1$, 
\begin{equation}\label{equ:21}
    \begin{aligned}
    \theta(w,\widehat{R})&=w(1)+\int^{1}_{0} (\gamma w(z)-w'(z))\widehat{R}(z)\mathrm{d}z\\
    &=w(1)+\frac{\int^{1}_{0} (\gamma w(z)-w'(z))\mathrm{d}z+\int^{1}_{0} e^{-\gamma z}(w'(z)-\gamma w(z))\mathrm{d}z}{1-e^{-\gamma}}\\
    &=w(1)+\frac{1-w(1)+w(0)+e^{-\gamma z}w(z)|_{z=0}^{z=1}}{1-e^{-\gamma}}\\
    &=w(1)+\frac{1-w(1)+w(0)+e^{-\gamma}w(1)-w(0)}{1-e^{-\gamma}}\\
    &=\frac{1}{1-e^{-\gamma}}.
    \end{aligned}
\end{equation}
Also, $\widehat{R}$ is satisfied with the constraints of optimization problem~\eqref{equ:20}, i.e., for any $z\in[0,1]$, $\widehat{R}(z)\ge0$, $\widehat{R}(1)=1$ and $\widehat{R}'(x)=\frac{\gamma e^{-\gamma x}}{1-e^{-\gamma}}\ge\frac{\gamma^{2} e^{-\gamma y}}{1-e^{-\gamma}}=\gamma\widehat{R}'(y)$ where $x\le y$ and $0\le\gamma\le 1$. Therefore, $\max_{R}\theta(w,R)\ge\theta(w,\widehat{R})=\frac{1}{1-e^{-\gamma}}$ and $\min_{w}\max_{f,\boldsymbol{x}}\theta(w,f,\boldsymbol{x})=\min_{w}\max_{R}\theta(w,R)\ge\frac{1}{1-e^{-\gamma}}$.

(3) We consider $\widehat{w}(z)=e^{\gamma(z-1)}$ and observe that $\widehat{w}'(z)=\gamma\widehat{w}(z)$ such that $\theta(\widehat{w},f,\boldsymbol{x})=\frac{\widehat{w}(1)+\int^{1}_{0} (\gamma \widehat{w}(z)-\widehat{w}'(z))\frac{f(z*\boldsymbol{x})}{f(\boldsymbol{x})}\mathrm{d}z}{\gamma\int_{0}^{1}\widehat{w}(z)\mathrm{d}z}=\frac{\widehat{w}(1)}{\gamma\int_{0}^{1}\widehat{w}(z)\mathrm{d}z}=\frac{1}{1-e^{-\gamma}}$ for any function $f$. Also, $\widehat{w}(z)$ is satisfied with the constraints in optimization problem~\eqref{equ:19}, namely, $\widehat{w}(z)\ge 0$ and $\widehat{w}\in C^{1}[0,1]$. Therefore, $\frac{1}{1-e^{-\gamma}}=\min_{w}\max_{f,\boldsymbol{x}}\theta(w,f,\boldsymbol{x})$ and $e^{\gamma(z-1)}\in\arg\min_{w}\theta(w)$.
\end{proof}

\subsection{Proof of \cref{thm:2}} \label{proof:thm2}
\begin{proof} From the definition of $F$, we have $\langle\boldsymbol{y}-\boldsymbol{x}, \nabla F(\boldsymbol{x})\rangle \ge (1-e^{-\gamma})f(\boldsymbol{y})-f(\boldsymbol{x})$ for any point $\boldsymbol{x}, \boldsymbol{y}\in\mathcal{C}$. Hence, when $\boldsymbol{x}\in\mathcal{C}$ is a stationary point for $F$ in the domain $\mathcal{C}$,
$0\ge\langle\boldsymbol{y}-\boldsymbol{x}, \nabla F(\boldsymbol{x})\rangle \ge (1-e^{-\gamma})f(\boldsymbol{y})-f(\boldsymbol{x})$ for any point $\boldsymbol{y}\in\mathcal{C}$ such that $f(\boldsymbol{x})\ge(1-e^{-\gamma})\max_{\boldsymbol{y}\in\mathcal{C}}f(\boldsymbol{y})$. 

Then, for the second one, we first verify that the value $\int_{0}^{1}\frac{e^{\gamma(z-1)}}{z}f(z*\boldsymbol{x})\mathrm{d}z$ is controlled via $f(\boldsymbol{x})$ for any $\boldsymbol{x}\in\mathcal{X}$. For any $\delta\in(0,1)$,  we first have
\begin{equation}\label{equ:22}
\begin{aligned}
 &\int_{0}^{1}\frac{e^{\gamma(z-1)}}{z}f(z*\boldsymbol{x})\mathrm{d}z\\
 &=(\int_{0}^{\delta}+\int_{\delta}^{1}) \frac{e^{\gamma(z-1)}}{z}f(z*\boldsymbol{x})\mathrm{d}z\\
 &\le\int_{0}^{\delta}\frac{f(z*x)}{z}\mathrm{d}z+(\int_{\delta}^{1}\frac{1}{z}\mathrm{d}z)f(\boldsymbol{x})\\
 &= \int_{0}^{\delta}\frac{f(z*x)}{z}\mathrm{d}z+\ln(\frac{1}{\delta})f(\boldsymbol{x})\\
 &= \int_{0}^{\delta}\frac{\int_{0}^{z}\langle \boldsymbol{x},\nabla f(u*\boldsymbol{x})\rangle \mathrm{d}u}{z}\mathrm{d}z+\ln(\frac{1}{\delta})f(\boldsymbol{x}),
 \end{aligned}
\end{equation} where the first inequality follows from $f(z*\boldsymbol{x})\le f(\boldsymbol{x})$ and $\delta\in[0,1]$, and the final equality from $\int_{0}^{z}\langle \boldsymbol{x},\nabla f(u*\boldsymbol{x})\rangle \mathrm{d}u=f(z*\boldsymbol{x})-f(\boldsymbol{0})=f(z*\boldsymbol{x})$. 

Next, \begin{equation}\label{equ:23}
    \begin{aligned}
     \int_{0}^{\delta}\frac{\int_{0}^{z}\langle \boldsymbol{x},\nabla f(u*\boldsymbol{x})\rangle \mathrm{d}u}{z}\mathrm{d}z&=\int_{0}^{\delta}\langle \boldsymbol{x},\nabla f(u*\boldsymbol{x})\rangle \int_{u}^{\delta}\frac{1}{z}\mathrm{d}z\mathrm{d}u\\
 &=\int_{0}^{\delta}\langle \boldsymbol{x},\nabla f(u*\boldsymbol{x})\rangle \ln(\frac{\delta}{u})\mathrm{d}u\\
 &=\int_{0}^{\delta}(\langle \boldsymbol{x},\nabla f(u*\boldsymbol{x})-\nabla f(\boldsymbol{x})\rangle+\langle \boldsymbol{x}, \nabla f(\boldsymbol{x})\rangle)\ln(\frac{\delta}{u})\mathrm{d}u\\
 &\le\int_{0}^{\delta}\ln(\frac{\delta}{u})\mathrm{d}u(L r^2(\mathcal{X})+\frac{f(\boldsymbol{x})}{\gamma})\\
  &=(u-u\ln(\frac{u}{\delta}))|_{u=0}^{\delta}(L r^2(\mathcal{X})+\frac{f(\boldsymbol{x})}{\gamma})\\
  &=\delta(L r^2(\mathcal{X})+\frac{f(\boldsymbol{x})}{\gamma}),
    \end{aligned}
\end{equation} where the first equality follows from the Fubini's theorem; in the first inequality, we use  $\langle \boldsymbol{x},\nabla f(u*\boldsymbol{x})-\nabla f(\boldsymbol{x})\rangle\le L\left\|\boldsymbol{x}\right\|^{2}$, which is derived from the $L$-smooth property, and $\langle \boldsymbol{x}, \nabla f(\boldsymbol{x})\rangle\le\frac{  f(\boldsymbol{x})}{\gamma}$, following from the \cref{lemma:2} and $f(\boldsymbol{0})=0$; the final equality follows from $\lim_{u\rightarrow 0_{+}} u\ln(u)=0$.

From \cref{equ:22} and \cref{equ:23}, for any $\delta\in(0,1)$, we have 
\begin{equation}\label{equ:24}
    \begin{aligned}
     F(\boldsymbol{x})&\le \ln(\frac{1}{\delta})f(\boldsymbol{x})+\delta(L_{*}r^2(\mathcal{X})+\frac{f(\boldsymbol{x})}{\gamma})\\
     &\le\ln(\frac{1}{\delta})(f(\boldsymbol{x})+c)+\delta(L_{*}r^2(\mathcal{X})+\frac{f(\boldsymbol{x})}{\gamma}),
    \end{aligned}
\end{equation} where the second inequality comes from $c>0$.

If we set $\delta=\frac{f(\boldsymbol{x})+c}{\frac{f(\boldsymbol{x})}{\gamma}+L r^2(\mathcal{X})}\in[0,1]$ ($0\le\gamma\le 1$ and $0<c\le L_{*}r^{2}(\mathcal{X})$), we have 
\begin{align*}
     F(\boldsymbol{x})&\le\ln(\frac{1}{\delta})(f(\boldsymbol{x})+c)+\delta(L_{*}r^2(\mathcal{X})+\frac{f(\boldsymbol{x})}{\gamma})\\
     &=(1+\ln(\frac{1}{\delta}))(f(\boldsymbol{x})+c)\\
     &\le(1+\ln(\tau)(f(\boldsymbol{x})+c),
\end{align*} where the final inequality is derived from $\frac{1}{\delta}\le\tau$ and $\tau=max(\frac{1}{\gamma},\frac{L_{*}r^{2}(\mathcal{X})}{c})$.

As a result, the value $\int_{0}^{1}\frac{e^{\gamma(z-1)}}{z}f(z*\boldsymbol{x})\mathrm{d}z$ is well-defined. We also could verify that $\nabla \int_{0}^{1}\frac{e^{\gamma(z-1)}}{z}f(z*\boldsymbol{x})\mathrm{d}z=\int_{0}^{1}e^{\gamma(z-1)}\nabla f(z*\boldsymbol{x})\mathrm{d}z$ so that we could set $F(\boldsymbol{x})=\int_{0}^{1}\frac{e^{\gamma(z-1)}}{z}f(z*\boldsymbol{x})\mathrm{d}z$.

For the final one, 
\begin{equation}\label{equ:25}
    \begin{aligned}
    \left\|\nabla F(\boldsymbol{x})-\nabla F(\boldsymbol{y})\right\|&=\left\|\int_{0}^{1}e^{\gamma(z-1)}(\nabla f(z*\boldsymbol{x})-\nabla f(z*\boldsymbol{y}))\mathrm{d}z\right\|\\
    &\le\int_{0}^{1}e^{\gamma(z-1)}\left\|\nabla f(z*\boldsymbol{x})-\nabla f(z*\boldsymbol{y})\right\|\mathrm{d}z\\
    &\le L(\int_{0}^{1}e^{\gamma(z-1)}z dz)\left\|\boldsymbol{x}-\boldsymbol{y}\right\|\\
    &=\frac{\gamma+e^{-\gamma}-1}{\gamma^2}L\left\|\boldsymbol{x}-\boldsymbol{y}\right\|.
    \end{aligned}
\end{equation}
\end{proof}
\subsection{Proof of \cref{prop:1}} \label{proof:prop1}
\begin{proof}
For the first one, fixed $z$, $\mathbb{E}\left(\left.\widetilde{\nabla}f(z*\boldsymbol{x})\right|\boldsymbol{x},z\right)=\nabla f(z*\boldsymbol{x})$ such that $\mathbb{E}\left(\left.\widetilde{\nabla}f(z*\boldsymbol{x})\right|\boldsymbol{x}\right)=\mathbb{E}_{z\sim\mathbf{Z}}\left(\mathbb{E}\left(\left.\widetilde{\nabla}f(z*\boldsymbol{x})\right|\boldsymbol{x},z
\right)\right)=\mathbb{E}_{z\sim\mathbf{Z}}\left(\left.\nabla f(z*\boldsymbol{x})\right|\boldsymbol{x}\right)=\int_{z=0}^{1} \frac{\gamma e^{\gamma(z-1)}}{1-e^{-\gamma}}\nabla f(z*\boldsymbol{x})\mathrm{d}z=\frac{\gamma}{1-e^{-\gamma}}F(\boldsymbol{x})$. For the second one, 
        \begin{align*}
        &\mathbb{E}\left(\left.\left\|\frac{1-e^{-\gamma}}{\gamma}\widetilde{\nabla}f(z*\boldsymbol{x})-\nabla F(\boldsymbol{x})\right\|^{2}\right|\boldsymbol{x}\right)\\
        =&\mathbb{E}\left(\left.\left\|\frac{1-e^{-\gamma}}{\gamma}(\widetilde{\nabla}f(z*\boldsymbol{x})-\nabla f(z*\boldsymbol{x}))+\frac{1-e^{-\gamma}}{\gamma}\nabla f(z*\boldsymbol{x})-\nabla F(\boldsymbol{x})\right\|^{2}\right|\boldsymbol{x}\right)\\
        \le & 2\mathbb{E}_{z\sim\mathbf{Z}}\left(\mathbb{E}\left(\left.\left\|\frac{1-e^{-\gamma}}{\gamma}(\widetilde{\nabla}f(z*\boldsymbol{x})-\nabla f(z*\boldsymbol{x}))\right\|^{2}\right|\boldsymbol{x},z\right) +\left\|\frac{1-e^{-\gamma}}{\gamma}\nabla f(z*\boldsymbol{x})-\nabla F(\boldsymbol{x})\right\|^{2}\right)\\
        \le & 2\frac{(1-e^{-\gamma})^{2}\sigma^{2}}{\gamma^{2}}+2\mathbb{E}_{z\sim\mathbf{Z}}\left(\left.\left\|\frac{1-e^{-\gamma}}{\gamma}\nabla f(z*\boldsymbol{x})-\nabla F(\boldsymbol{x})\right\|^{2}\right|\boldsymbol{x}\right) \\
        \le & 2\frac{(1-e^{-\gamma})^{2}\sigma^{2}}{\gamma^{2}}+2\mathbb{E}_{z\sim\mathbf{Z}}\left(\left.\left\|\int_{0}^{1} e^{\gamma(u-1)}(\nabla f(z*\boldsymbol{x})-\nabla f(u*\boldsymbol{x}))\mathrm{d}u\right
        \|^{2}\right|\boldsymbol{x}\right)\\
        \le & 2\frac{(1-e^{-\gamma})^{2}\sigma^{2}}{\gamma^{2}}+2\mathbb{E}_{z\sim\mathbf{Z}}\left(\left.\left(\int_{0}^{1}e^{\gamma(u-1)}|z-u|L\left\|\boldsymbol{x}\right
        \|\mathrm{d}u\right)^{2}\right|\boldsymbol{x}\right)\\
        \le & 2\frac{(1-e^{-\gamma})^{2}\sigma^{2}}{\gamma^{2}}+2\mathbb{E}_{z\sim\mathbf{Z}}\left(\left.\int_{0}^{1}e^{\gamma(u-1)}\mathrm{d}u\int_{u=0}^{1}e^{\gamma(u-1)}(z-u)^2L^{2}\left\|\boldsymbol{x}\right
        \|^{2}\mathrm{d}u\right|\boldsymbol{x}\right)\\
        = & 2\frac{(1-e^{-\gamma})^{2}\sigma^{2}}{\gamma^{2}}+2 \int_{z=0}^{1}\int_{u=0}^{1}e^{\gamma(u+z-2)}(z-u)^{2}L^{2}\left\|\boldsymbol{x}\right
        \|^{2} \mathrm{d}u\mathrm{d}z\\
        \le & 2\frac{(1-e^{-\gamma})^{2}\sigma^{2}}{\gamma^{2}}+\frac{2L^{2}r^{2}(\mathcal{X})(1-e^{-2\gamma})}{3\gamma},
        \end{align*}
     where the first and fifth inequalities come from Cauchy–Schwarz inequality.
    \end{proof}
\section{Proof of Theorem~\ref{thm:4}}\label{Appendix:B}
First, we recall the projection theorem from \citep{bertsekas2015convex} in the following lemma. 
\begin{lemma}\label{lemma:9}
For the projection $\mathcal{P}_{\mathcal{C}}(\boldsymbol{x})=\arg\min_{\boldsymbol{z}\in\mathcal{C}}\left\|\boldsymbol{z}-\boldsymbol{x}\right\|$,  we have
\begin{equation}\label{equ:41}
    \begin{aligned}
      \langle\mathcal{P}_{\mathcal{C}}(\boldsymbol{x}))-\boldsymbol{x},\boldsymbol{z}-\mathcal{P}_{\mathcal{C}}(\boldsymbol{x})\rangle\ge 0, \forall \boldsymbol{z}\in\mathcal{C}.
    \end{aligned}
\end{equation}
\end{lemma}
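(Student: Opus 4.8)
The plan is to prove this standard variational characterization of the Euclidean projection directly from its defining minimization, using only the convexity of $\mathcal{C}$. Write $\boldsymbol{p} = \mathcal{P}_{\mathcal{C}}(\boldsymbol{x})$ for the minimizer of $\|\boldsymbol{z} - \boldsymbol{x}\|$ over $\boldsymbol{z} \in \mathcal{C}$; since $\mathcal{C}$ is a bounded (closed) convex set, existence and uniqueness of $\boldsymbol{p}$ follow from coercivity and strict convexity of the squared distance, so I may take $\boldsymbol{p}$ as a well-defined point and work with $\|\cdot\|^2$ rather than $\|\cdot\|$ to avoid the non-differentiability of the norm at the origin.

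First I would fix an arbitrary $\boldsymbol{z} \in \mathcal{C}$ and invoke convexity to form the segment $\boldsymbol{p}_\lambda := \boldsymbol{p} + \lambda(\boldsymbol{z} - \boldsymbol{p}) = (1-\lambda)\boldsymbol{p} + \lambda\boldsymbol{z}$, which lies in $\mathcal{C}$ for every $\lambda \in [0,1]$; this feasibility is precisely what convexity buys us and is the crux of the argument. Because $\boldsymbol{p}$ minimizes the distance to $\boldsymbol{x}$ over $\mathcal{C}$, the scalar function $g(\lambda) := \|\boldsymbol{p}_\lambda - \boldsymbol{x}\|^2$ attains its minimum over $[0,1]$ at $\lambda = 0$. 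Expanding gives the quadratic $g(\lambda) = \|\boldsymbol{p} - \boldsymbol{x}\|^2 + 2\lambda\langle \boldsymbol{p} - \boldsymbol{x}, \boldsymbol{z} - \boldsymbol{p}\rangle + \lambda^2\|\boldsymbol{z} - \boldsymbol{p}\|^2$, so that $g(\lambda) - g(0) \ge 0$ reads $2\lambda\langle \boldsymbol{p} - \boldsymbol{x}, \boldsymbol{z} - \boldsymbol{p}\rangle + \lambda^2\|\boldsymbol{z} - \boldsymbol{p}\|^2 \ge 0$ for all $\lambda \in (0,1]$. Dividing by $\lambda > 0$ and letting $\lambda \to 0^+$ yields $\langle \boldsymbol{p} - \boldsymbol{x}, \boldsymbol{z} - \boldsymbol{p}\rangle \ge 0$, and since $\boldsymbol{z}$ was arbitrary this is exactly inequality~\eqref{equ:41}.

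There is essentially no hard step here, as this is a textbook fact; the only subtlety worth flagging is that $\lambda = 0$ is a \emph{boundary} minimizer of $g$ on $[0,1]$, so one cannot simply set an interior derivative to zero. I would therefore either use the one-sided derivative $g'(0^+) = 2\langle \boldsymbol{p} - \boldsymbol{x}, \boldsymbol{z} - \boldsymbol{p}\rangle \ge 0$, or, as above, divide the explicit quadratic inequality by $\lambda$ before passing to the limit. Either route makes the role of convexity (feasibility of the segment) and of the minimality of $\boldsymbol{p}$ (nonnegativity of $g(\lambda)-g(0)$) completely transparent.
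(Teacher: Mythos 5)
Your proof is correct. Note, however, that the paper does not actually prove this lemma: it is stated as a recollection of the projection theorem, with a citation to \citet{bertsekas2015convex}, and then used as a black box in the proofs of Theorems~\ref{thm:4} and~\ref{thm:5}. What you have written is precisely the standard first-principles argument behind that cited result: convexity of $\mathcal{C}$ gives feasibility of the segment $(1-\lambda)\boldsymbol{p}+\lambda\boldsymbol{z}$, minimality of $\boldsymbol{p}$ gives $g(\lambda)\ge g(0)$ for the quadratic $g$, and dividing by $\lambda$ before letting $\lambda\to 0^{+}$ correctly handles the fact that $\lambda=0$ is a boundary minimizer. Your side remarks are also apt: one does need $\mathcal{C}$ closed (not merely bounded, as the paper loosely states) for the projection to exist, and working with the squared norm avoids any differentiability issue. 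So your proposal supplies a complete, self-contained justification for a step the paper delegates to a reference, and it matches the textbook route rather than taking a genuinely different one.
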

Before verifying the \cref{thm:4}, we first provide following lemma. 
\begin{lemma} \label{lemma:10} In the $t$-round update in ~\cref{alg:1},
if we set the $\widetilde{\nabla} F(\boldsymbol{x}_t)=\frac{1-e^{-\gamma}}{\gamma}\widetilde{\nabla}f(z_{t}*\boldsymbol{x}_{t})$, for any $\boldsymbol{y}\in\mathcal{C}$ and $\mu_{t}>0$, we have 
\begin{align*}
    &\mathbb{E}\left(F(\boldsymbol{x}_{t+1})-F(\boldsymbol{x}_{t})+f(\boldsymbol{x}_{t})-(1-e^{-\gamma})f(\boldsymbol{y})\right)\\
    \ge & \mathbb{E}\left(\frac{1}{2\eta_{t}}(\left\|\boldsymbol{y}-\boldsymbol{x}_{t+1}\right\|^{2}-\left\|\boldsymbol{y}-\boldsymbol{x}_{t}\right\|^{2})-\frac{1}{2\mu_{t}}\left\|\nabla F(\boldsymbol{x}_{t})-\widetilde{\nabla} F(\boldsymbol{x}_t) \right\|^{2}+(\frac{1}{2\eta_{t}}-\frac{\mu_{t}+L_{\gamma}}{2})\left\|\boldsymbol{x}_{t+1}-\boldsymbol{x}_{t})\right\|^{2}\right).
\end{align*} 

\end{lemma}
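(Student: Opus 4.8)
The plan is to assemble three ingredients already established in the excerpt and glue them together with a single application of Young's inequality, while routing the stochasticity through conditional expectation. The ingredients are: (a) the $L_\gamma$-smoothness of $F$ from \cref{thm:2}(iii); (b) the projection inequality of \cref{lemma:9}; and (c) the defining non-oblivious inequality $\langle \boldsymbol{y}-\boldsymbol{x},\nabla F(\boldsymbol{x})\rangle \ge (1-e^{-\gamma})f(\boldsymbol{y})-f(\boldsymbol{x})$ derived immediately after \cref{thm:1}, together with the unbiasedness $\mathbb{E}(\widetilde{\nabla}F(\boldsymbol{x}_t)\mid \boldsymbol{x}_t)=\nabla F(\boldsymbol{x}_t)$ from \cref{prop:1}(i). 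First I would use smoothness to write the quadratic lower bound $F(\boldsymbol{x}_{t+1})-F(\boldsymbol{x}_t)\ge \langle \nabla F(\boldsymbol{x}_t),\boldsymbol{x}_{t+1}-\boldsymbol{x}_t\rangle-\tfrac{L_\gamma}{2}\|\boldsymbol{x}_{t+1}-\boldsymbol{x}_t\|^2$, which isolates the inner product that must now be analyzed.

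The core step is an exact three-way split of that inner product, namely $\langle \nabla F(\boldsymbol{x}_t),\boldsymbol{x}_{t+1}-\boldsymbol{x}_t\rangle = \langle \widetilde{\nabla}F(\boldsymbol{x}_t),\boldsymbol{x}_{t+1}-\boldsymbol{y}\rangle + \langle \widetilde{\nabla}F(\boldsymbol{x}_t),\boldsymbol{y}-\boldsymbol{x}_t\rangle + \langle \nabla F(\boldsymbol{x}_t)-\widetilde{\nabla}F(\boldsymbol{x}_t),\boldsymbol{x}_{t+1}-\boldsymbol{x}_t\rangle$, where each piece is tailored to one tool. For the first piece, I would apply \cref{lemma:9} with $\boldsymbol{x}=\boldsymbol{y}_{t+1}=\boldsymbol{x}_t+\eta_t\widetilde{\nabla}F(\boldsymbol{x}_t)$ and $\boldsymbol{z}=\boldsymbol{y}$ to get $\langle \widetilde{\nabla}F(\boldsymbol{x}_t),\boldsymbol{x}_{t+1}-\boldsymbol{y}\rangle \ge \tfrac{1}{\eta_t}\langle \boldsymbol{x}_{t+1}-\boldsymbol{x}_t,\boldsymbol{x}_{t+1}-\boldsymbol{y}\rangle$, then expand via the polarization identity $\langle \boldsymbol{x}_{t+1}-\boldsymbol{x}_t,\boldsymbol{x}_{t+1}-\boldsymbol{y}\rangle=\tfrac12(\|\boldsymbol{x}_{t+1}-\boldsymbol{x}_t\|^2+\|\boldsymbol{x}_{t+1}-\boldsymbol{y}\|^2-\|\boldsymbol{x}_t-\boldsymbol{y}\|^2)$, which produces the $\tfrac{1}{2\eta_t}$ terms in the target. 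The third piece is bounded below by Young's inequality, $\langle \nabla F(\boldsymbol{x}_t)-\widetilde{\nabla}F(\boldsymbol{x}_t),\boldsymbol{x}_{t+1}-\boldsymbol{x}_t\rangle \ge -\tfrac{1}{2\mu_t}\|\nabla F(\boldsymbol{x}_t)-\widetilde{\nabla}F(\boldsymbol{x}_t)\|^2-\tfrac{\mu_t}{2}\|\boldsymbol{x}_{t+1}-\boldsymbol{x}_t\|^2$, giving the variance term and the $-\tfrac{\mu_t}{2}$ contribution.

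The second piece is where the stochasticity must be handled carefully, and this is the main subtlety: the non-oblivious inequality is a statement about the \emph{true} gradient $\nabla F$, whereas the algorithm uses $\widetilde{\nabla}F$. The whole point of pairing $\widetilde{\nabla}F(\boldsymbol{x}_t)$ with the fixed direction $\boldsymbol{y}-\boldsymbol{x}_t$ (which is measurable given $\boldsymbol{x}_t$) is that taking the conditional expectation $\mathbb{E}(\cdot\mid\boldsymbol{x}_t)$ converts $\widetilde{\nabla}F$ into $\nabla F$ exactly by unbiasedness, after which I may apply $\langle \nabla F(\boldsymbol{x}_t),\boldsymbol{y}-\boldsymbol{x}_t\rangle\ge (1-e^{-\gamma})f(\boldsymbol{y})-f(\boldsymbol{x}_t)$. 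Dually, the error vector $\nabla F(\boldsymbol{x}_t)-\widetilde{\nabla}F(\boldsymbol{x}_t)$ must be paired with $\boldsymbol{x}_{t+1}-\boldsymbol{x}_t$ rather than $\boldsymbol{x}_{t+1}-\boldsymbol{y}$, so that Young's inequality yields the coefficient $\|\boldsymbol{x}_{t+1}-\boldsymbol{x}_t\|^2$ matching the target; choosing this precise split is the crux of the argument. Finally I would take the full expectation of the assembled chain (using the tower property for the second piece), collect the $\|\boldsymbol{x}_{t+1}-\boldsymbol{x}_t\|^2$ coefficients into $\tfrac{1}{2\eta_t}-\tfrac{\mu_t+L_\gamma}{2}$, and move $(1-e^{-\gamma})f(\boldsymbol{y})-f(\boldsymbol{x}_t)$ to the left-hand side, which reproduces the claimed inequality.
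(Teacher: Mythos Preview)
Your proposal is correct and mirrors the paper's proof essentially step for step: the paper also starts from the $L_\gamma$-smoothness lower bound, performs the same three-way decomposition of $\langle \nabla F(\boldsymbol{x}_t),\boldsymbol{x}_{t+1}-\boldsymbol{x}_t\rangle$ (it just writes the split in two stages rather than one), applies \cref{lemma:9} and the polarization identity to the $\langle \widetilde{\nabla}F(\boldsymbol{x}_t),\boldsymbol{x}_{t+1}-\boldsymbol{y}\rangle$ piece, Young's inequality to the error piece, and then takes conditional expectation to invoke unbiasedness and the non-oblivious inequality on the $\langle \widetilde{\nabla}F(\boldsymbol{x}_t),\boldsymbol{y}-\boldsymbol{x}_t\rangle$ piece. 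The emphasis you place on pairing the error with $\boldsymbol{x}_{t+1}-\boldsymbol{x}_t$ and the stochastic gradient with the $\boldsymbol{x}_t$-measurable direction $\boldsymbol{y}-\boldsymbol{x}_t$ is exactly the mechanism used in the paper.
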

\begin{proof} 
From the \cref{thm:2}, when $f$ is $L$-$smooth$, the non-oblivious function $F$ is $L_{\gamma}$-$smooth$. Hence
\begin{equation}\label{equ:43}
    \begin{aligned}
    F(\boldsymbol{x}_{t+1})-F(\boldsymbol{x}_{t})&=\int_{z=0}^{z=1}\langle \boldsymbol{x}_{t+1}-\boldsymbol{x}_{t}, \nabla F(\boldsymbol{x}_{t}+z(\boldsymbol{x}_{t+1}-\boldsymbol{x}_{t}))\rangle dz \\
    &\ge \langle \boldsymbol{x}_{t+1}-\boldsymbol{x}_{t}, \nabla F(\boldsymbol{x}_t) \rangle-\frac{L_{\gamma}}{2}\left\|\boldsymbol{x}_{t+1}-\boldsymbol{x}_{t}\right\|^{2}.\\
    \end{aligned}
\end{equation} 
 
Then,
\begin{equation}\label{equ:44}
 \begin{aligned}
 &\langle \boldsymbol{x}_{t+1}-\boldsymbol{x}_{t}, \nabla F(\boldsymbol{x}_t) \rangle\\
= &\langle \boldsymbol{x}_{t+1}-\boldsymbol{x}_{t}, \widetilde{\nabla} F(\boldsymbol{x}_t) \rangle+\langle \boldsymbol{x}_{t+1}-\boldsymbol{x}_{t}, \nabla F(\boldsymbol{x}_{t})-\widetilde{\nabla} F(\boldsymbol{x}_t) \rangle\\
\ge & \langle \boldsymbol{x}_{t+1}-\boldsymbol{x}_{t}, \widetilde{\nabla} F(\boldsymbol{x}_t) \rangle-\frac{1}{2\mu_{t}}\left\|\nabla F(\boldsymbol{x}_{t})-\widetilde{\nabla} F(\boldsymbol{x}_t) \right\|^{2}-\frac{\mu_{t}}{2}\left\|\boldsymbol{x}_{t+1}-\boldsymbol{x}_{t}\right\|^{2},
\end{aligned}
\end{equation} where the first inequality from the Young's inequality.
   
It is well known $\widetilde{\nabla}F(\boldsymbol{x}_{t})=\frac{1}{\eta_{t}}(\boldsymbol{y}_{t+1}-\boldsymbol{x}_{t})$ such that
\begin{equation}\label{equ:45}
   \begin{aligned}
    &\langle \boldsymbol{x}_{t+1}-\boldsymbol{x}_{t}, \widetilde{\nabla} F(\boldsymbol{x}_t) \rangle\\
    = & \langle\boldsymbol{x}_{t+1}-\boldsymbol{y}, \widetilde{\nabla} F(\boldsymbol{x}_t) \rangle+\langle \boldsymbol{y}-\boldsymbol{x}_{t}, \widetilde{\nabla} F(\boldsymbol{x}_t) \rangle\\
    = &\frac{1}{\eta_{t}}\langle \boldsymbol{x}_{t+1}-\boldsymbol{y}, \boldsymbol{y}_{t+1}-\boldsymbol{x}_{t}\rangle+\langle \boldsymbol{y}-\boldsymbol{x}_{t}, \widetilde{\nabla} F(\boldsymbol{x}_t) \rangle\\
    = &\frac{1}{\eta_{t}}\langle \boldsymbol{x}_{t+1}-\boldsymbol{y}, \boldsymbol{y}_{t+1}-\boldsymbol{x}_{t+1} \rangle+\frac{1}{\eta_{t}}\langle \boldsymbol{x}_{t+1}-\boldsymbol{y}, \boldsymbol{x}_{t+1}-\boldsymbol{x}_{t} \rangle+\langle \boldsymbol{y}-\boldsymbol{x}_{t}, \widetilde{\nabla} F(\boldsymbol{x}_t) \rangle\\
    \ge & \frac{1}{\eta_{t}}\langle\boldsymbol{x}_{t+1}-\boldsymbol{y}, \boldsymbol{x}_{t+1}-\boldsymbol{x}_{t} \rangle+\langle \boldsymbol{y}-\boldsymbol{x}_{t}, \widetilde{\nabla} F(\boldsymbol{x}_t) \rangle\\
    = & \frac{1}{2\eta_{t}}(\left\|\boldsymbol{y}-\boldsymbol{x}_{t+1}\right\|^{2}+\left\|\boldsymbol{x}_{t+1}-\boldsymbol{x}_{t}\right\|^{2}-\left\|\boldsymbol{y}-\boldsymbol{x}_{t}\right\|^{2})+\langle \boldsymbol{y}-\boldsymbol{x}_{t}, \widetilde{\nabla} F(\boldsymbol{x}_t) \rangle,
\end{aligned}    
\end{equation} where the first inequality follows from the ~\cref{lemma:9}.

From the Equation~\eqref{equ:43}-\eqref{equ:45}, we have
\begin{equation}\label{equ:46}
    \begin{aligned}
    &F(\boldsymbol{x}_{t+1})-F(\boldsymbol{x}_{t})\\
    \ge & \frac{1}{2\eta_{t}}(\left\|\boldsymbol{y}-\boldsymbol{x}_{t+1}\right\|^{2}+\left\|\boldsymbol{x}_{t+1}-\boldsymbol{x}_{t}\right\|^{2}-\left\|\boldsymbol{y}-\boldsymbol{x}_{t}\right\|^{2})+\langle \boldsymbol{y}-\boldsymbol{x}_{t}, \widetilde{\nabla} F(\boldsymbol{x}_t) \rangle\\
    &-\frac{1}{2\mu_{t}}\left\|\nabla F(\boldsymbol{x}_{t})-\widetilde{\nabla} F(\boldsymbol{x}_t) \right\|^{2}-\frac{\mu_{t}+L_{\gamma}}{2}\left\|\boldsymbol{x}_{t+1}-\boldsymbol{x}_{t}\right\|^{2}\\
    \ge &\frac{1}{2\eta_{t}}(\left\|\boldsymbol{y}-\boldsymbol{x}_{t+1}\right\|^{2}-\left\|\boldsymbol{y}-\boldsymbol{x}_{t}\right\|^{2})+\langle \boldsymbol{y}-\boldsymbol{x}_{t}, \widetilde{\nabla} F(\boldsymbol{x}_t) \rangle\\
    &-\frac{1}{2\mu_{t}}\left\|\nabla F(\boldsymbol{x}_{t})-\widetilde{\nabla} F(\boldsymbol{x}_t) \right\|^{2}+(\frac{1}{2\eta_{t}}-\frac{\mu_{t}+L_{\gamma}}{2})\left\|\boldsymbol{x}_{t+1}-\boldsymbol{x}_{t}\right\|^{2}.
    \end{aligned}
\end{equation} 

From the \cref{prop:1}, $\mathbb{E}(\widetilde{\nabla}F(\boldsymbol{x}_{t})|\boldsymbol{x}_{t})=\nabla F(\boldsymbol{x}_{t})$ and we also have
\begin{equation}\label{equ:47}
    \begin{aligned}
      &\mathbb{E}\left(F(\boldsymbol{x}_{t+1})-F(\boldsymbol{x}_{t})\right)\\
      \ge 
      &\mathbb{E}(\frac{1}{2\eta_{t}}(\left\|\boldsymbol{y}-\boldsymbol{x}_{t+1}\right\|^{2}-\left\|\boldsymbol{y}-\boldsymbol{x}_{t}\right\|^{2})+\mathbb{E}(\langle \boldsymbol{y}-\boldsymbol{x}_{t}, \widetilde{\nabla} F(\boldsymbol{x}_t) \rangle|\boldsymbol{x}_{t})\\
      &-\frac{1}{2\mu_{t}}\left\|\nabla F(\boldsymbol{x}_{t})-\widetilde{\nabla} F(\boldsymbol{x}_t) \right\|^{2}+(\frac{1}{2\eta_{t}}-\frac{\mu_{t}+L_{\gamma}}{2})\left\|\boldsymbol{x}_{t+1}-\boldsymbol{x}_{t}\right\|^{2})\\
      =&\mathbb{E}(\frac{1}{2\eta_{t}}(\left\|\boldsymbol{y}-\boldsymbol{x}_{t+1}\right\|^{2}-\left\|\boldsymbol{y}-\boldsymbol{x}_{t}\right\|^{2})+\langle \boldsymbol{y}-\boldsymbol{x}_{t}, \nabla F(\boldsymbol{x}_t) \rangle\\
      &-\frac{1}{2\mu_{t}}\left\|\nabla F(\boldsymbol{x}_{t})-\widetilde{\nabla} F(\boldsymbol{x}_t) \right\|^{2}+(\frac{1}{2\eta_{t}}-\frac{\mu_{t}+L_{\gamma}}{2})\left\|\boldsymbol{x}_{t+1}-\boldsymbol{x}_{t}\right\|^{2})\\
      \ge&\mathbb{E}(\frac{1}{2\eta_{t}}(\left\|\boldsymbol{y}-\boldsymbol{x}_{t+1}\right\|^{2}-\left\|\boldsymbol{y}-\boldsymbol{x}_{t}\right\|^{2})+(1-e^{-\gamma})f(\boldsymbol{y})-f(\boldsymbol{x}_{t})\\
      &-\frac{1}{2\mu_{t}}\left\|\nabla F(\boldsymbol{x}_{t})-\widetilde{\nabla} F(\boldsymbol{x}_t) \right\|^{2}+(\frac{1}{2\eta_{t}}-\frac{\mu_{t}+L_{\gamma}}{2})\left\|\boldsymbol{x}_{t+1}-\boldsymbol{x}_{t}\right\|^{2}),
    \end{aligned}
\end{equation} where the final inequality from the definition of $F$.
\end{proof}
Next, we prove the \cref{thm:4}.
\begin{proof}
From the Lemma~\ref{lemma:10}, if we set $\boldsymbol{y}=\boldsymbol{x}^{*}=\arg\max_{\boldsymbol{x}\in\mathcal{C}}f(\boldsymbol{x})$, we have 
 \begin{equation}\label{equ:48}
     \begin{aligned}
        &\sum_{t=1}^{T-1}\mathbb{E}(F(\boldsymbol{x}_{t+1})-F(\boldsymbol{x}_{t})+f(\boldsymbol{x}_{t})-(1-e^{-\gamma})f(\boldsymbol{x}^{*}))\\
        \ge&\sum_{t=1}^{T-1}\mathbb{E}(\frac{1}{2\eta_{t}}(\left\|\boldsymbol{x}^{*}-\boldsymbol{x}_{t+1}\right\|^{2}-\left\|\boldsymbol{x}^{*}-\boldsymbol{x}_{t}\right\|^{2})-\sum_{t=1}^{T-1}\frac{1}{2\mu_{t}}\left\|\nabla F(\boldsymbol{x}_{t})-\widetilde{\nabla} F(\boldsymbol{x}_t) \right\|^{2})\\
        \ge&-\sigma_{\gamma}^2\sum_{t=1}^{T}\frac{1}{2\mu_{t}}+\sum_{t=2}^{T-1}\mathbb{E}(\left\|\boldsymbol{x}^{*}-\boldsymbol{x}_{t}\right\|^{2})(\frac{1}{2\eta_{t-1}}-\frac{1}{2\eta_{t}})+\mathbb{E}(\frac{\left\|\boldsymbol{x}^{*}-\boldsymbol{x}_{T}\right\|^{2}}{2\eta_{T-1}}-\frac{\left\|\boldsymbol{x}^{*}-\boldsymbol{x}_{1}\right\|^{2}}{2\eta_{1}})\\
        \ge& -\frac{\mathrm{diam}^{2}(\mathcal{C})}{2\eta_{T-1}}-\sigma_{\gamma}^2\sum_{t=1}^{T}\frac{1}{\mu_{t}}\\
        \ge& -(\mathrm{diam}^{2}(\mathcal{C})L_{\gamma}/2+3\sigma_{\gamma}\mathrm{diam}(\mathcal{C})\sqrt{T}/2)
        \end{aligned}
 \end{equation} 
 where the first inequality follows from $\eta_{t}=\frac{1}{\mu_{t}+L_{r}}$ if we set $\mu_{t}=\frac{\sigma_{\gamma}\sqrt{t}}{\mathrm{diam}(\mathcal{C})}$ in \cref{lemma:10}; the second inequality from the \cref{prop:1} and the Abel's inequality; the third inequality from the definition of $\mathrm{diam}(\mathcal{C})$.

 Finally, we have:
 \begin{equation}\label{equ:49}
     \begin{aligned}
        \mathbb{E}(\sum_{t=1}^{T-1}f( \boldsymbol{x}_{t})+F( \boldsymbol{x}_{T}))\ge(1-e^{-\gamma})(T-1)f( \boldsymbol{x}^{*})-(\mathrm{diam}^{2}(\mathcal{C})L_{\gamma}/2+3\sigma_{\gamma}\mathrm{diam}(\mathcal{C})\sqrt{T}/2)
     \end{aligned}
 \end{equation}
According to \cref{thm:2},
\begin{equation}\label{equ:50}
     \begin{aligned}
        \mathbb{E}(\sum_{t=1}^{T-1}f(\boldsymbol{x}_{t})+(1+\log(\tau))(f(\boldsymbol{x}_{T})+c)\ge(1-e^{-\gamma})(T-1)f(\boldsymbol{x}^{*})-(\mathrm{diam}^{2}(\mathcal{C})L_{\gamma}/2+3\sigma_{\gamma}\mathrm{diam}(\mathcal{C})\sqrt{T}/2)
     \end{aligned}
 \end{equation} where $\tau=\max(\frac{1}{\gamma},\frac{r^{2}(\mathcal{X})L}{c})$.
 
In \cref{alg:1}, we set
\begin{equation}
\triangle_{t}=\left\{
    \begin{aligned}
      &1 & t\neq T\\
      &1+\log(\tau) & t=T
    \end{aligned}\right.
\end{equation} and $\triangle=\sum_{t=1}^{T}\triangle_{t}=T+\log(\tau)$.
\begin{equation}\label{equ:52}
     \begin{aligned}
        \mathbb{E}(\sum_{t=1}^{T}\frac{\triangle_{t}}{\triangle}f(\boldsymbol{x}_{t}))\ge(1-e^{-\gamma}-\frac{1+\ln(\tau)}{T+\ln(\tau)})f(\boldsymbol{x}^{*})-\frac{(\mathrm{diam}^{2}(\mathcal{C})L_{\gamma}/2+3\sigma_{\gamma}\mathrm{diam}(\mathcal{C})\sqrt{T})/2+(1+\log(\tau))c}{T+\ln(\tau)}
     \end{aligned}
 \end{equation}
 
 Therefore, when $c=O(1)$, we have
\begin{align*}
        \mathbb{E}(\sum_{t=1}^{T}\frac{\triangle_{t}}{\triangle}f(\boldsymbol{x}_{t}))\ge(1-e^{-\gamma}-O(\frac{1}{T}))f(\boldsymbol{x}^{*})-O(\frac{1}{\sqrt{T}})
     \end{align*}
\end{proof}
\section{Proof of Theorem~\ref{thm:5}}\label{Appendix:D}
\begin{proof}
We denote $\widetilde{\nabla} F_{t}(\boldsymbol{x}_t)=\frac{1-e^{-\gamma}}{\gamma}\widetilde{\nabla}f(z_{t}*\boldsymbol{x}_{t})$ and $\boldsymbol{x}^{*}=\arg\max_{\boldsymbol{x}\in\mathcal{C}}\sum_{t=1}^{T}f_{t}(\boldsymbol{x})$. From the projection, we know that
\begin{equation}\label{equ:b43}
    \begin{aligned}
       \left\|\boldsymbol{x}_{t+1}-\boldsymbol{x}^{*}\right\|&\le\left\|\boldsymbol{y}_{t+1}-\boldsymbol{x}^{*}\right\|=\left\|\boldsymbol{x}_{t}+\eta\sum_{s\in\mathcal{F}_{t}}\widetilde{\nabla} F_{s}(\boldsymbol{x}_{s})-\boldsymbol{x}^{*}\right\|,
    \end{aligned}
\end{equation}where the first inequality from the projection; and the first equality from $\boldsymbol{y}_{t+1}=\boldsymbol{x}_{t}+\eta\sum_{s\in\mathcal{F}_{t}}\frac{1-e^{-\gamma}}{\gamma}\widetilde{\nabla}f_{s}(z_{s}*\boldsymbol{x}_{s})$ in \cref{alg:2}.

We order the set  $\mathcal{F}_{t}=\{s_1,\dots,s_{|\mathcal{F}_{t}|}\}$, where $s_1<s_2<\dots<s_{|\mathcal{F}_{t}|}$ and $|\mathcal{F}_{t}|=\#\{u\in[T]: u+d_{u}-1=t\}$. Moreover, we also denote $\mathcal{F}_{t,m}=\{u\in\mathcal{F}_{t}\ and\ u<m\}$, $\boldsymbol{x}_{t+1,m}=x_{t}+\eta\sum_{s\in\mathcal{F}_{t,m}}\widetilde{\nabla} F_{s}(\boldsymbol{x}_{s})$ and $s_{|\F_{t}|+1}=t+1$. Therefore,
\begin{equation}\label{equ:b44}
    \begin{aligned}
       \left\|\boldsymbol{x}_{t+1,s_{k+1}}-\boldsymbol{x}^{*}\right\|^{2}&=\left\|\boldsymbol{x}_{t+1,s_{k}}+\eta\widetilde{\nabla}F_{s_{k}}(\boldsymbol{x}_{s_{k}})-\boldsymbol{x}^{*}\right\|^{2}\\
       &=\left\|\boldsymbol{x}_{t+1,s_{k}}-\boldsymbol{x}^{*}\right\|^{2}+2\eta\langle \boldsymbol{x}_{t+1,s_{k}}-\boldsymbol{x}^{*},\widetilde{\nabla}F_{s_{k}}(\boldsymbol{x}_{s_{k}})\rangle+\eta^2\left\|\widetilde{\nabla}F_{s_{k}}(\boldsymbol{x}_{s_{k}})\right\|^{2}
       \end{aligned}
\end{equation}
According to \cref{equ:b44}, we have

\begin{equation}\label{equ:b45}
    \begin{aligned}
       &\left\|\boldsymbol{y}_{t+1}-\boldsymbol{x}^{*}\right\|^{2}-\left\|\boldsymbol{x}_{t}-\boldsymbol{x}^{*}\right\|^{2}\\
       = & \sum_{k=1}^{|\mathcal{F}_{t}|}(\left\|\boldsymbol{x}_{t+1,s_{k+1}}-\boldsymbol{x}^{*}\right\|^{2}-\left\|\boldsymbol{x}_{t+1,s_{k}}-\boldsymbol{x}^{*}\right\|^{2})\\
       = & 2\eta\sum_{s\in\mathcal{F}_{t}}\langle \boldsymbol{x}_{t+1,s}-\boldsymbol{x}^{*},\widetilde{\nabla}F_{s}(\boldsymbol{x}_{s})\rangle+\eta^2\sum_{s\in\mathcal{F}_{t}}\left\|\widetilde{\nabla}F_{s}(\boldsymbol{x}_{s})\right\|^{2}\\
       = & 2\eta\sum_{s\in\mathcal{F}_{t}}\langle \boldsymbol{x}_{t+1,s}-\boldsymbol{x}_{s},\widetilde{\nabla}F_{s}(\boldsymbol{x}_{s})\rangle+2\eta\sum_{s\in\mathcal{F}_{t}}\langle \boldsymbol{x}_{s}-\boldsymbol{x}^{*},\widetilde{\nabla}F_{s}(\boldsymbol{x}_{s})\rangle+\eta^2\sum_{s\in\mathcal{F}_{t}}\left\|\widetilde{\nabla}F_{s}(\boldsymbol{x}_{s})\right\|^{2}
       \end{aligned}
\end{equation} where the first equality follows from setting $\boldsymbol{x}_{t+1,|\mathcal{F}_{t}|+1}=\boldsymbol{y}_{t+1}$; the second from \cref{equ:b44}. 

\noindent Therefore,
\begin{equation}\label{equ:b46}
    \begin{aligned}
       &\mathbb{E}(\left\|\boldsymbol{y}_{t+1}-\boldsymbol{x}^{*}\right\|^{2}-\left\|\boldsymbol{x}_{t}-\boldsymbol{x}^{*}\right\|^{2})\\
       = & 2\eta\mathbb{E}\left(\sum_{s\in\mathcal{F}_{t}}\langle \boldsymbol{x}_{t+1,s}-\boldsymbol{x}_{s},\widetilde{\nabla}F_{s}(\boldsymbol{x}_{s})\rangle+\sum_{s\in\mathcal{F}_{t}}\langle \boldsymbol{x}_{s}-\boldsymbol{x}^{*},\mathbb{E}(\widetilde{\nabla}F_{s}(\boldsymbol{x}_{s})|\boldsymbol{x}_{s})\rangle\right)+\eta^2\mathbb{E}(\sum_{s\in\mathcal{F}_{t}}\left\|\widetilde{\nabla}F_{s}(\boldsymbol{x}_{s})\right\|^{2})\\
       = & 2\eta\mathbb{E}\left(\sum_{s\in\mathcal{F}_{t}}\langle \boldsymbol{x}_{t+1,s}-\boldsymbol{x}_{s},\widetilde{\nabla}F_{s}(\boldsymbol{x}_{s})\rangle+\sum_{s\in\mathcal{F}_{t}}\langle \boldsymbol{x}_{s}-\boldsymbol{x}^{*},\nabla F_{s}(\boldsymbol{x}_{s})\rangle\right)+\eta^2\mathbb{E}(\sum_{s\in\mathcal{F}_{t}}\left\|\widetilde{\nabla}F_{s}(\boldsymbol{x}_{s})\right\|^{2})\\
       \le & 2\eta\mathbb{E}\left(\sum_{s\in\mathcal{F}_{t}}\langle \boldsymbol{x}_{t+1,s}-\boldsymbol{x}_{s},\widetilde{\nabla}F_{s}(\boldsymbol{x}_{s})\rangle+\sum_{s\in\mathcal{F}_{t}}\left(f_{s}(\boldsymbol{x}_{s})-(1-e^{-\gamma})f_{s}(\boldsymbol{x}^{*})\right)\right)+\eta^2\mathbb{E}(\sum_{s\in\mathcal{F}_{t}}\left\|\widetilde{\nabla}F_{s}(\boldsymbol{x}_{s})\right\|^{2})
      \end{aligned}
\end{equation}where the first inequality from the definition of non-oblivious function $F$.

\noindent Therefore, we have:
\begin{equation}\label{equ:b47}
    \begin{aligned}
       &2\eta\mathbb{E}\left((1-e^{-\gamma})\sum_{t=1}^{T}f_t(\boldsymbol{x}^{*})-\sum_{t=1}^{T}f_{t}(\boldsymbol{x}_{t})\right)\\
       = & 2\eta\mathbb{E}\left(\sum_{t=1}^{T}\sum_{s\in\mathcal{F}_{t}}\left((1-e^{-\gamma})f_s(\boldsymbol{x}^{*})-f_{s}(\boldsymbol{x}_{s})\right)\right)\\
       \le & \sum_{t=1}^{T}\left(\mathbb{E}(\left\|\boldsymbol{x}_{t}-\boldsymbol{x}^{*}\right\|^{2}-\left\|\boldsymbol{y}_{t+1}-\boldsymbol{x}^{*}\right\|^{2})+2\eta\mathbb{E}(\sum_{s\in\mathcal{F}_{t}}\langle \boldsymbol{x}_{t+1,s}-\boldsymbol{x}_{s},\widetilde{\nabla}F_{s}(\boldsymbol{x}_{s})\rangle)+\eta^2\mathbb{E}(\sum_{s\in\mathcal{F}_{t}}\left\|\widetilde{\nabla}F_{s}(\boldsymbol{x}_{s})\right\|^{2})\right)\\
       \le & \sum_{t=1}^{T}\left(\mathbb{E}(\left\|\boldsymbol{x}_{t}-\boldsymbol{x}^{*}\right\|^{2}-\left\|\boldsymbol{x}_{t+1}-\boldsymbol{x}^{*}\right\|^{2})+2\eta\mathbb{E}(\sum_{s\in\mathcal{F}_{t}}\langle \boldsymbol{x}_{t+1,s}-\boldsymbol{x}_{s},\widetilde{\nabla}F_{s}(\boldsymbol{x}_{s})\rangle)+\eta^2\mathbb{E}(\sum_{s\in\mathcal{F}_{t}}\left\|\widetilde{\nabla}F_{s}(\boldsymbol{x}_{s})\right\|^{2})\right)\\
       \le & \mathrm{diam}^{2}(\mathcal{C})+\sum_{t=1}^{T}\left(2\eta\mathbb{E}(\sum_{s\in\mathcal{F}_{t}}\langle \boldsymbol{x}_{t+1,s}-\boldsymbol{x}_{s},\widetilde{\nabla}F_{s}(\boldsymbol{x}_{s})\rangle)+\eta^2\mathbb{E}(\sum_{s\in\mathcal{F}_{t}}\left\|\widetilde{\nabla}F_{s}(\boldsymbol{x}_{s})\right\|^{2})\right)\\
       \le & \mathrm{diam}^{2}(\mathcal{C})+\eta^{2}\max_{t\in[T]}(\left\|\widetilde{\nabla}F_{t}(\boldsymbol{x}_{t})\right\|^{2})\sum_{t=1}^{T}|\mathcal{F}_{t}|+2\eta\sum_{t=1}^{T}\left(\mathbb{E}\left(\sum_{s\in\mathcal{F}_{t}}\langle \boldsymbol{x}_{t+1,s}-\boldsymbol{x}_{s},\widetilde{\nabla}F_{s}(\boldsymbol{x}_{s})\rangle\right)\right)
    \end{aligned}
\end{equation}
\noindent For the final part in \cref{equ:b47}, 
\begin{equation}\label{equ:b48}
    \begin{aligned}
       &\langle \boldsymbol{x}_{t+1,s}-\boldsymbol{x}_{s},\widetilde{\nabla}F_{s}(\boldsymbol{x}_{s})\rangle\\
       \le & \left\|\widetilde{\nabla}F_{s}(\boldsymbol{x}_{s})\right\|\left\|\boldsymbol{x}_{t+1,s}-\boldsymbol{x}_{s}\right\|\\
       \le & \left\|\widetilde{\nabla}F_{s}(\boldsymbol{x}_{s})\right\|(\left\|\boldsymbol{x}_{t+1,s}-\boldsymbol{x}_{t}\right\|+\left\|\boldsymbol{x}_{t}-\boldsymbol{x}_{s}\right\|)\\
       \le & \left\|\widetilde{\nabla}F_{s}(\boldsymbol{x}_{s})\right\|(\left\|\boldsymbol{x}_{t+1,s}-\boldsymbol{x}_{t}\right\|+\sum_{m=s}^{t-1}\left\|\boldsymbol{y}_{m+1}-\boldsymbol{x}_{m}\right\|)\\
       \le & \max_{t\in[T]}(\left\|\widetilde{\nabla}F_{t}(\boldsymbol{x}_{t})\right\|^{2})\eta(|\mathcal{F}_{t,s}|+\sum_{m=s}^{t-1}|\mathcal{F}_{m}|)
    \end{aligned}
\end{equation} where the third inequality follows from $\left\|\boldsymbol{x}_{t}-\boldsymbol{x}_{s}\right\|\le\left\|\boldsymbol{y}_{t}-\boldsymbol{x}_{s}\right\|\le\left\|\boldsymbol{y}_{t}-\boldsymbol{x}_{t-1}\right\|+\left\|\boldsymbol{x}_{t-1}-\boldsymbol{x}_{s}\right\|\le\dots\le\sum_{m=s}^{t-1}\left\|\boldsymbol{y}_{m+1}-\boldsymbol{x}_{m}\right\|$.

\noindent Finally, we have
\begin{equation}\label{equ:b49}
    \begin{aligned}
       &\mathbb{E}\left((1-e^{-\gamma})\sum_{t=1}^{T}f_t(\boldsymbol{x}^{*})-\sum_{t=1}^{T}f_{t}(\boldsymbol{x}_{t})\right)\\
       \le & \frac{\mathrm{diam}^{2}(\mathcal{C})}{2\eta}+\max_{t\in[T]}(\left\|\widetilde{\nabla}F_{t}(\boldsymbol{x}_{t})\right\|^{2})(\frac{\eta}{2}\sum_{t=1}^{T}|\mathcal{F}_{t}|+\eta\sum_{t=1}^{T}\sum_{s\in\mathcal{F}_{t}}(|\mathcal{F}_{t,s}|+\sum_{m=s}^{t-1}|\mathcal{F}_{m}|))
    \end{aligned}
\end{equation}

Firstly, $\sum_{t=1}^{T}|\mathcal{F}_{t}|\le T$. Next, we investigate the $|\mathcal{F}_{t,s}|+\sum_{m=s}^{t-1}|\mathcal{F}_{m}|$ when $s\in\mathcal{F}_{t}$. 

When $s\in\mathcal{F}_{t}$, i.e., $s+d_{s}-1=t$, for any $q\in(\mathcal{F}_{t,s}\bigcup(\cup_{m=s}^{t-1}\mathcal{F}_{m}))$, if $s+1\le q\le t-1$, the feedback of round $q$ must be delivered before the round $t$, namely, $q+d_{q}-1\le t-1$. Moreover, if $q\le s-1$, the feedback of round $q$ could be delivered between round $s$ and round $t$. Therefore, 
\begin{equation}\label{equ:b50}
    \begin{aligned}
       |\mathcal{F}_{t,s}|+\sum_{m=s}^{t-1}|\mathcal{F}_{m}|= & |\{i| s+1\le i\le t-1,\ and \ i+d_{i}-1\le t-1 \}|\\&+|\{i| 1\le i\le s-1,\ and \ s\le i+d_{i}-1\le t \}|.
    \end{aligned}
\end{equation} 

\noindent When $s\in\mathcal{F}_{t}$, we can derive that $|\{i| s+1\le i\le t-1,\ and\ i+d_{i}-1\le t-1 \}|\le t-s-1\le d_{s}$. Thus, $\sum_{t=1}^{T}\sum_{s\in\mathcal{F}_{t}}|\{i| s+1\le i\le t-1,\ and\ i+d_{i}-1\le t-1 \}|\le\sum_{i=1}^{T}d_{i}=D$.

Next, for each $b\in\{i| 1\le i\le s-1,\ and\ s\le i+d_{i}-1\le t \}$, we have $b\le s\le b+d_{b}-1\le s+d_{s}-1$ so that $\sum_{t=1}^{T}\sum_{s\in \mathcal{F}_{t}}|\{i| 1\le i\le s-1,\ and\ s \le i+d_{i}-1\le t \}|\le\sum_{i=1}^{T}|\{s|\  i<s\le i+d_{i}-1\le s+d_{s}-1\}|\le\sum_{i=1}^{T} d_{i}$. 

Hence,
\begin{equation}\label{equ:51}
    \begin{aligned}
       &\mathbb{E}((1-e^{-\gamma})\sum_{t=1}^{T}f_t(\boldsymbol{x}^{*})-\sum_{t=1}^{T}f_{t}(\boldsymbol{x}_{t}))\\
       \le & \frac{\mathrm{diam}^{2}(\mathcal{C})}{2\eta}+\max_{t\in[T]}(\left\|\widetilde{\nabla}F_{t}(\boldsymbol{x}_{t})\right\|^{2})(\frac{\eta}{2}T+2\eta D)\\
       \le & \frac{\mathrm{diam}^{2}(\mathcal{C})}{2\eta}+\max_{t\in[T]}(\left\|\widetilde{\nabla}F_{t}(\boldsymbol{x}_{t})\right\|^{2})3\eta D\\
       \le & O(\sqrt{D})
    \end{aligned}
\end{equation} where the final equality from $\eta=\frac{\mathrm{diam}(\mathcal{C})}{\max_{t\in[T]}(\left\|\widetilde{\nabla}F_{t}(\boldsymbol{x}_{t})\right\|)\sqrt{D}}$.
\end{proof}

\end{document}